\tikzstyle{block} = [draw, fill=white, circle, text centered,inner sep=0.25cm]
\tikzstyle{block-s} = [draw, fill=white, circle, double, double distance=1pt, text centered,inner sep=0.25cm]
\tikzstyle{roundnode} = [circle, draw=black!60,, fill=white, thick, inner sep = 1pt]
\tikzstyle{dashednode} = [circle, draw=black!60, dashed, fill=white, thick, inner sep=1pt]
\tikzset{edge/.style = {->,> = latex',-{Latex[width=1.5mm]}}}
\tikzset{edge2/.style={Latex-Latex,dashed}}
\theoremstyle{plain}
\newtheorem{theorem}{Theorem}[section]
\newtheorem{lemma}[theorem]{Lemma}
\newtheorem{corollary}[theorem]{Corollary}
\theoremstyle{definition}
\newtheorem{definition}[theorem]{Definition}
\newtheorem{remark}[theorem]{Remark}
\newtheorem{example}{Example}
\newcommand{\eqdef}{\coloneqq}
\newcommand{\Gr}{\mathcal{G}}
\newcommand{\M}{\mathcal{M}}
\newcommand{\mysum}[2]{\sum\limits_{#1} ^{ #2} }
\newcommand{\independent}{\perp\!\!\!\perp}
\newcommand{\notindependent}{\not\!\perp\!\!\!\perp}
\newcommand{\Pa}[2]{\textit{Pa}_{#2}(#1)}
\newcommand{\Anc}[2]{\textit{Anc}_{#2}(#1)}
\newcommand{\Vb}{\mathbf{V}}
\newcommand{\Xb}{\mathbf{X}}
\newcommand{\Yb}{\mathbf{Y}}
\newcommand{\Wb}{\mathbf{W}}
\newcommand{\Hb}{\mathbf{H}}
\newcommand{\Zb}{\mathbf{Z}}
\newcommand{\Ub}{\mathbf{U}}
\newcommand{\Cb}{\mathbf{C}}
\newcommand{\Fb}{\mathbf{F}}
\newcommand{\Db}{\mathbf{D}}
\newcommand{\Ab}{\mathbf{A}}
\newcommand{\Tb}{\mathbf{T}}
\newcommand{\Eb}{\mathbf{E}}
\newcommand{\pro}{P}
\newcommand{\interSbp}[1][\Yb]{\prs_{\Xb}(#1)}
\newcommand{\s}[0]{\textsc{s}}
\newcommand{\cc}[0]{\textsc{c}}
\newcommand{\Qs}[0]{Q^\s}
\newcommand{\cco}[0]{\cc-component\xspace}
\newcommand{\ccos}[0]{\cc-components\xspace}
\newcommand{\sco}[0]{\s-component\xspace}
\newcommand{\scos}[0]{\s-components\xspace}
\newcommand{\Xs}[0]{\Xb_{\textsc{as}}}
\newcommand{\Xns}[0]{\Xb_{\textsc{ns}}}
\newcommand{\Ys}[0]{\Yb_{\textsc{as}}}
\newcommand{\Yns}[0]{\Yb_{\textsc{ns}}}
\newcommand{\Vs}[0]{\Vb_{\textsc{as}}}
\newcommand{\Vns}[0]{\Vb_{\textsc{ns}}}
\newcommand{\Grs}{\mathcal{G}^{\s}}
\newcommand{\prs}{P^{\s}}
\newcommand{\redcross}{\textcolor{red}{\scalebox{1}{$\times$}}}
\title{Causal Effect Identification in a Sub-Population with Latent Variables}
\author{%
    Amir Mohammad Abouei \textsuperscript{\rm 1}, Ehsan Mokhtarian \textsuperscript{\rm 1}, Negar Kiyavash \textsuperscript{\rm 2}, Matthias Grossglauser \textsuperscript{\rm 1}  \\
    \AND
    \\
    \textsuperscript{\rm 1}School of Computer and Communication Sciences, EPFL \\
    \textsuperscript{\rm 2}College of Management of Technology, EPFL\\
    \{amir.abouei, ehsan.mokhtarian, negar.kiyavash, matthias.grossglauser\}@epfl.ch \\
  % David S.~Hippocampus\thanks{Use footnote for providing further information
  %   about author (webpage, alternative address)---\emph{not} for acknowledging
  %   funding agencies.} \\
  % Department of Computer Science\\
  % Cranberry-Lemon University\\
  % Pittsburgh, PA 15213 \\
  % \texttt{hippo@cs.cranberry-lemon.edu} \\
  % examples of more authors
  % \And
  % Coauthor \\
  % Affiliation \\
  % Address \\
  % \texttt{email} \\
  % \AND
  % Coauthor \\
  % Affiliation \\
  % Address \\
  % \texttt{email} \\
  % \And
  % Coauthor \\
  % Affiliation \\
  % Address \\
  % \texttt{email} \\
  % \And
  % Coauthor \\
  % Affiliation \\
  % Address \\
  % \texttt{email} \\
}
\begin{document}
\maketitle
\begin{abstract}
    The \s-ID problem seeks to compute a causal effect in a specific sub-population from the observational data pertaining to the same sub-population \cite{abouei2023sid}. This problem has been addressed when all the variables in the system are observable. In this paper, we consider an extension of the \s-ID problem that allows for the presence of latent variables. To tackle the challenges induced by the presence of latent variables in a sub-population, we first extend the classical relevant graphical definitions, such as \cc-components and Hedges, initially defined for the so-called ID problem \cite{pearl1995causal, tian2002general}, to their new counterparts. Subsequently, we propose a sound algorithm for the \s-ID problem with latent variables.
\end{abstract}

\section{Introduction} \label{sec: intro}

    Causal inference, i.e., understanding the effect of an intervention in a stochastic system, is a key focus of research in statistics and machine learning \cite{rubin1974estimating, pearl2000models, pearl2009causality, spirtes2000causation}.
    Scientists, policymakers, business leaders, and healthcare professionals must understand causal relationships to move beyond correlations and make informed, evidence-based decisions.
    To perform causal inference tasks, it is crucial to differentiate between two types of data: observational and interventional \cite{pearl2018book}.

    \begin{figure}[h!]
        \centering
        \begin{subfigure}[b]{0.48\textwidth}
            \centering
            \includegraphics[width=\textwidth]{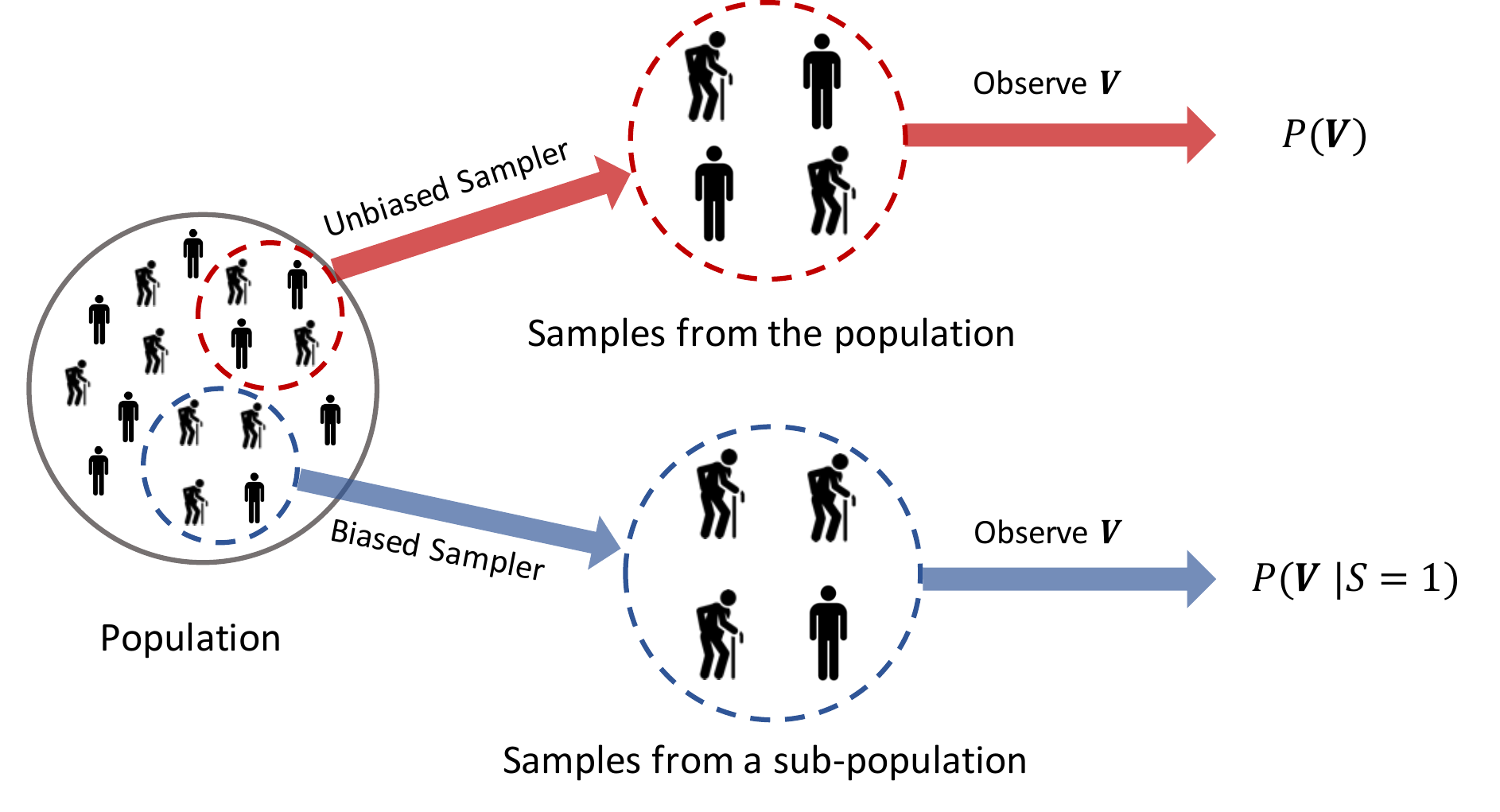}
            \caption{Observational data.}
            \label{fig: intro 1}
        \end{subfigure}
        \begin{subfigure}[b]{0.48\textwidth}
            \centering
            \includegraphics[width=\textwidth]{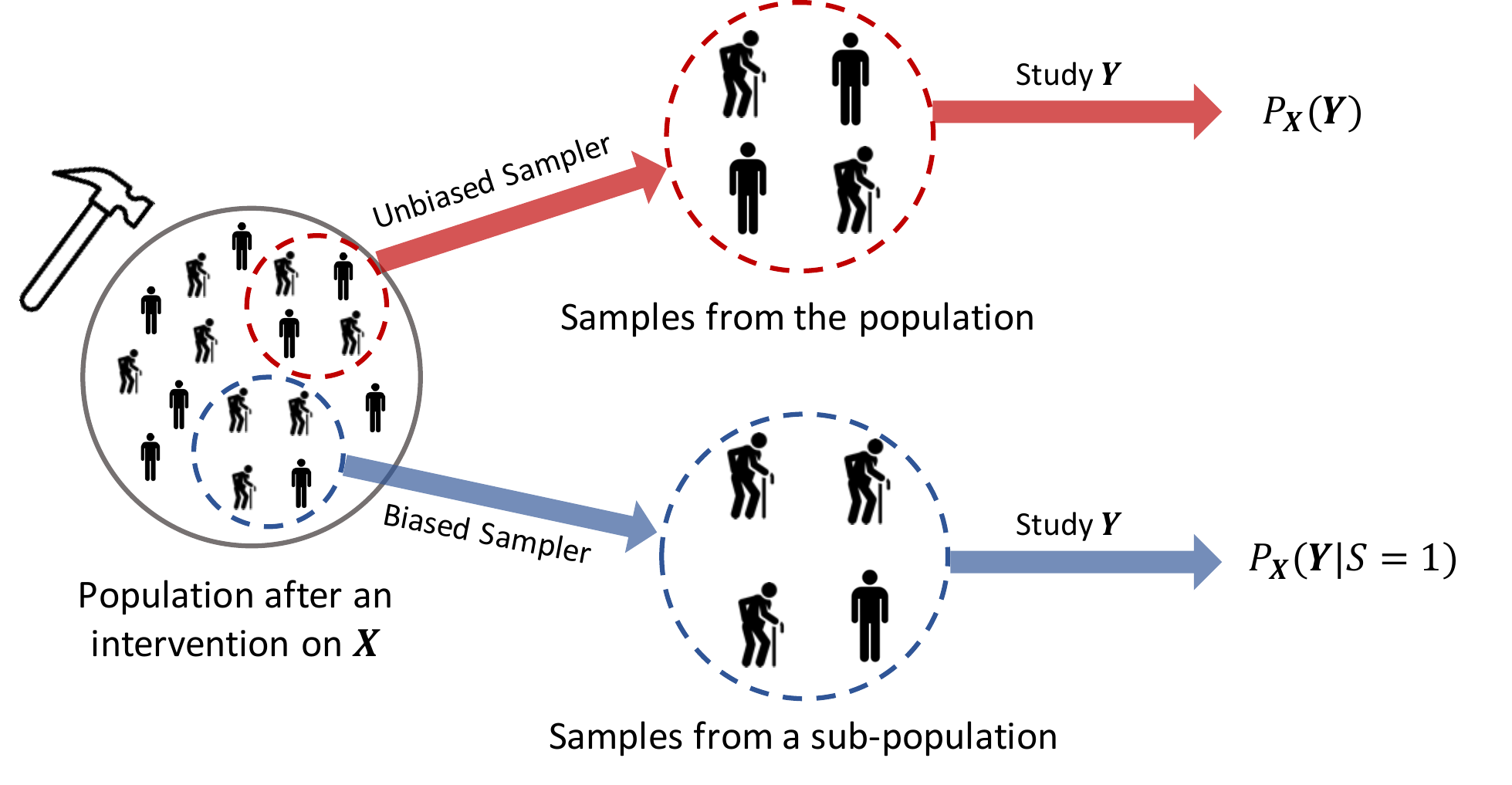}
            \caption{Interventional data.}
            \label{fig: intro 2}
        \end{subfigure}
    
        \caption{A population consists of a sample space for the study of the causal effect of an intervention. While the unbiased sampler draws samples uniformly at random, the biased sampler selects samples based on certain criteria, forming a sub-population.}
        \label{fig: intro}
    \end{figure}

    \noindent \textbf{Observational Data.}
    Figure \ref{fig: intro} illustrates a \emph{population} that pertains to the entire \emph{sample space} for a study of the causal effect of some intervention.
    A \emph{sampler} draws samples from the population.
    The sampler is unbiased if it draws samples at random such that each individual in the population has an equal chance of being selected.
    As a result, the obtained sample is representative of the entire population.
    In contrast, a biased sampler selects samples based on certain criteria forming a \emph{sub-population}.
    For each extracted sample, we collect data from a set of observed features denoted by $\Vb$.
    As depicted in Figure \ref{fig: intro 1}, when the sampler is unbiased, the observational data comes from the joint distribution $\pro(\Vb)$.
    For a biased sampler, the observations can be modeled as drawn from a conditional distribution $\pro(\Vb \vert S=1)$, where $S=1$ indicates that the sample belongs to a sub-population.

    \noindent  \textbf{Interventional Data.}
    An \emph{intervention} on a subset $\Xb \subseteq \Vb$ assigns specific values to the variables in the subset.
    If performing an intervention results in changes in other variables of interest, it suggests a causal relationship, apart from mere correlation.
    Interventions are often represented with the $do()$ operator, highlighting the deliberate change of a variable \cite{pearl2000models, pearl2009causality}.
    For the sake of simplicity in notation, we use $\pro_{\Xb}(\cdot)$ to denote the distribution of the variables after an intervention on $\Xb$.
    Figure \ref{fig: intro 2} depicts the population after an intervention on subset $\Xb \subseteq \Vb$, where we seek to understand how changes in $\Xb$ would affect a set of outcome variables $\Yb \subseteq \Vb \setminus \Xb$.
    To analyze this causal effect across the entire population, we must compute the distribution $\pro_{\Xb}(\Yb)$.
    On the other hand, if we are merely interested in the results of the intervention on a specific sub-population, it suffices to compute the conditional distribution $\pro_{\Xb}(\Yb \vert S=1)$ pertaining to the sub-population.

    \noindent \textbf{Causal Effect Identification.}
    Performing interventions in populations can be challenging due to high costs, ethical concerns, or sheer impracticability. 
    Instead, researchers often use observational methods, leveraging the environment's \emph{causal graph}, a graphical representation that depicts the causal relationships between variables \cite{pearl2009causality, spirtes2000causation}, and observational data to estimate interventional distributions of interest.
    Various \textit{causal effect identification} problems in the causal inference literature are concerned with this issue.

    \noindent \textbf{Related Work.}
    Table \ref{table: intro} lists four causal effect identification problems.
    The most renowned among them is the ID problem, introduced by \cite{pearl1995causal}, which seeks to determine a causal effect for the entire population using the observational distribution pertaining to the entire population.
    Specifically, it aims to compute $\pro_{\Xb}(\Yb)$ from $\pro(\Vb)$.
    The c-ID problem, introduced by \cite{shpitser2012identification}, extends the ID problem to handle conditional causal effects, i.e., compute the conditional causal effect $\pro_{\Xb}(\Yb \vert \Zb)$ from the observational distribution $\pro(\Vb)$ pertaining to the entire population.
    \cite{Bareinboim_Tian_Pearl_2014} introduced the \s-Recoverability problem that focuses on inferring the causal effect of $\Xb$ on $\Yb$ for the entire population using data drawn solely from a specific sub-population.
    \cite{abouei2023sid} introduced \s-ID, which asks whether a causal effect in a sub-population such as $\pro_{\Xb}(\Yb \vert S=1)$ can be uniquely computed from the observational distribution pertaining to that sub-population, i.e., $\pro(\Vb \vert S=1)$.
    Another direction of research considers learning a causal effect from multiple datasets \cite{Lee2019GeneralIW, kivva2022revisiting, correa2021nested, kivva2023identifiability, Tikka_2021, lee2024a}.
    In all aforementioned causal inference problems, the causal graph is assumed to be known.
    Some recent work relax this assumption \cite{pmlr-v97-jaber19a, NEURIPS2022_17a9ab41} or introduce additional conditions on the causal graph with the goal of identifying a broader range of causal effects \cite{tikka2019identifying, mokhtarian2022causal, jamshidi2023causal}.
    Settings where data samples are dependent introduce new challenges to causal inference, which have been explored in another line of research \cite{sherman2018identification, bhattacharya2020causal, zhang2023causal}.

     \begin{table}[t]
        \centering
        \caption{Various causal effect identification problems. $\Xb$ is the set of intervened variables, $\Yb$ is the set of outcome variables, and $S=1$ corresponds to a sub-population.
        ID, c-ID, and \s-Recoverability have been addressed in the presence of latent variables. \s-ID problem has only been studied in causally sufficient cases where all variables are observed.}
        \label{table: intro}
        \begin{tabular}{c | c |c |c} 
            \toprule
            Problem & Given distribution & Target distribution & Presence of latent variables \\
            \hline
            ID & $\pro (\Vb)$ & $\pro_{\Xb}(\Yb)$ & \checkmark\\
            c-ID  & $\pro (\Vb)$  &  $\pro_{\Xb}(\Yb \vert \Zb)$ & \checkmark\\
            \s-Recoverability &  $\pro (\Vb \vert S = 1)$  &  $\pro_{\Xb}(\Yb)$ & \checkmark\\
            \s-ID  &  $\pro (\Vb \vert S = 1)$  &   $\pro_{\Xb}(\Yb \vert S=1)$ & \redcross \\
            \bottomrule
            % \hline
        \end{tabular}
    \end{table}

    \noindent  \textbf{\s-ID Is Not ID.}
    It is worth emphasizing that the \s-ID problem is not a special case of ID problem, where the population is restricted to the target sub-population. The presence of selection bias $S$ introduces additional dependencies among variables, and ignoring $S$ in the graph invalidates the application of the rules of do-calculus \cite{pearl2000models} (which are the main tools used to tackle the ID problem) on input distribution, i.e., $\pro(\Vb \vert S = 1)$.
    Consequently, there are many instances where a causal effect is identifiable in the ID setting but not identifiable in the \s-ID setting.
    In particular, when all the variables in a causal system are observable, all causal effects are identifiable in the setting of the ID problem \cite{pearl2000models}. 
    This is not the case in the \s-ID setting, and some causal effects become non-identifiable, as noted by \cite{abouei2023sid}.
    Moreover, even when a causal effect is identifiable in both the ID and \s-ID settings, using the expression from the ID algorithm can lead to erroneous inference. 
    Example \ref{ex: intro} illustrates this case.

    \noindent  \textbf{Example 1.} \label{ex: intro}
        Consider an example pertaining to study of the effect of a cholesterol-lowering medication on cardiovascular disease.
        Figure \ref{fig: intro-ex} depicts the causal graph of this example, where $X$ is the medication choice that directly affects $Y$, cardiovascular disease.
        Variable $Z$ represents the diet and exercise routine of a person.
        In this scenario, $X$ and $Z$ are confounded by the person's socioeconomic status (e.g., income).
        It can be shown that the causal effect of $X$ on $Y$ (in the entire population, for instance, the people around the globe) is identifiable (ID) from $\pro(X, Y, Z)$ and can be computed as $\pro_{X}(Y) = \pro(Y \vert X)$.

         \begin{wrapfigure}{r}{0.4\textwidth}
            \centering
            \begin{tikzpicture}
                % vertices
                \node [block, label=center:X](X) {};
                \node [block, right= 1 of X, label=center:Z](Z) {};
                \node [block, below = 1 of X, label=center:Y](Y) {};
                \node [block-s, right= 1 of Y, label=center:$S$](S) {};
                %edges
                \draw[edge] (X) to (Y);
                \draw[edge] (Z) to (S);
                \draw[edge2] (X) to (Z);
                \draw[edge2] (Y) to (S);
            \end{tikzpicture}
            \caption{ADMG $\Grs$ in Example 1.}
            \label{fig: intro-ex}
        \end{wrapfigure}
        
        However, we might instead be interested in a study that focuses on the people of a specific region.
        In this case, the target sub-population would correspond to individuals who are biased toward particular diet and exercise routines and possibly have a higher genetic predisposition for heart disease.
        Let $S$ be an indicator node for this sub-population, $Z$ has a directed edge toward $S$, and $S$ and $Y$ are confounded by the latent genetic predisposition of the people of this group.    
        We will show in Section \ref{sec: main} that the causal effect in this sub-population, $\pro_X(Y \vert S=1)$, is \s-ID and equals $\sum_{Z} \pro(Y \vert X, Z, S=1) \pro(Z \vert S=1)$.
        In this example, the presence of $S$ introduces a spurious correlation between $X$ and $Y$ through the path involving $Z$ and $S$. Therefore, if we were to ignore the presence of $S$ and apply the ID algorithm to the input $\pro(X, Y, Z \vert S = 1)$, it would result in incorrect inference: $\pro(Y \vert X, S = 1)$ as opposed to the correct value $\pro_X(Y \vert S=1)$.
        In Appendix \ref{apd: exp}, we empirically compare the differences between ID and \s-ID settings. We present another example in Appendix \ref{apd: example} where a causal effect is ID but not \s-ID.

    In this paper, we consider the \s-ID problem in the presence of latent variables. Our main contributions are as follows.

    \begin{itemize}[leftmargin=*]
        \item We extend the classical relevant graphical definitions, such as c-components and Hedges, initially defined for the ID problem so that they inherit the key properties of their predecessors but are applicable to the \s-ID setting in the presence of latent variables (Section \ref{sec: s-ID}).
        \item We present a sufficient graphical condition to determine whether a causal effect is \s-ID (Theorem \ref{thm: main}).
        Accordingly, we propose a sound algorithm for the \s-ID problem (Algorithm \ref{algo: s-ID}).
       
        \item We show a reduction from the \s-Recoverability problem to the \s-ID problem (Theorem \ref{thm: reduction}), indicating that solving \s-ID can also solve the \s-Recoverability problem.
    
    \end{itemize}

    \noindent\textbf{Organization.} In Sections \ref{sec: preliminaries} and \ref{sec: ID}, we cover the preliminaries and review key definitions and results for the ID problem.
    In Section \ref{sec: s-ID}, we formally define the \s-ID problem in the presence of latent variables and present the proper modifications of the classical graphical notions of interest for the \s-ID problem.
    We present our main results in Section \ref{sec: main}.
    In Section \ref{sec: reduction}, we introduce a reduction from \s-Recoverability to \s-ID. 
    The appendix includes proofs of our results, as well as a numerical experiment.

\section{Preliminaries} \label{sec: preliminaries}
    Throughout the paper, we use capital letters to represent random variables and bold letters to represent sets of variables. 
    Furthermore, to facilitate ease of reading, we have summarized the key notations in Table \ref{table: notations}.

\begin{table}[t!]
    \centering
    \caption{Table of notations.}
    \label{table: notations}
    \small
    \begin{tabular}{c|c } 
        \toprule
        Notation & Description \\
        \hline
        $\Vb, \Ub$ & Sets of observed and unobserved variables \\
        $S$ &  Auxiliary vertex (variable) used to model a sub-population \\
        $\Grs$ &  Augmented ADMG over $\Vb \cup \{S\}$ \\
        $\Pa{X}{\Gr}$ &  Parents of vertex $X$ in graph $\Gr$ \\
        $\Anc{X}{\Gr}, \Anc{\Xb}{\Gr}$ & Ancestors of vertex $X$ (including $X$); the union of ancestors for all $X \in \Xb$ \\
        $\Vs, \Vns$ & $\Vb \cap \Anc{S}{\Grs}$ and its complement, $\Vb \setminus \Anc{S}{\Grs}$\\
        $\Gr[\Xb]$ & Subgraph of $\Gr$ induced by the vertices in $\Xb$\\
        $\Gr_{\overline{\Xb}\underline{\Zb}}$ & Subgraph of $\Gr$ after removing incoming edges to $\Xb$ and outgoing edges from $\Zb$  \\
        $\prs (\Vb)$ & Sub-population distribution, i.e., $\pro(\Vb \vert S = 1)$ \\
        $\pro_{\Xb}(\Yb)$ & Causal effect of $\Xb$ on $\Yb$, i.e., post-interventional distribution\\
        $\prs_{\Xb}(\Yb)$ & Causal effect of $\Xb$ on $\Yb$ in the sub-population, i.e., $\pro_{\Xb}(\Yb \vert S = 1$) \\
        $Q[\Hb]$ & $\pro_{\Vb \setminus  \Hb } \left(\Hb\right)$ \\
        $\Qs[\Hb]$ &  $\pro_{\Vns \setminus  \Hb } \left(\Hb \vert \Anc{S}{\Grs} \setminus \{S\}, S = 1 \right), \forall~ \Hb \subseteq \Vns$\\
        \bottomrule
    \end{tabular}
\end{table}

    \textbf{Graph Definitions.} Acyclic directed mixed graphs (ADMGs) consist of a mix of directed and bidirected edges and have no directed cycles.
    Let $\Gr = (\Vb, \Eb_1, \Eb_2)$ be an ADMG, where $\Vb$ is a set of variables, $\Eb_1$ is a set of directed edges ($\rightarrow$), and $\Eb_2$ is a set of bidirected edges ($\leftrightarrow$).
    The set of parents of a variable $X\in \Vb$, denoted by $\Pa{X}{\Gr}$, consists of the variables with a directed edge to $X$.
    Similarly, the set of ancestors of $X\in \Vb$, denoted by $\Anc{X}{\Gr}$, includes all variables on a \emph{directed} path to $X$, including $X$ itself.
    For a set $\Xb \subseteq \Vb$, we define $\Anc{\Xb}{\Gr}=\bigcup_{X \in \Xb}\Anc{X}{\Gr}$. In an ADMG $\Gr$ over $\Vb$, a subset $\Xb \subseteq \Vb$ is called ancestral if $\Anc{\Xb}{\Gr} = \Xb$.

    A path is called bidirected if it only consists of bidirected edges.
    A non-endpoint vertex $X_i$ on a path $(X_1, X_2, \dots, X_k)$ is called a \emph{collider} if one of the following situations arises:
    \begin{align*}
        X_{i-1} \to X_i \gets X_{i+1},\quad X_{i-1} \leftrightarrow X_i \gets X_{i+1}, \quad
        X_{i-1} \to X_i \leftrightarrow X_{i+1}, \quad
        X_{i-1} \leftrightarrow X_i \leftrightarrow X_{i+1}.
    \end{align*}

    Let $\Xb,\Yb,\Wb$ be three disjoint subsets of variables in an ADMG $\Gr$.
    A path $\mathcal{P}= (X, Z_1, \dots, Z_k, Y)$ between $X \in \Xb$ and $Y \in \Yb$ in $\Gr$ is called \emph{blocked} by $\Wb$ if there exists $1 \leq i \leq k$ such that $Z_i$ is a collider on $\mathcal{P}$ and $Z_i \notin \Anc{\Wb}{\Gr}$, or $Z_i$ is not a collider on $\mathcal{P}$ and $Z_i \in \Wb$.
    
    Denoted by $(\Xb \independent \Yb \vert \Wb)_{\Gr}$, we say $\Wb$ $m$-separates $\Xb$ and $\Yb$ if for any $X \in \Xb$ and $Y \in \Yb$, $\Wb$ blocks all the paths in $\Gr$ between $X$ and $Y$.
    Conversely,  $(\Xb \notindependent \Yb \vert \Wb)_{\Gr}$ if there exists at least one path between a variable in $\Xb$ and a variable in $\Yb$ that is not blocked by $\Wb$.
    
    For $\Xb, \Zb \subseteq \Vb$, $\Gr_{\overline{\Xb}\underline{\Zb}}$ denotes the edge subgraph of $\Gr$ obtained by removing the edges with an arrowhead to a variable in $\Xb$ (including bidirected edges) and outgoing edges of $\Zb$ (excluding bidirected edges). Moreover, $\Gr[\Xb]$ denotes the vertex subgraph of $\Gr$ consisting of $\Xb$ and bidirected and directed edges between them.

    \textbf{SCM.} A structural causal model (SCM) is a tuple $\left(\Vb, \Ub, \Fb, P(\Ub) \right)$, where $\Vb$ is a set of endogenous variables, $\Ub$ is a set of exogenous variables independent from each other with the joint probability distribution $P(\Ub)$, and $\Fb = \{f_X\}_{X \in \Vb}$ is a set of deterministic functions such that for each $X \in \Vb$,
    \begin{equation*}
        X = f_X(\textit{Pa}^X, \Ub^X),
    \end{equation*}
    where $\textit{Pa}^X \subseteq \Vb \setminus \{X\}$ and $\Ub^X \subseteq \Ub$.
    This SCM induces a causal graph $\Gr$ over $\Vb$ such that $\Pa{X}{\Gr} = \textit{Pa}^X$ and there is a bidirected edge between two distinct variables $X, Y \in \Vb$ when $\Ub^X \cap \Ub^Y \neq \varnothing$.
    Henceforth, we assume the underlying SCM induces a causal graph that is ADMG, i.e., it contains no directed cycles.

    An SCM $\M= \left(\Vb, \Ub, \Fb, P(\Ub) \right)$ with causal graph $\Gr$ induces a unique joint distribution over the variables $\Vb$ that can be factorized as
    \begin{equation*}
        \pro^{\M}(\Vb) 
        = \sum_{\Ub} \prod_{X\in \Vb} \pro^{\M}(X \vert \Pa{X}{\Gr}) \prod_{U \in \Ub} \pro^{\M}(U).
    \end{equation*}
    This property is known as the Markov factorization \cite{pearl2009causality}.
    Note that $\sum_{\Xb}$ denotes marginalization, i.e., summation (or integration for continuous variables) over all the realizations of the variables in set $\Xb$.
    We often drop the $\M$ in $\pro^{\M}(\cdot)$ when it is clear from the context.
    
    \textbf{Modeling a Sub-Population.} We model a sub-population using an auxiliary variable $S$ and a biased sampler from a causal environment akin to \cite{pmlr-v22-bareinboim12, abouei2023sid}. Suppose $\M$ is the underlying SCM of an environment with the set of observed variables $\Vb$.
    In this causal environment, an unbiased sampler produces samples drawn from $\pro(\Vb)$.
    When the sampler is biased, it draws samples from the conditional distribution $\prs(\Vb) \coloneqq \pro(\Vb \vert S=1)$, where $S$ is an auxiliary variable defined as
        $ S \coloneqq f_S(\text{Pa}^{S}, \Ub^{S}),$
    where $f_S$ is a binary function, $\text{Pa}^S \subseteq \Vb$, and $\Ub^S$ is the set of exogenous variables corresponding to $S$.
    Note that $\Ub^S$ can intersect with $\Ub$, but the variables in $\Ub \cup \Ub^S$ are assumed to be independent.
    In this model, $S=1$ indicates that the sample is drawn from the target sub-population.
    Furthermore, we define the augmented SCM $\M^{\s}= \left(\Vb \cup \{S\}, \Ub \cup \Ub^S, \Fb \cup \{f_S\}, P(\Ub \cup \Ub^S) \right)$ obtained by adding $S$ to the underlying SCM $\M$.
    We denote by $\Grs$, the causal graph of $\M^{\s}$, which is an augmented ADMG over $\Vb \cup \{S\}$.
    Note that in $\Grs$, variable $S$ does not have any children, but it can have several parents and bidirected edges.

    \textbf{Intervention.} An \emph{intervention} on a set $\Xb \subseteq \Vb$ converts $\M$ to a new SCM where the equations of the variables in $\Xb$ are replaced by some constants.
    We denote by $Q[\Vb \setminus \Xb] \eqdef \pro_{\Xb}(\Vb \setminus \Xb)$ the corresponding post-interventional distribution, i.e., the joint distribution of the variables in the new SCM. The causal effect of $\Xb$ on $\Yb$ refers to the post-interventional distribution $\pro_{\Xb}(\Yb)$, where $\Xb$ and $\Yb$ are disjoint subsets of $\Vb$. Accordingly, the causal effect of $\Xb$ on $\Yb$ in a sub-population is denoted by $\pro_{\Xb}(\Yb \vert S = 1)$.\footnote{Note that in this notation, the order of operations is intervention first, then conditioning.}

    \textbf{Problem Setup.} Let $(\Vb, \Ub, \Fb, P(\Ub))$ be an SCM with ADMG $\Gr$ representing its causal graph. 
    Additionally, let $S$ be an auxiliary variable representing a specific sub-population. 
    In this paper, given the augmented graph $\Grs$ and two arbitrary, disjoint subsets $\Xb$ and $\Yb$, we address the following question: Can the causal effect $\prs_{\Xb}(\Yb)$ be uniquely identified from the observational distribution $\prs(\Vb)$? Please refer to Definition \ref{def:S-ID} for the formal definition of the \s-ID problem.
        
% ------------------------------------------------------------------------

\section{ID, \cco, and Hedge} \label{sec: ID}
    Our proposed approach to address the \s-ID problem extends certain definitions and properties from the classic ID problem \cite{pearl1995causal}.
    For the sake of completeness and pedagogical reasons, in this section, we review some definitions and the main results in the ID problem \cite{tian2002general, huang2006identifiability, shpitser2006identification}.

    \begin{definition}[ID]
        Suppose $\Gr$ is an ADMG over $\Vb$ and let $\Xb$ and $\Yb$ be disjoint subsets of $\Vb$.
        Causal effect $\pro_{\Xb}(\Yb)$ is said to be identifiable (or ID for short) in $\Gr$ if for any two SCMs $\M_1$ and $\M_2$ with causal graph $\Gr$ for which $\pro^{\M_1}(\Vb) = \pro^{\M_2}(\Vb) > 0$,  then $\pro^{\M_1}_{\Xb}(\Yb) = \pro^{\M_2}_{\Xb}(\Yb)$.
    \end{definition}

    Next, we review \ccos, a fundamental concept to address the ID problem.

    \begin{definition}[\cco] \label{def: c-component}
        Suppose $\Gr$ is an ADMG over $\Vb$.
        The \ccos of $\Gr$ are the connected components in the graph obtained by considering only the bidirected edges of $\Gr$.
        Furthermore, $\Gr$ is called a single \cco if it contains only one \cco.
    \end{definition}

    There exist a few different definitions for \emph{Hedge}, another central notion in the ID literature.
    Here, we provide a somewhat simplified definition that not only suffices to present the main result of the ID problem but also allows us to extend it in the next section to the \s-ID setting.
    
    \begin{definition}[Hedge] \label{def: hedge}
        Suppose $\Gr$ is an ADMG over $\Vb$, and let $\Yb \subseteq \Vb$ such that $\Gr[\Yb]$ is a single \cco.
        A subset $\Hb \subseteq \Vb$ is called a Hedge for $\Yb$ in $\Gr$, if $\Yb \subsetneq \Hb$, $\Gr[\Hb]$ is a single \cco, and $\Hb = \Anc{\Yb}{\Gr[\Hb]}$.
    \end{definition}

    \setcounter{example}{1}
    \begin{example} \label{ex: c-component and Hedge}
        Consider the ADMG $\Gr$ over $\Vb = \{X_1, X_2, Y_1, Y_2\}$ depicted in Figure \ref{fig: ID}.
        In this case, $\Gr$, $\Gr[X_1, X_2]$, and $\Gr[Y_1, Y_2]$ are single \ccos.
        The \ccos of $\Gr[X_1, X_2, Y_1]$ are $\{X_1, X_2\}$ and $\{Y_1\}$.
        Furthermore, $\Hb = \{X_1, Y_1, Y_2\}$ is a Hedge for $\Yb = \{Y_1, Y_2\}$ since $\Gr[\Hb]$ and $\Gr[\Yb]$ are single \ccos and $\Anc{\Yb}{\Gr[\Hb]} = \Hb$.
        Similarly, $\Vb$ is a Hedge for $\Yb$, but there exists no Hedge for either $\{Y_1\}$ or $\{Y_2\}$.
    \end{example}  

    The following theorem, restating the results in \cite{shpitser2006identification} and \cite{huang2006identifiability}, outlines a necessary and sufficient condition to determine the identifiability of a causal effect in an ADMG.
    
    \begin{theorem}[ID] \label{thm: ID}
        Let $\Gr$ be an ADMG over $\Vb$, and $\Xb$ and $\Yb$ be two disjoint subsets of $\Vb$.
        Causal effect $\pro_{\Xb}(\Yb)$ is ID in $\Gr$ if and only if $Q[\Db]$ is ID in $\Gr$, where $\Db = \Anc{\Yb}{\Gr[\Vb \setminus \Xb]}$.
        Furthermore, let $\{\Db_i\}_{i=1}^k$ be the \ccos of $\Gr[\Db]$, then $Q[\Db]$ is ID in $\Gr$ if and only if there are no Hedge in $\Gr$ for any of the \ccos $\{\Db_i\}_{i=1}^k$.
    \end{theorem}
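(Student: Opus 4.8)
The plan is to prove both equivalences simultaneously by isolating three implications and chaining them. Let $\{\Db_i\}_{i=1}^k$ be the \ccos of $\Gr[\Db]$. I would establish: (a) if $Q[\Db]$ is ID then $\pro_{\Xb}(\Yb)$ is ID; (b) if no $\Db_i$ has a Hedge in $\Gr$ then $Q[\Db]$ is ID; and (c) if some $\Db_i$ has a Hedge in $\Gr$ then $\pro_{\Xb}(\Yb)$ is not ID. Given these, the theorem follows by pure propositional logic. For the first equivalence, (a) is the backward direction, while for the forward direction, if $Q[\Db]$ were not ID, then by the contrapositive of (b) some $\Db_i$ would have a Hedge, and (c) would contradict $\pro_{\Xb}(\Yb)$ being ID. For the second equivalence, (b) is one direction, and if some $\Db_i$ had a Hedge, then (c) together with (a) would make $Q[\Db]$ non-ID, giving the other direction.

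I would first record the marginalization identity that powers (a). Since $\Db=\Anc{\Yb}{\Gr[\Vb\setminus\Xb]}$ is ancestral in $\Gr[\Vb\setminus\Xb]$, the remaining variables $\Wb\eqdef\Vb\setminus(\Xb\cup\Db)$ are non-ancestors of $\Yb$ in $\Gr_{\overline{\Xb}}$; Rule 3 of do-calculus then yields $\pro_{\Xb}(\Db)=\pro_{\Xb\cup\Wb}(\Db)=Q[\Db]$, hence $\pro_{\Xb}(\Yb)=\sum_{\Db\setminus\Yb}Q[\Db]$. Because this identity holds in every SCM compatible with $\Gr$, agreement on $Q[\Db]$ forces agreement on $\pro_{\Xb}(\Yb)$, which is exactly (a). I would also invoke Tian's \cco factorization $Q[\Db]=\prod_{i=1}^k Q[\Db_i]$, so that identifiability of $Q[\Db]$ is equivalent to simultaneous identifiability of every factor $Q[\Db_i]$.

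For (b) I would run the \emph{Identify} recursion of \cite{tian2002general}: starting from the known quantity $Q[\Vb]=\pro(\Vb)$ and repeatedly extracting $Q[\Cb]$ for a \cco $\Cb$ of a smaller ancestral district from the $Q$ of a larger one — a step always feasible via the Markov factorization of $Q$ — one attempts to compute each $Q[\Db_i]$. The recursion can fail only when it reaches a proper sub-district $\Cb\subsetneq\Db_i$ with $\Db_i$ still a single \cco and $\Db_i=\Anc{\Db_i}{\Gr[\Db_i]}$, i.e. precisely the configuration of a Hedge for $\Db_i$. Hence the absence of any Hedge guarantees the recursion terminates with an explicit formula for every $Q[\Db_i]$, and by the factorization $Q[\Db]$ is ID.

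The crux is (c). Given a Hedge $\Hb$ for some $\Db_i$, I would construct two SCMs $\M_1,\M_2$ with the same graph $\Gr$ and the same positive observational law $\pro(\Vb)$ but with $\pro_{\Xb}(\Yb)$ differing. The standard gadget makes every variable of $\Hb$ binary, lets each bidirected edge inside $\Hb$ share a fair random bit and each directed edge relay bits, and defines $\M_1,\M_2$ to differ only by an XOR (parity) flip in how the variables of $\Db_i$ combine their inputs, while all variables outside $\Hb$ are forced to be inert so the two models are observationally indistinguishable. Three things must be checked: that $\pro^{\M_1}(\Vb)=\pro^{\M_2}(\Vb)>0$; that after the intervention $\pro_{\Xb}$ the mechanism on $\Db_i$ is no longer absorbed by conditioning, so the parity discrepancy genuinely survives in $Q[\Db_i]$ and hence in $Q[\Db]$; and that it is not annihilated by the marginalization $\sum_{\Db\setminus\Yb}$, which holds because every vertex of $\Db$ (in particular those of $\Db_i$) is an ancestor of $\Yb$ within the ancestral set $\Db$, providing a directed route carrying the flip into an observed $\Yb$-coordinate. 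Designing this construction and verifying observational equivalence together with interventional divergence surviving marginalization is the main obstacle; everything else reduces to the factorization and do-calculus bookkeeping above.
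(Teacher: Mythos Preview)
The paper does not prove this theorem; it is explicitly presented as a restatement of results from \cite{shpitser2006identification} and \cite{huang2006identifiability}, so there is no in-paper proof to compare against. Your three-implication scheme (a)--(c) is precisely the strategy those references use, and the identity $\pro_{\Xb}(\Yb)=\sum_{\Db\setminus\Yb}Q[\Db]$ together with Tian's factorization $Q[\Db]=\prod_i Q[\Db_i]$ is the right backbone.

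Two concrete gaps remain. First, your description of when \emph{Identify} fails is inverted: the recursion tries to compute $Q[\Db_i]$ from $Q[\Tb]$ for a shrinking sequence of single \ccos $\Tb\supseteq\Db_i$, and it stalls exactly when $\Tb=\Anc{\Db_i}{\Gr[\Tb]}$ for some $\Tb\supsetneq\Db_i$ --- that $\Tb$ is the Hedge, a proper \emph{superset} of $\Db_i$, not a ``proper sub-district $\Cb\subsetneq\Db_i$''. The condition $\Db_i=\Anc{\Db_i}{\Gr[\Db_i]}$ you wrote is vacuous and not the obstruction. Second, for (c) the counterexample needs the Hedge $\Hb$ to intersect $\Xb$; otherwise intervening on $\Xb$ leaves the parity gadget untouched and both SCMs would agree on $\pro_{\Xb}(\Yb)$ as well. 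This is not immediate from Definition~\ref{def: hedge} and must be argued: since $\Db_i$ is a maximal \cco of $\Gr[\Db]$ and $\Gr[\Hb]$ is a single \cco with $\Hb\supsetneq\Db_i$, one has $\Hb\not\subseteq\Db$; any $v\in\Hb\setminus\Db$ has a directed path to $\Db_i$ inside $\Gr[\Hb]$, and if that path avoided $\Xb$ entirely it would place $v$ in $\Anc{\Yb}{\Gr[\Vb\setminus\Xb]}=\Db$, a contradiction --- hence some vertex on the path lies in $\Hb\cap\Xb$. You should make this step explicit, and also secure strict positivity of $\pro(\Vb)$ in the construction (the standard fix is to perturb every mechanism with small independent noise).
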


    \begin{example} \label{ex: ID}
        Following Example \ref{ex: c-component and Hedge}, Theorem \ref{thm: ID} implies that $P_{X_1}(Y_1)$, $P_{X_2}(Y_2)$, $P_{\{X_1, X_2\}}(Y_1)$, and $P_{\{X_1, X_2\}}(Y_2)$ are ID since no Hedge for either $\{Y_1\}$ or $\{Y_2\}$ exists.
        However, $P_{X_1}(Y_1, Y_2)$ is not ID because $\Db = \{Y_1, Y_2, X_2\}$ and the \ccos of $Q[\Db]$ are $\Db_1 = \{Y_1, Y_2\}$ and $\Db_2 = \{X_2\}$, and $\{X_1, Y_1, Y_2\}$ (or $\Vb$) is a Hedge for $\Db_1$.
        Similarly, we can show that $P_{\{X_1, X_2\}}(Y_1, Y_2)$ is not ID.
    \end{example}

% % ------------------------------------------------------------------------

\begin{figure}
    \begin{subfigure}[b]{0.45\textwidth}  
        \centering
        \begin{minipage}[c][4cm][c]{\textwidth}
            \centering
            \begin{tikzpicture}
                % vertices
                \node [block, label=center:$X_1$](X1) {};
                \node [block, right= 1 of X1, label=center:$X_2$](X2) {};
                \node [block, below = 1 of X1, label=center:$Y_1$](Y1) {};
                \node [block, right= 1 of Y1, label=center:$Y_2$](Y2) {};

                %edges
                \draw[edge] (X1) to (Y1);
                \draw[edge] (X2) to (Y2);
                \draw[edge2] (X1) to (X2);
                \draw[edge2] (Y1) to (Y2);
                \draw[edge2] (X1) to (Y2);
            \end{tikzpicture}
        \end{minipage}
        \caption{ADMG $\Gr$ in Examples 2-3.}
        \label{fig: ID}
    \end{subfigure}
    % \hfill
    \begin{subfigure}[b]{0.45\textwidth}
        \centering
        \begin{minipage}[c][4cm][c]{\textwidth}
            \centering
            \begin{tikzpicture}
                \node [block, label=center:$X_1$](X) {};
                \node [block, below = 0.5 of X, label=center:$X_2$](X2) {};
                \node [block, right= 0.75 of X2, label=center:$Z_2$](Z2) {};
                \node [block, right= 0.75 of X, label=center:$Z_1$](Z1) {};
                \node [block, below = 0.5 of X2, label=center:$Y_2$](Y2) {};
                \node [block-s, right = 0.75 of Y2, label=center:$S$](S) {};
                \node [block, left = 0.5 of X2, label=center:$Y_1$](Y1) {};   
                %edges

                \draw[edge] (X) to (Y1);
                \draw[edge] (Y1) to (Y2);
                \draw[edge] (X) to (X2);
                \draw[edge] (X2) to (Y2);
                \draw[edge] (Z2) to  (Y2);
                \draw[edge] (Z1) to (Z2);
                \draw[edge] (Z2) to (S);
                \draw[edge] (Z2) to (Y2);
                \draw[edge2] (Z1) to  (X);
                \draw[edge2] (Z2) to (X);
                \draw[edge2] (Z2) to  (X2);
                \draw[edge2] (Y2) to  (S);
                \draw[edge2] (Y1) to [bend right = 70] (S);
            \end{tikzpicture}
        \end{minipage}
        \caption{Augmented ADMG $\Grs$ in Examples 4-9.}
        \label{fig: ex-s-ID}
    \end{subfigure}
    \caption{ADMGs in Examples of Sections \ref{sec: ID}, \ref{sec: s-ID}, and \ref{sec: main}.}
\end{figure}

\section{\s-ID, \sco, and \s-Hedge} \label{sec: s-ID}
    We begin by providing a formal definition of the \s-ID problem in the presence of latent variables, i.e., when the causal graph is an ADMG.
    Then, we present modifications of the graphical notions from the previous section so that they inherit the key properties of their predecessors and can be applied to the \s-ID setting. 

    To avoid repetition, henceforth, we denote by $\Vb$ the set of observed variables and by $\Grs$ an augmented ADMG over $\Vb \cup \{S\}$.
    Furthermore, we denote by $\Vs$ and $\Vns$ the ancestors and non-ancestors of $S$ in $\Vb$, i.e.,
    \begin{equation*}
        \Vs \coloneqq \Vb \cap \Anc{S}{\Grs}, \quad \Vns \coloneqq \Vb \setminus \Anc{S}{\Grs}.
    \end{equation*}
    
    \begin{definition}[\s-ID]\label{def:S-ID}
        Let $\Xb$ and $\Yb$ be disjoint subsets of $\Vb$.
        Conditional causal effect $\pro_{\Xb}(\Yb \vert S=1)$ (or $\prs_{\Xb}(\Yb)$) is  \s-ID in $\Grs$ if for any two augmented SCMs $\M_1^{\s}$ and $\M_2^{\s}$ with causal graph $\Grs$ for which $\pro^{\M_1^{\s}}(\Vb \vert S=1) = \pro^{\M_2^{\s}}(\Vb \vert S=1) > 0$, then $\pro^{\M_1^{\s}}_{\Xb}(\Yb \vert S=1) = \pro^{\M_2^{\s}}_{\Xb}(\Yb \vert S=1)$.
    \end{definition}

    Next definition extends $Q[\cdot]$ and introduces $\Qs{[\cdot]}$.
    \begin{definition}[$\Qs{[\cdot]}$]
        For $\Hb \subseteq \Vns$, we define
        % \begin{equation*}  
           $ \Qs[\Hb] \coloneqq 
            \pro_{\Vns \setminus  \Hb } \left(\Hb \vert \Anc{S}{\Grs} \setminus \{S\}, S = 1 \right).$
        % \end{equation*}
    \end{definition}

    The next definition extends \ccos (Definition \ref{def: c-component}) and introduces \scos.

    \begin{definition}[\sco] \label{def: s-comp}
        For a subset $\Hb \subseteq \Vns $, let $\Cb_1, \dots, \Cb_k$ denote the \ccos of $\Grs[\Hb \cup \Anc{S}{\Grs}]$.
        We define the \scos of $\Hb$ in $\Grs$ as the subsets $\Hb_i \coloneqq \Cb_i \cap \Hb$ which are non-empty.
        Furthermore, $\Hb$ is called a single \sco in $\Grs$ if it contains only one \sco.
    \end{definition}
    Note that $\Qs[\cdot]$ and \scos are only defined for the subsets of $\Vns$.
    Figure \ref{fig: scomps} visualizes the structure of \s-components of a subset $\Hb \subseteq \Vns$.
    In this figure, each blue subset (e.g., $M_1$) represents a c-component, which means all the nodes within them are connected via bidirected edges.
    Therefore, according to Definition \ref{def: s-comp}, all nodes inside \s-components (e.g., $\Hb_1$) of $\Hb$ are connected via bidirected edges in $\Grs[\Hb \cup \Vs]$. Figure \ref{fig: single-s} shows the structure of a single \s-component, where all the nodes of $\Hb$ are connected via bidirected edges in $\Grs[\Hb \cup \Vs]$.

    \begin{figure}
        \centering
        \begin{subfigure}[t]{.22\textwidth} 
            \centering
            \begin{minipage}[b][4cm][c]{\textwidth} % Fixed height minipage
                \centering
                \includegraphics[width=\textwidth]{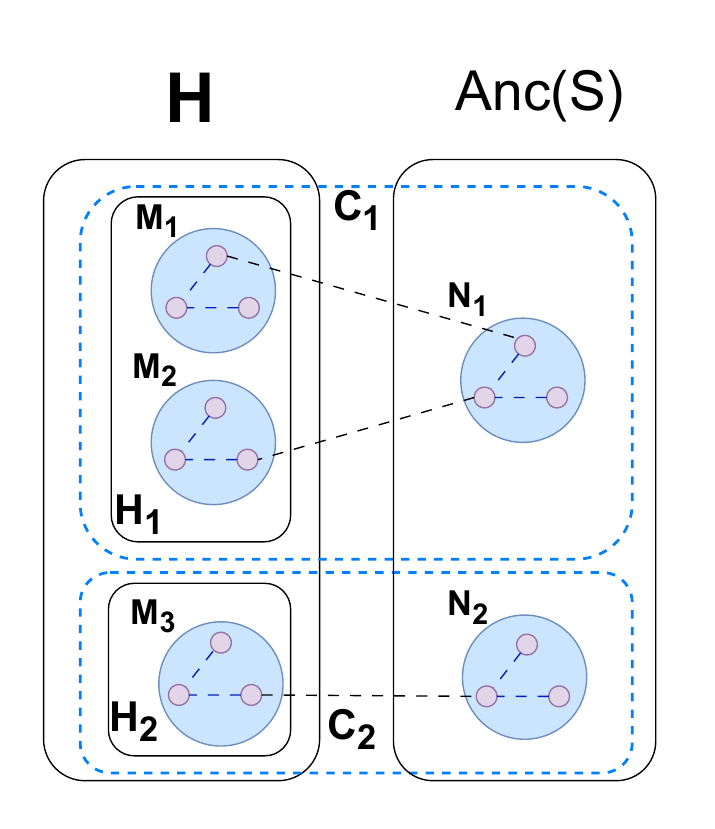}
            \end{minipage}
            \caption{\s-components of $\Hb$ \\ are $\Hb_1, \Hb_2$.}
            \label{fig: scomps}
        \end{subfigure}
        \hfill
        \begin{subfigure}[t]{.22\textwidth}  
            \centering
            \begin{minipage}[b][4cm][c]{\textwidth}
                \centering
                \includegraphics[width=\textwidth]{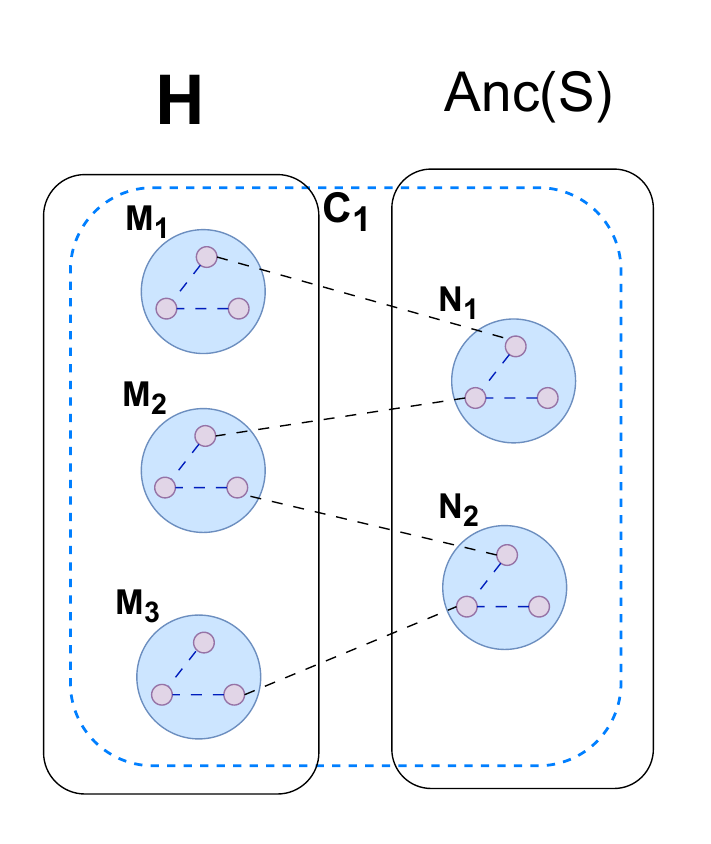}
            \end{minipage}
            \caption{$\Hb$ is a single \\ \s-component.}
            \label{fig: single-s}
        \end{subfigure}
        \hfill
        \begin{subfigure}[t]{.32\textwidth}  
            \centering
            \begin{minipage}[b][4cm][c]{\textwidth}
                \centering
                \includegraphics[width=\textwidth]{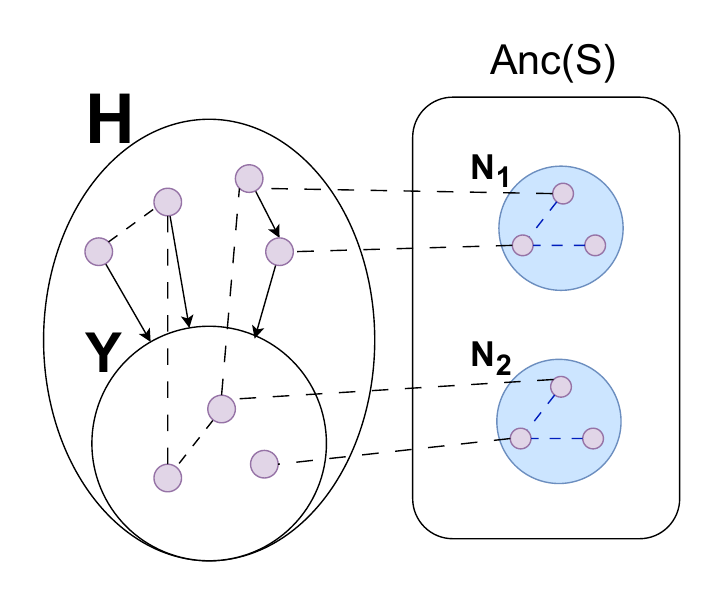}
            \end{minipage}
            \caption{$\Hb$ is an \s-Hedge for $\Yb$.}
            \label{fig: s-hedge}
        \end{subfigure} 
        \hfill
        \begin{subfigure}[t]{.21\textwidth}  
            \centering
            \begin{minipage}[b][4cm][c]{\textwidth}
                \centering
                \includegraphics[width=0.85\textwidth]{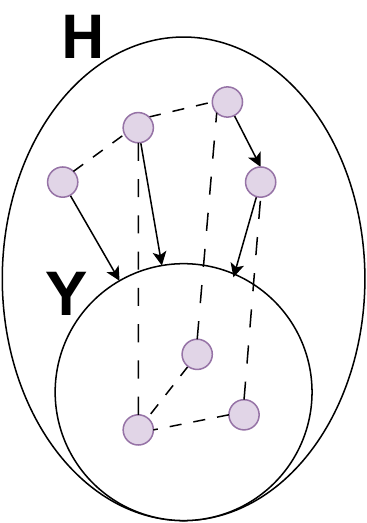}
            \end{minipage}
            \caption{$\Hb$ is a Hedge for $\Yb$.}
            \label{fig: hedge}
        \end{subfigure}
        \caption{Visualization of the graph structures defined in Sections \ref{sec: ID} and \ref{sec: s-ID}.}
        \label{fig:enter-label}
    \end{figure}

    \begin{example} \label{ex: ex4}
        Consider the ADMG $\Grs$ in Figure \ref{fig: ex-s-ID} over $\Vb \cup \{S\}$, where $\Vb = \{X_1, X_2, Y_1, Y_2, Z_1, Z_2\}$.
        Since $\Anc{S}{\Grs} = \{Z_1, Z_2, S\}$, we have $\Vs = \{Z_1, Z_2\}$ and $\Vns = \{X_1, X_2, Y_1, Y_2\}$.
        In this case, the \scos of $\Vns$ are $\{X_1, X_2\}$ and $\{Y_1, Y_2\}$.
        Moreover, the \scos of $\{X_1, Y_1, Y_2\}$ are $\{X_1\}$ and $\{Y_1, Y_2\}$.
    \end{example}    
 
    We now provide two crucial properties for $\Qs[\cdot]$.
    
    \begin{lemma} \label{lemma: qs-margin}    
        Let $\Wb, \Wb'$ be two subsets of $\Vns$ such that $\Wb' \subset \Wb$.
        If $\Wb'$ is an ancestral set in $\Grs[\Wb]$, then
        \begin{equation} \label{eq: qs-margin}
            \Qs[\Wb'] = \sum_{\Wb \setminus \Wb'} \Qs[\Wb].
        \end{equation}
    \end{lemma}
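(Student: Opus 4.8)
The plan is to reduce the identity to the classical marginalization property of the ordinary $Q[\cdot]$ factors, but taken in the augmented graph $\Grs$. First I would rewrite $\Qs[\Hb]$ as a genuine interventional ratio. Writing $\Vs=\Anc{S}{\Grs}\setminus\{S\}$ and using $\Vb\setminus(\Hb\cup\Vs)=\Vns\setminus\Hb$ for $\Hb\subseteq\Vns$, the definition gives
$$\Qs[\Hb]=\pro_{\Vns\setminus\Hb}(\Hb\mid\Vs,S=1)=\frac{Q[\Hb\cup\Vs\cup\{S\}]\big|_{S=1}}{\pro_{\Vns\setminus\Hb}(\Vs,S=1)},$$
where $Q[\cdot]$ is taken in $\Grs$. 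The crucial first observation is that the denominator does not depend on $\Hb$. Since $S$ has no children, $\Vs\cup\{S\}$ is an ancestral set in $\Grs$, and every variable in $\Vns$ is a non-ancestor of $\Vs\cup\{S\}$; intervening on the non-ancestors $\Vns\setminus\Hb$ therefore leaves the marginal of the ancestral set $\Vs\cup\{S\}$ unchanged, so $\pro_{\Vns\setminus\Hb}(\Vs,S=1)=\pro(\Vs,S=1)$ for every $\Hb$ (a fixed, positive quantity by the standing assumption $\prs(\Vb)>0$). I would justify the invariance either by Rule 3 of do-calculus — after deleting the incoming edges of $\Vns\setminus\Hb$, any active path to $\Vs\cup\{S\}$ would have to leave $\Vns\setminus\Hb$ along an outgoing arrow and hence either form a collider or trace a directed path into $\Vs\cup\{S\}$, both impossible — or directly from the truncated factorization restricted to the ancestral sub-SCM on $\Vs\cup\{S\}$.

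With this $\Hb$-independent normalizer factored out, proving the lemma reduces to the purely interventional identity
$$Q[\Wb'\cup\Vs\cup\{S\}]=\sum_{\Wb\setminus\Wb'}Q[\Wb\cup\Vs\cup\{S\}],$$
after dividing both sides by $\pro(\Vs,S=1)$ and fixing $S=1$ (the outer sum over $\Wb\setminus\Wb'$ commutes with setting $S=1$ because $S\notin\Wb$). This is exactly the classical $Q$-marginalization property \cite{tian2002general}, namely that $Q[\Tb']=\sum_{\Tb\setminus\Tb'}Q[\Tb]$ whenever $\Tb'\subseteq\Tb$ is ancestral in $\Grs[\Tb]$, applied with $\Tb=\Wb\cup\Vs\cup\{S\}$ and $\Tb'=\Wb'\cup\Vs\cup\{S\}$ so that $\Tb\setminus\Tb'=\Wb\setminus\Wb'$.

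The step I expect to be the main obstacle, and the only place the hypothesis is used, is verifying the ancestrality condition required to invoke that property: that $\Wb'\cup\Vs\cup\{S\}$ is ancestral in $\Grs[\Wb\cup\Vs\cup\{S\}]$. I would argue node by node. Ancestors, within this induced subgraph, of a node in $\Vs\cup\{S\}$ remain in the ancestral set $\Vs\cup\{S\}$. For a node $t\in\Wb'$, consider any directed path to $t$ inside $\Grs[\Wb\cup\Vs\cup\{S\}]$; since no $\Vns$-node is an ancestor of any $\Vs\cup\{S\}$-node, such a path can never cross from $\Wb\subseteq\Vns$ into $\Vs\cup\{S\}$, so it lies entirely within $\Grs[\Wb]$. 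Hence its source is an ancestor of $t$ in $\Grs[\Wb]$, and because $\Wb'$ is ancestral in $\Grs[\Wb]$ by hypothesis, that source lies in $\Wb'$. This establishes $\Anc{\Wb'\cup\Vs\cup\{S\}}{\Grs[\Wb\cup\Vs\cup\{S\}]}=\Wb'\cup\Vs\cup\{S\}$, completing the reduction.

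As an alternative to this black-box reduction, the same identity can be derived directly by an induction that peels off, one at a time, a leaf node of $\Grs[\Wb]$ lying in $\Wb\setminus\Wb'$, applying the truncated factorization of $\pro_{\Vns\setminus\Wb}$ at each step to convert marginalization of a childless node into an intervention on it; this route avoids citing the external lemma but essentially re-proves it.
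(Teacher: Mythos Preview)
Your proof is correct and follows essentially the same route as the paper: express $\Qs[\Hb]$ as $Q[\Hb\cup\Anc{S}{\Grs}]$ divided by an $\Hb$-independent normalizer (the paper packages this as Lemma~\ref{lemma: Qs-eq-q}), then invoke the classical $Q$-marginalization lemma after checking that $\Wb'\cup\Anc{S}{\Grs}$ is ancestral in $\Grs[\Wb\cup\Anc{S}{\Grs}]$. You are more explicit than the paper about the $S=1$ evaluation and about the node-by-node verification of ancestrality, but the underlying argument is the same.
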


    \begin{lemma} \label{lemma: qs-decom}
        Suppose $\Hb \subseteq \Vns$ and let $\Hb_1, \dots, \Hb_k$ denote the \scos of $\Hb$ in $\Grs$.
        Then,
        \begin{itemize}[leftmargin=*]
            \item $\Qs[\Hb]$ decomposes as
                \begin{equation}
                    \Qs[\Hb] = \Qs[\Hb_1]\Qs[\Hb_2]\dots \Qs[\Hb_k].
                \end{equation}
            \item Let $m$ be the number of variables in $\Hb$, and consider a topological ordering of  the variables in graph $\Grs[\Hb]$, denoted as $V_{h_1} < \dots < V_{h_m}$.
            Let $\Hb^{(0)} = \emptyset$ and for each $1 \leq i \leq  m$, $\Hb^{(i)}$ denote the set of variables in $\Hb$ ordered before $V_{h_i}$ (including $V_{h_i}$).
            For every $1\leq j \leq k$, $\Qs[\Hb_{j}]$ can be computed from $\Qs[\Hb]$ by   
            \begin{equation}
                \label{eq: qs_lemma_prod}    
                \Qs[\Hb_{j}] = \prod_{\{i \vert V_{h_{i}} \in \Hb_{j} \}} \frac{\Qs[\Hb^{(i)}]}{\Qs[\Hb^{(i-1)}]},
            \end{equation}
            where $\Qs[\Hb^{(i)}]$s can be computed by
            \begin{equation}    
                 \Qs[\Hb^{(i)}] = \mysum{\Hb \setminus \Hb^{(i)}}{} \Qs[\Hb].
            \end{equation}
        \end{itemize} 
    \end{lemma}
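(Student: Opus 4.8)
The plan is to transcribe the classical $Q$-decomposition argument of \cite{tian2002general} into the \s-setting, replacing ordinary \ccos by \scos and the interventional marginals $Q[\cdot]$ by $\Qs[\cdot]$. Two ingredients drive the proof: a telescoping chain-rule identity that reconstructs $\Qs[\Hb]$ from its prefix marginals, and a structural \emph{locality} fact stating that each incremental conditional factor involves only the variables lying in the same \sco as the variable being added. The marginalization substep is the easy part: because the prefix $\Hb^{(i)}$ of a topological ordering of $\Grs[\Hb]$ is ancestral in $\Grs[\Hb]$, applying Lemma \ref{lemma: qs-margin} with $\Wb = \Hb$ and $\Wb' = \Hb^{(i)}$ immediately gives $\Qs[\Hb^{(i)}] = \sum_{\Hb \setminus \Hb^{(i)}} \Qs[\Hb]$ (for $i < m$; the case $i = m$ is trivial).

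First I would establish the chain factorization
\begin{equation*}
\Qs[\Hb] = \prod_{i=1}^{m} \frac{\Qs[\Hb^{(i)}]}{\Qs[\Hb^{(i-1)}]},
\end{equation*}
which holds because $\Qs[\Hb^{(i)}]$ is the marginal of $\Qs[\Hb]$ onto $\Hb^{(i)}$, so each ratio is the conditional of $V_{h_i}$ given $\Hb^{(i-1)}$ under the joint $\Qs[\Hb]$, and the product of these conditionals telescopes back to $\Qs[\Hb]$ by the chain rule. Granting the locality fact, the factor indexed by $i$ depends only on those variables among $V_{h_1}, \dots, V_{h_i}$ that share the \sco of $V_{h_i}$. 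Partitioning the indices $\{1, \dots, m\}$ according to which \sco $\Hb_j$ contains $V_{h_i}$ and collecting the corresponding factors then yields both claims simultaneously: the per-component product
\begin{equation*}
\Qs[\Hb_{j}] = \prod_{\{i \mid V_{h_{i}} \in \Hb_{j}\}} \frac{\Qs[\Hb^{(i)}]}{\Qs[\Hb^{(i-1)}]}
\end{equation*}
is exactly the group of factors for $\Hb_j$ (verified by applying the same chain factorization to $\Hb_j$ under the restricted ordering), and multiplying the $k$ groups recovers $\Qs[\Hb] = \Qs[\Hb_1] \cdots \Qs[\Hb_k]$.

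The main obstacle is the locality fact, which is where the graph structure genuinely enters and where the \s-setting departs from \cite{tian2002general}. Here I must show that, after intervening on $\Vns \setminus \Hb$ and conditioning on $\Anc{S}{\Grs} \setminus \{S\}$ together with the event $S = 1$, variables of $\Hb$ lying in distinct \scos do not interact in the incremental conditionals. My plan is to reduce this to the classical \cco factorization applied inside $\Grs[\Hb \cup \Anc{S}{\Grs}]$: since the \scos of $\Hb$ are, by Definition \ref{def: s-comp}, the traces on $\Hb$ of the \ccos of this induced subgraph, the bidirected connectivity that couples variables of $\Hb$ is precisely captured by the \scos once the ancestors of $S$ are held fixed. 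The delicate bookkeeping is to track which vertices are \emph{intervened upon} (those in $\Vns \setminus \Hb$) versus \emph{conditioned upon} (those in $\Anc{S}{\Grs} \setminus \{S\}$, plus $S = 1$), and to verify via the Markov factorization of the augmented SCM $\M^{\s}$ that pushing the conditioning on the ancestors of $S$ through the post-interventional distribution leaves each \sco's contribution a self-contained factor. I expect this step to require an auxiliary factorization of $\Qs[\Vns]$ over the \scos of $\Vns$ (the \s-analogue of $\pro(\Vb) = \prod_i Q[\Cb_i]$), from which the general case follows by the marginalization of Lemma \ref{lemma: qs-margin}.
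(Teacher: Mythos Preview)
Your plan lands in the right place—reducing to Tian's classical decomposition inside $\Grs[\Hb \cup \Anc{S}{\Grs}]$—but the paper's route is more direct and bypasses the need to isolate your ``locality fact'' as a freestanding claim. The paper first records (as a separate preliminary lemma) the identity
\[
\Qs[\Hb'] \;=\; \frac{Q[\Hb' \cup \Anc{S}{\Grs}]}{Q[\Anc{S}{\Grs}]}\qquad(\Hb' \subseteq \Vns),
\]
which holds because $\Anc{S}{\Grs}$ is ancestral, so intervening on $\Vns \setminus \Hb'$ leaves its distribution unchanged. With this translation, your ratio $\Qs[\Hb^{(i)}]/\Qs[\Hb^{(i-1)}]$ becomes literally $Q[\Cb^{(n'+i)}]/Q[\Cb^{(n'+i-1)}]$, where $\Cb \coloneqq \Hb \cup \Anc{S}{\Grs}$ is ordered with all of $\Anc{S}{\Grs}$ preceding all of $\Hb$. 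Lemma~\ref{lemma:q_decom} then gives the product formula for each \cco $\Cb_j$ of $\Cb$; the paper splits the product into the $\Anc{S}{}$-indexed factors (which multiply to $Q[\Cb_j \setminus \Hb_j]$, since $\Cb_j \setminus \Hb_j$ is a union of \ccos of $\Anc{S}{\Grs}$) and the $\Hb$-indexed factors, and reads off $\Qs[\Hb_j] = Q[\Cb_j]/Q[\Cb_j \setminus \Hb_j]$.

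Your sketch of the locality step—``pushing the conditioning on the ancestors of $S$ through the post-interventional distribution''—is pointing at exactly this translation identity, but until you state it the argument remains underspecified. In particular, your verification ``by applying the same chain factorization to $\Hb_j$ under the restricted ordering'' does not by itself close the gap: the ratios $\Qs[\Hb^{(i)}]/\Qs[\Hb^{(i-1)}]$ are built from prefixes of \emph{all} of $\Hb$, not from prefixes of $\Hb_j$, so you must still show those ratios are unaffected by the presence of variables from other \scos—and that is precisely what the $Q$-translation plus Tian's classical locality delivers. The other organizational difference is that the paper proves the decomposition $\Qs[\Hb] = \prod_j \Qs[\Hb_j]$ first and independently (via the same translation together with $Q[\Hb \cup \Anc{S}{}] = Q[\Cb_{k+1}]\prod_j Q[\Cb_j]$ and the matching factorization of $Q[\Anc{S}{}]$), and then invokes it when proving the product formula, whereas you derive both parts simultaneously from the locality fact.
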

    
    The aforementioned lemmas are extensions of similar lemmas for $Q[\cdot]$ \cite{tian2003ID} to  $\Qs[\cdot]$.

    \begin{example} \label{ex: ex5}
        Following Example \ref{ex: ex4}, since $\{Y_1\}$ is ancestral in $\Grs[Y_1, Y_2]$, Lemma \ref{lemma: qs-margin} implies that $\Qs[Y_1] = \sum_{Y_2}\Qs[Y_1, Y_2]$.
        Furthermore, since the \scos of $\Vns$ are $\{X_1, X_2\}$ and $\{Y_1, Y_2\}$, Lemma \ref{lemma: qs-decom} implies that $\Qs[Y_1, Y_2] = \frac{\Qs[\Vns]}{\sum_{Y_1, Y_2} \Qs[\Vns]}$. Thus, $\Qs[Y_1]$ can be computed from $\Qs[\Vns]$.
    \end{example}
    
    Finally, we define \s-Hedges, which extends Definition \ref{def: hedge} for Hedges.

    \begin{definition}[\s-Hedge] \label{def: s-Hedge}
        Suppose $\Yb \subseteq \Vns$ is a single \sco in $\Grs$.
        A subset $\Hb \subseteq \Vns$ is called an \s-Hedge for $\Yb$ in $\Grs$, if $\Yb \subsetneq \Hb$, $\Hb$ is a single \sco in $\Grs$, and $\Hb = \Anc{\Yb}{\Grs[\Hb]}$.
    \end{definition}

     When $\Hb \subseteq \Vns$ is a single c-component, it is also a single \s-component.
     Therefore, if $\Hb$ is a Hedge for $\Yb$, it will also be an \s-Hedge for $\Yb$.
     Thus, Hedges can be seen as special cases of \s-Hedges when $\Hb \subseteq \Vns$.
     Figure \ref{fig: hedge} shows the structure of $\Hb$, which is a single c-component and forms a Hedge for $\Yb$.
     Moreover, Figure \ref{fig: s-hedge} presents the structure of an \s-hedge $\Hb$ for $\Yb$.
     Note that \s-hedges are more complex graph structures compared to Hedges.
     This complexity is required for us to be able to determine whether a causal effect is \s-ID.

    \begin{example} \label{ex: ex6}
        Following Examples \ref{ex: ex4} and \ref{ex: ex5}, $\{X_1, X_2\}$ is an \s-Hedge for $\{X_2\}$, because both $\{X_1, X_2\}$ and $\{X_2\}$ are single \scos and $\{X_1, X_2\} = \Anc{X_2}{\Grs[X_1, X_2]}$.
        Similarly, $\{Y_1, Y_2\}$ is an \s-Hedge for $\{Y_2\}$.
    \end{example}

% % ------------------------------------------------------------------------

\section{Main Results} \label{sec: main}
    In this section, we provide a sufficient graphical condition for a causal effect to be \s-ID in an ADMG.
    This extends the condition presented in \cite{abouei2023sid}, which assumes that the causal graph is a DAG.
    Accordingly, we propose a sound algorithm for the \s-ID problem in the presence of latent variables.

    Recall that $\Grs$ is an augmented ADMG over the set observed variables $\Vb$ and auxiliary variable $S$, and we defined $\Vs = \Vb \cap \Anc{S}{\Grs}$ and $\Vns = \Vb \setminus \Anc{S}{\Grs}$.

    \begin{theorem} \label{thm: main}
        For disjoint subsets $\Xb$ and $\Yb$ of $\Vb$, let $\Xs \coloneqq \Xb \cap \Vs$, $\Xns \coloneqq \Xb \cap \Vns$, and $\Yns \coloneqq \Yb \cap \Vns$.
        \begin{enumerate}[leftmargin =*]
            \item Conditional causal effect $\prs_{\Xb}(\Yb)$ is \s-ID in $\Grs$ if and only if
            \begin{equation} \label{eq: main thm}
                (\Xs \independent \Yb \vert \Xns, S)_{\Grs_{\underline{\Xs}\overline{\Xns}}},
            \end{equation}
            and $\prs_{\Xns}(\Yb, \Xs)$ is \s-ID in $\Grs$.
            \item Suppose $\Db \coloneqq \Anc{\Yns}{\Grs[\Vns \setminus \Xns]}$ and let $\{\Db_i\}_{i=1}^k$ denote the \scos of $\Db$ in $\Grs$.
            Conditional causal effect $\prs_{\Xns}(\Yb, \Xs)$ is \s-ID in $\Grs$ if there are no \s-Hedge in $\Grs$ for any of $\{\Db_i\}_{i=1}^k$ .
        \end{enumerate}
    \end{theorem}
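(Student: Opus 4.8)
The overall plan is to split the argument along the two parts: Part 1 peels the intervention on the $S$-ancestors $\Xs$ off the target effect, and Part 2 identifies the residual effect on the $S$-non-ancestors by an ID-style recursion in the $\Qs$-calculus. For the ``if'' direction of Part 1, I would read the separation in \eqref{eq: main thm} as the graphical precondition of Rule 2 of do-calculus, applied in $\Grs$ with the intervention on $\Xns$ held fixed and $S$ placed in the conditioning set, to exchange the intervention on $\Xs$ for plain conditioning:
\begin{equation*}
    \prs_{\Xb}(\Yb) = \pro_{\Xns, \Xs}(\Yb \vert S=1) = \pro_{\Xns}(\Yb \vert \Xs, S=1) = \frac{\prs_{\Xns}(\Yb, \Xs)}{\sum_{\Yb}\prs_{\Xns}(\Yb, \Xs)}.
\end{equation*}
This exhibits $\prs_{\Xb}(\Yb)$ as a fixed functional of $\prs_{\Xns}(\Yb, \Xs)$, so \s-ID of the latter yields \s-ID of the former.

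For the ``only if'' direction I would treat the two requirements separately, relying on the auxiliary fact that no variable of $\Xns \subseteq \Vns$ has a descendant in $\Vs \cup \{S\}$; consequently $do(\Xns)$ leaves the joint law of $\Vs \cup \{S\}$ unchanged, and $\prs_{\Xns}(\Xs) = \prs(\Xs)$ is identifiable from $\prs(\Vb)$. Granting \eqref{eq: main thm}, the displayed identity rearranges to $\prs_{\Xns}(\Yb, \Xs) = \prs_{\Xb}(\Yb)\,\prs(\Xs)$, so \s-ID of $\prs_{\Xb}(\Yb)$ forces \s-ID of $\prs_{\Xns}(\Yb, \Xs)$. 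The harder obligation is the necessity of \eqref{eq: main thm}: assuming an $m$-connecting path between $\Xs$ and $\Yb$ in $\Grs_{\underline{\Xs}\overline{\Xns}}$ given $\{\Xns, S\}$, I would construct two augmented SCMs on $\Grs$ that agree on $\prs(\Vb)$ yet disagree on $\prs_{\Xb}(\Yb)$ by routing a parity-type discrepancy along that active path. This completeness-style counterexample is the main obstacle of Part 1.

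For Part 2, the plan is to reduce identification of $\prs_{\Xns}(\Yb, \Xs)$ first to that of $\Qs[\Db]$ and then to each of its \scos. Using once more that $do(\Xns)$ fixes $\Vs \cup \{S\}$, I would factor
\begin{equation*}
    \prs_{\Xns}(\Yb, \Xs) = \sum_{\Vs \setminus (\Ys \cup \Xs)} \prs(\Vs)\, \pro_{\Xns}(\Yns \vert \Vs, S=1),
\end{equation*}
and then apply Lemma \ref{lemma: qs-margin} with $\Wb = \Vns \setminus \Xns$ and $\Wb' = \Db = \Anc{\Yns}{\Grs[\Vns \setminus \Xns]}$, which is ancestral in $\Grs[\Wb]$ and satisfies $\Qs[\Wb] = \pro_{\Xns}(\Wb \vert \Vs, S=1)$, to obtain $\pro_{\Xns}(\Yns \vert \Vs, S=1) = \sum_{\Db \setminus \Yns} \Qs[\Db]$. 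Since $\prs(\Vs)$ is a marginal of $\prs(\Vb)$, it remains only to identify $\Qs[\Db]$, and Lemma \ref{lemma: qs-decom} factors $\Qs[\Db] = \prod_{i=1}^{k} \Qs[\Db_i]$ over its \scos.

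The central step is to identify each $\Qs[\Db_i]$ from $\prs(\Vb)$ under the no-\s-Hedge hypothesis. I would start from $\Qs[\Vns] = \pro(\Vns \vert \Vs, S=1)$, a direct functional of $\prs(\Vb)$, and mimic Tian's identification recursion in the $\Qs$-calculus: pass from $\Qs[\Cb]$ to $\Qs[\Anc{\Db_i}{\Grs[\Cb]}]$ via Lemma \ref{lemma: qs-margin}, split along \scos via Lemma \ref{lemma: qs-decom}, and retain the single \sco containing $\Db_i$, shrinking $\Cb$ toward $\Db_i$. The crux, proved by induction on $|\Cb|$, is that this recursion can only stall when the current single \sco $\Cb$ strictly contains $\Db_i$ and satisfies $\Cb = \Anc{\Db_i}{\Grs[\Cb]}$ --- precisely the definition of an \s-Hedge for $\Db_i$. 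Ruling out \s-Hedges therefore forces termination with $\Qs[\Db_i]$ expressed through $\prs(\Vb)$. Faithfully transporting Tian's c-component correctness argument through the asymmetry that \scos are defined by bidirected connectivity in $\Grs[\,\cdot \cup \Vs]$ while \s-Hedges use directed ancestry in $\Grs[\,\cdot\,]$ is the main obstacle of Part 2.
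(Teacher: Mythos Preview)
Your proposal is correct and tracks the paper's proof essentially line for line: Rule 2 to peel off $\Xs$, Rule 3 (your ``auxiliary fact'') to turn $\prs_{\Xns}(\Xs)$ into $\prs(\Xs)$, the factorization of $\prs_{\Xns}(\Yb,\Xs)$ through $\prs(\Vs)$ and $\pro_{\Xns}(\Yns \mid \Vs, S{=}1)$, reduction to $\Qs[\Db]=\prod_i \Qs[\Db_i]$ via Lemmas \ref{lemma: qs-margin}--\ref{lemma: qs-decom}, and the Tian-style recursion that stalls precisely at an \s-Hedge. The one place the paper diverges from your plan is the necessity of \eqref{eq: main thm}: rather than building the parity counterexample from scratch, the paper passes to the equivalent DAG over $\Vb\cup\Ub\cup\{S\}$, invokes Theorem~2 of \cite{abouei2023sid} (the causally-sufficient \s-ID result) to obtain two SEMs agreeing on $\pro(\Vb,\Ub\mid S{=}1)$ but disagreeing on $\pro_{\Xb}(\Yb\mid S{=}1)$, and then marginalizes out $\Ub$---so you can shortcut your ``main obstacle of Part~1'' by citing that result instead of reconstructing it.
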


    \begin{remark}
        If either $\Xns$ or $\Yns$ is an empty set, then $\prs_{\Xb}(\Yb)$ is \s-ID in $\Grs$ if and only if Equation \eqref{eq: main thm} holds.
    \end{remark}

    In the absence of latent variables, i.e., when $\Grs$ is a directed acyclic graph (DAG), there are no \s-Hedge in $\Grs$ since all the edges are directed.
    Therefore, Theorem \ref{thm: main} states that $\prs_{\Xns}(\Yb, \Xs)$ is always \s-ID, and $\prs_{\Xb}(\Yb)$ is \s-ID in $\Grs$ if and only if Equation \eqref{eq: main thm} holds.
    We note that this is consistent with the condition presented in \cite[Theorem 2]{abouei2023sid} for the \s-ID problem in the absence of latent variables.

   \begin{example} \label{ex: ex7}
        Consider again ADMG $\Grs$ in Figure \ref{fig: ex-s-ID}, where we want to determine whether $\prs_{\{X_1, X_2, Z_1\}}(Y_1, Y_2)$ is \s-ID in $\Grs$.
        In this case, $\Xs = \{Z_1\}$, $\Xns = \{X_1, X_2\}$, $\Yns = \{Y_1, Y_2\}$, and
        $
            (Z_1 \independent \{Y_1, Y_2\} \vert X_1, X_2, S)_{\Grs_{\underline{Z_1}\overline{X_1, X_2}}}.
        $
        This shows that Equation \eqref{eq: main thm} holds.
        Hence, we need to determine whether $\prs_{X_1, X_2}(Y_1, Y_2, Z_1)$ is \s-ID in $\Grs$.
        In this case, $\Db = \Anc{Y_1, Y_2}{\Grs[Y_1, Y_2]} = \{Y_1, Y_2\}$, which is a single \sco.
        Since there exists no \s-Hedge for $\{Y_1, Y_2\}$, Theorem \ref{thm: main} implies that $\prs_{\{X_1, X_2, Z_1\}}(Y_1, Y_2)$ is \s-ID in $\Grs$.
   \end{example}

    \noindent \textbf{Algorithm for \s-ID.} \label{subsec: alg-sid}
    So far, we have presented a graphical condition to determine whether a causal effect is \s-ID.
    In this section, we propose a recursive algorithm that returns an expression for $\prs_{\Xb}(\Yb)$ in terms of $\prs(\Vb)$ when the condition of Theorem \ref{thm: main} holds, and otherwise, returns \textsc{Fail}.

    In the proof of Theorem \ref{thm: main} presented in the appendix, we show that when Equation \eqref{eq: main thm} holds, then
    \begin{equation} \label{eq: expression 1}
         \prs_{\Xb}(\Yb)
         = \sum_{\Wb} \prs(\Ys, \Wb \vert \Xs) \prs_{\Xns}(\Yns \vert \Vs),
    \end{equation}
    where $\Wb = \Vs \setminus (\Xs \cup \Ys)$.
    Thus, it suffices to find an expression for $\prs_{\Xns}(\Yns \vert \Vs)$ in terms of $\prs(\Vb)$.
    Let $\Db = \Anc{\Yns}{\Grs[\Vns \setminus \Xns]}$ and $\{\Db_{i}\}_{i = 1}^{k}$ be the \scos of $\Db$ in $\Grs$.
    From Lemmas \ref{lemma: qs-margin} and \ref{lemma: qs-decom} we have
    \begin{equation} \label{eq: expression 2}
        \prs_{\Xns}(\Yns \vert \Vs) = \sum_{\Db \setminus \Yns} \prod\limits_{i}\Qs[\Db_i].
    \end{equation}
    Therefore, it suffices to find an expression for each $\Db_i$ in terms of $\prs(\Vb)$.
    Note that $\Db_i$ is a single \sco in $\Grs$.
    We can now propose Algorithm \ref{algo: s-ID} for computing $\prs_{\Xb}(\Yb)$ from $\prs(\Vb)$.

    \begin{algorithm}[t]
         \caption{Computing $\interSbp$ from $\prs(\Vb)$}
         \label{algo: s-ID}
         % \small
        \begin{minipage}[t]{0.5\textwidth}
            \begin{algorithmic}[1]
                \STATE \textbf{Function sID}($\Xb, \Yb, \Grs, \prs(\Vb)$)
                \STATE \textbf{Output:} Expression for $\prs_{\Xb}(\Yb)$ in terms of $\prs$ or \textsc{Fail}
        
                \STATE $\Vs \gets \Vb \cap \Anc{S}{\Grs},  \Vns \gets \Vb \setminus \Anc{S}{\Grs}$
                \STATE $\Xs \gets \Xb \cap \Vs, \quad \Xns \gets \Xb \cap \Vns$
                \STATE $\Ys \gets \Yb \cap \Vs, \quad \Yns \gets \Yb \cap \Vns$
                \IF{$(\Xs \notindependent \Yb \vert \Xns, S)_{\Grs_{\underline{\Xs}\overline{\Xns}}}$}
                    \STATE \textbf{Return} \textsc{Fail}
                \ENDIF
                \STATE $\mathbf{\Db} \gets \Anc{\Yns}{\Grs[\Vns \setminus \Xns]}$
                \STATE $\{\Db_1,\dots,\Db_k\} \gets$ \scos of $\Db$ in $\Grs$ 
                \FOR{$i$ in $[1:k]$} 
                    \STATE $\Tb_i \gets$ The \sco of $\Vns$ that contains $\Db_i$
                    \STATE Compute $\Qs[\Tb_{i}]$ using Lemma \ref{lemma: qs-decom}
                    \STATE $\Qs[\Db_i] \gets$ \textbf{sID-Single}$(\Db_i, \Tb_i, \Qs[\Tb_i])$
                    \IF{$\Qs[\Db_i]$ = \textsc{Fail}}
                        \STATE \textbf{Return} \textsc{Fail}   
                    \ENDIF
                \ENDFOR
                \STATE $\Wb \gets \Vs \setminus (\Xs \cup \Ys)$ 
                \STATE \textbf{Return} $\mysum{\Wb}{} \prs(\Ys, \Wb \vert \Xs)\mysum{\Db \setminus \Yns}{}  \prod\limits_{i} \Qs[\Db_i]$
            \end{algorithmic}  
        \end{minipage}
        \vrule 
        \hfill
        \begin{minipage}[t]{0.5\textwidth}
            \small
            \begin{algorithmic}[1]
                \STATE \textbf{Function sID-Single}($\Cb, \Tb, \Qs[\Tb]$)
                \STATE \textbf{Input}: Two single \scos $\Cb$ and $\Tb$ in $\Grs$ such that $\Cb \subseteq \Tb$
                \STATE \textbf{Output:} Expression for $\Qs[\Cb]$ in terms of $\Qs[\Tb]$ or \textsc{Fail}
                \STATE $\Ab \gets \Anc{\Cb}{\Grs[\Tb]}$
                \STATE \textbf{if} {$\Ab = \Cb$}: \textbf{Return} $\sum_{\Tb \setminus \Cb} \Qs[\Tb]$
                \STATE \textbf{if} {$\Ab = \Tb$}: \textbf{Return} \textsc{Fail}
                \IF{$\Cb \subsetneq \Ab \subsetneq \Tb,$}
                    \STATE $\Tb'$ $\gets$ The \sco of $\Ab$ in $\Grs$ that contains $\Cb$
                    \STATE Compute $\Qs[\Tb']$ from $\Qs[\Tb]$ using Lemma \ref{lemma: qs-decom}
                    \STATE \textbf{Return} \textbf{sID-Single}($\Cb$, $\Tb'$, $\Qs[\Tb']$)
                \ENDIF
            \end{algorithmic}
        \end{minipage}
    \end{algorithm}
    
    Function \textbf{sID} takes disjoint subsets $\Xb$ and $\Yb$ of $\Vb$ along with an augmented ADMG $\Grs$ and conditional distribution $\prs(\Vb)$ as input.
    After defining the required notations in lines 3-5, it checks Equation \eqref{eq: main thm} in line 6.
    If this condition is met, it defines $\Db$ and its \scos $\{\Db_i\}_{i=1}^k$ in $\Grs$.
    For each $1 \leq i \leq k$, it finds the corresponding \sco $\Tb_i$ of $\Vns$ in $\Grs$ that contains $\Db_i$.
    Note that $\Tb_i$ is well-defined as $\Db_i$ cannot partially intersect with the \scos of $\Vns$ in $\Grs$.
    Next, the algorithm seeks to compute $\Qs[\Db_i]$ from $\Qs[\Tb_i]$ by calling Function \textbf{sID-Single}.
    If Function \textbf{sID-Single} succeeds in returning an expression for each $i$, then the algorithm uses Equations \eqref{eq: expression 1} and \eqref{eq: expression 2} to return an expression for $\prs_{\Xb}(\Yb)$ in terms of $\prs(\Yb)$.
    Otherwise, the algorithm returns \textsc{Fail}.

    Function \textbf{sID-Single} takes two single \scos $\Cb$ and $\Tb$ in $\Grs$ such that $\Cb \subseteq \Tb$ and aims to drive an expression for $\Qs[\Cb]$ in terms of $\Qs[\Tb]$.
    The procedure is recursive and uses Lemmas \ref{lemma: qs-margin} and \ref{lemma: qs-decom}.
    In each recursion, the algorithm reduces $\Tb$ to a smaller subset $\Tb'$ such that $\Tb'$ is still a single \sco in $\Grs$ and $\Cb \subseteq \Tb'$ (lines 7-10).
    Eventually, the function either returns \textsc{Fail} or an expression for $\Qs[\Cb]$.

    \begin{example} \label{ex: ex8}
        Consider again ADMG $\Grs$ depicted in Figure \ref{fig: ex-s-ID}, where we want to apply Algorithm \ref{algo: s-ID} for causal effect $\prs_{X_2}(Y_2)$.
        Herein, $\Xns = \{X_2\}$, $\Yns = \{Y_2\}$, and $\Xs = \Ys = \varnothing$.
        Function \textbf{\s-ID} passes the condition in line 6 and defines $\Db = \{X_1, Y_1, Y_2\}$, leading to $\Db_{1} = \{X_1\}$ and $\Db_{2} = \{Y_1, Y_2\}$.
        It then defines $\Tb_i$'s in line 12 as $\Tb_{1} = \{X_1, X_2\}$ and $\Tb_{2} = \{Y_1, Y_2\}$.
        In line 13, it uses Lemma \ref{lemma: qs-decom} to compute $\Qs[\Tb_1] = \sum_{Y_1, Y_2} \Qs[\Vns]$ and $\Qs[\Tb_2] = \frac{\Qs[\Vns]}{\sum_{Y_1, Y_2} \Qs[\Vns]}$ (See Example \ref{ex: ex5}).
        It then calls Function \textbf{sID-Single}, which returns $\Qs[\Db_1] = \sum_{X_2} \Qs[\Tb_1]$ and $\Qs[\Db_2] = \Qs[\Tb_2]$.
        Finally, in line 20, the function returns
        \begin{equation*}
             \prs_{X_2}(Y_2) 
             = \sum_{Z_1, Z_2} \prs(Z_1, Z_2) \sum_{X_1,Y_1} \Qs[X_1]\Qs[Y_1, Y_2],
        \end{equation*}
        where $\Qs[X_1] = \prs(X_1 \vert Z_1, Z_2)$ and $\Qs[Y_1, Y_2] = \prs(Y_1, Y_2 \vert X_1, X_2, Z_1, Z_2)$.
    \end{example}

    \begin{example} \label{ex: ex9}
        Following the previous example, suppose we want to apply Algorithm \ref{algo: s-ID} for computing causal effect $\prs_{X_1}(Y_1, Y_2)$.
        In this case, the algorithm needs to compute $\Qs[Y_1, Y_2]$ and $\Qs[X_2]$.
        However, when the algorithm calls \textbf{sID-Single} $(X_2, \{X_1, X_2\}, \Qs[X_1, X_2])$, Function \textbf{sID-Single} returns \textsc{Fail}.
        Accordingly, Function \textbf{sID} returns \textsc{Fail} for $\prs_{X_1}(Y_1, Y_2)$.
    \end{example}
      
    \begin{remark}
        Algorithm \ref{algo: s-ID} is sound for the \s-ID problem in the presence of latent variables.
        We conjecture that this algorithm is also \emph{complete}, meaning that whenever it returns \textsc{Fail}, the corresponding causal effect is not \s-ID.
    \end{remark}

\section{Reduction from \s-Recoverability to \s-ID} \label{sec: reduction}
    
    Recall that the objective in \s-Recoverability is to compute $\pro_{\Xb}(\Yb)$ from $\prs(\Vb)$ \cite{Bareinboim_Tian_Pearl_2014}, while \s-ID aims to compute $\prs_{\Xb}(\Yb)$ from $\prs(\Vb)$.
    \cite{Bareinboim_Tian_2015} proposed RC, a sound algorithm for the \s-Recoverability problem.
    Subsequently, \cite{Correa_Tian_Bareinboim_2019} proved that RC is complete.
    In this section, we present a reduction from the \s-Recoverability problem to the \s-ID problem.
    This indicates that solving \s-ID can solve the \s-Recoverability problem (but not the other way around).
    
    \begin{theorem} \label{thm: reduction}
        For disjoint subsets $\Xb$ and $\Yb$ of $\Vb$, $\pro_{\Xb}(\Yb)$ can be uniquely computed from $\prs(\Vb)$ in the augmented ADMG $\Grs$ if and only if 
        \begin{equation} \label{eq: reduction}
            (\Yb \independent S \vert \Xb)_{\Grs_{\overline{\Xb}}},
        \end{equation}
        and $\prs_{\Xb}(\Yb)$ is \s-ID in $\Grs$.
    \end{theorem}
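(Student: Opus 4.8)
The plan is to make both directions hinge on a single identity supplied by the first rule of do-calculus. Working in the augmented SCM with causal graph $\Grs$, note that $S$ is childless, so I may apply Rule 1 with the singleton $\{S\}$ in the role of the conditioned variable: whenever $(\Yb \independent S \vert \Xb)_{\Grs_{\overline{\Xb}}}$ holds, it yields $\pro_{\Xb}(\Yb \vert S=1) = \pro_{\Xb}(\Yb)$, that is, $\prs_{\Xb}(\Yb) = \pro_{\Xb}(\Yb)$. I will refer to this implication, which holds simultaneously in \emph{every} SCM sharing the graph $\Grs$, as the soundness fact $(\star)$. Everything else is bookkeeping around $(\star)$ together with Definition \ref{def:S-ID} and the definition of \s-Recoverability.

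The ``if'' direction is then immediate. Assume \eqref{eq: reduction} and that $\prs_{\Xb}(\Yb)$ is \s-ID. Take any two augmented SCMs $\M_1^{\s}, \M_2^{\s}$ with graph $\Grs$ and $\pro^{\M_1^{\s}}(\Vb \vert S=1) = \pro^{\M_2^{\s}}(\Vb \vert S=1) > 0$. By \s-ID we get $\prs^{\M_1^{\s}}_{\Xb}(\Yb) = \prs^{\M_2^{\s}}_{\Xb}(\Yb)$, and by $(\star)$ applied to each model, $\pro^{\M_i^{\s}}_{\Xb}(\Yb) = \prs^{\M_i^{\s}}_{\Xb}(\Yb)$ for $i \in \{1,2\}$; chaining these gives $\pro^{\M_1^{\s}}_{\Xb}(\Yb) = \pro^{\M_2^{\s}}_{\Xb}(\Yb)$, which is exactly \s-Recoverability of $\pro_{\Xb}(\Yb)$.

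For the ``only if'' direction, suppose $\pro_{\Xb}(\Yb)$ is \s-Recoverable. I would split the argument in two. First, \emph{necessity of \eqref{eq: reduction}}: I prove its contrapositive, constructing two models that agree on the observable sub-population distribution but disagree on the target full-population effect (the technical crux, discussed below). Second, \emph{deriving \s-ID}: once \eqref{eq: reduction} is known to hold, $(\star)$ gives $\pro^{\M^{\s}}_{\Xb}(\Yb) = \prs^{\M^{\s}}_{\Xb}(\Yb)$ in every model, so \s-Recoverability transfers verbatim to \s-ID — any two models with identical $\prs(\Vb)$ have identical $\pro_{\Xb}(\Yb)$, hence identical $\prs_{\Xb}(\Yb)$, which is the definition of \s-ID.

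The main obstacle is the necessity of \eqref{eq: reduction}. The plan is to assume $(\Yb \notindependent S \vert \Xb)_{\Grs_{\overline{\Xb}}}$ and exhibit augmented SCMs $\M_1^{\s}, \M_2^{\s}$ with the same graph $\Grs$ and the same sub-population distribution $\prs(\Vb) > 0$, yet with $\pro^{\M_1^{\s}}_{\Xb}(\Yb) \neq \pro^{\M_2^{\s}}_{\Xb}(\Yb)$, contradicting \s-Recoverability. The guiding decomposition is $\pro_{\Xb}(\Yb) = \pro_{\Xb}(\Yb \vert S=1)\,\pro_{\Xb}(S=1) + \pro_{\Xb}(\Yb \vert S=0)\,\pro_{\Xb}(S=0)$, in which only the first stratum is tied to the observable $\prs(\Vb)$ (via $\pro_{\Xb}(\Yb \vert S=1) = \prs_{\Xb}(\Yb)$). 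Using an active path between $\Yb$ and $S$ in the mutilated graph $\Grs_{\overline{\Xb}}$ to witness the dependence, I would build mechanisms and latent distributions that reproduce one fixed $S=1$ conditional while letting the $S=0$ stratum (and the mixing weight $\pro_{\Xb}(S=0)$) vary between the two models; since the dependence forces $\pro_{\Xb}(\Yb \vert S=0)$ to be genuinely tunable relative to $\pro_{\Xb}(\Yb \vert S=1)$, the full-population effect changes while the observable sub-population distribution does not. Carrying out this decoupling explicitly — realizing identical $S=1$ conditionals but distinct $S=0$ strata while respecting every edge and, crucially, every non-edge of $\Grs$ (and preserving positivity) — is where I expect the bulk of the work to lie.
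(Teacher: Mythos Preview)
Your use of Rule~1 of do-calculus to obtain the identity $\prs_{\Xb}(\Yb)=\pro_{\Xb}(\Yb)$ under \eqref{eq: reduction}, and the way you route both the ``if'' direction and the transfer from \s-Recoverability to \s-ID through that identity, is exactly what the paper does. The one point of divergence is the necessity of \eqref{eq: reduction}: the paper does not construct the two witnessing SCMs at all but simply invokes \cite[Theorem~2]{Bareinboim_Tian_2015}, which already states that $(\Yb \notindependent S \mid \Xb)_{\Grs_{\overline{\Xb}}}$ implies that $\pro_{\Xb}(\Yb)$ is not \s-Recoverable from $\prs(\Vb)$. Your plan to reproduce this via the stratum decomposition $\pro_{\Xb}(\Yb)=\sum_{s}\pro_{\Xb}(\Yb\mid S=s)\pro_{\Xb}(S=s)$, holding the $S=1$ slice fixed and perturbing along an active path, is a sensible blueprint and would make the argument self-contained; but be aware that the actual construction (in \cite{Bareinboim_Tian_2015}) is delicate, since you must simultaneously match $\pro(\Vb\mid S=1)$ exactly, respect every non-edge of $\Grs$, preserve positivity, and still move $\pro_{\Xb}(\Yb)$. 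Unless self-containment is a requirement, citing the existing result as the paper does is the cleaner route.
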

    \begin{remark} \label{remark: reduction}
        Equation \eqref{eq: reduction} is a very restrictive condition, and when it holds, Rule 1 of do-calculus implies that $\pro_{\Xb}(\Yb) = \prs_{\Xb}(\Yb)$.
    \end{remark}

    Theorem \ref{thm: reduction} implies that when Equation \eqref{eq: reduction} does not hold, then $\pro_{\Xb}(\Yb)$ is not \s-Recoverable.
    However, this causal effect might be identifiable in the target sub-population, i.e., $\prs_{\Xb}(\Yb)$ is \s-ID.   
    
    As a consequence of Theorem \ref{thm: reduction}, we propose Algorithm \ref{algo: reduction} for computing $\pro_{\Xb}(\Yb)$ from $\prs(\Vb)$.
    The algorithm takes as input two disjoint subsets $\Xb$ and $\Yb$ of $\Vb$ along with an augmented ADMG over $\Vb \cup \{S\}$ and the conditional distribution $\prs(\Vb)$.
    It first checks Equation \eqref{eq: reduction} in line 3, and then calls Algorithm \ref{algo: s-ID} as a subroutine to compute $\prs_{\Xb}(\Yb)$ from $\prs(\Vb)$ when it is \s-ID in $\Grs$.

    \begin{figure}[t]
         \begin{minipage}[t]{0.45\textwidth}
            \begin{algorithm}[H]
               \caption{\\ Reduction from \s-Recoverability to \s-ID}
               \label{algo: reduction}
                % \small
                \begin{algorithmic}[1]
                    \STATE \textbf{Input:} $\Xb, \Yb, \Grs, \prs$
                    \STATE \textbf{Output:} Expression for $\pro_{\Xb}(\Yb)$ in terms of $\prs$ or \textsc{Fail}
                    \IF{$(\Yb \notindependent S \vert \Xb)_{\Grs_{\overline{\Xb}}}$}
                        \STATE \textbf{Return} \textsc{Fail}
                    \ELSE
                        \STATE \textbf{Return  sID}$(\Xb, \Yb, \Grs, \prs)$
                    \ENDIF
                \end{algorithmic}
            \end{algorithm}
        \end{minipage}
        \hfill
        \begin{minipage}[t]{0.6\textwidth}
            \begin{figure}[H]
                \centering
                \begin{tikzpicture}
                    % vertices
                    \node [block, label=center:$X_1$](X1) {};
                    \node [block, below = 1.2 of X1, label=center:$X_2$](X2) {};
                    \node [block, right= 1.2 of X2, label=center:$Z_2$](Z2) {};
                    \node [block, right= 1.2 of X1, label=center:$Z_1$](Z1) {};
                    \node [block, left = 1.2 of X2, label=center:$Y$](Y) {};
                    \node [block-s, right = 1.2 of Z2, label=center:S](S) {};
                    % edges
                    \draw[edge] (X1) to (X2);
                    \draw[edge] (X2) to (Y);
                    \draw[edge] (Z1) to (Z2);
                    \draw[edge] (Z2) to (S);
            
                    \draw[edge2] (Z2) to  (X2);
                    \draw[edge2] (X1) to [bend right = 30]  (Y);
                    \draw[edge2] (Z1) to  (X1);
                \end{tikzpicture}
                \caption{The augmented ADMG $\Grs$ in Example \ref{ex: ex10}.}
                \label{fig: reduction}
            \end{figure}
        \end{minipage}
     \end{figure}

    \begin{example} \label{ex: ex10}
        Consider the augmented ADMG $\Grs$ in Figure \ref{fig: reduction}.
        In this graph, $(Y \notindependent S \vert X_1)_{\Grs_{\overline{X_1}}}$, thus, Theorem \ref{thm: reduction} implies that $\pro_{X_1}(Y)$ cannot be uniquely computed from $\prs(\Vb)$.
        On the other hand, $\pro_{X_2}(Y)$ can be identified from $\prs(\Vb)$ since $(Y \independent S \vert X_2)_{\Grs_{\overline{X_2}}}$ and due to Theorem \ref{thm: main}, $\prs_{X_2}(Y)$ is \s-ID in $\Grs$.
        In this case, Algorithm \ref{algo: reduction} returns the following expression for $\pro_{X_2}(Y)$ in terms of $\prs$
        \begin{equation*}
            \pro_{X_2}(Y)
            = \sum_{Z_1, Z_2} \prs(Z_1, Z_2) \sum_{X_1} \frac{\prs(X_1, X_2, Y \vert Z_1, Z_2)}{\prs(X_2 \vert X_1, Z_1, Z_2)}.
        \end{equation*}
    \end{example}

\section{Conclusion}
    The \s-ID problem, introduced by \cite{abouei2023sid}, asks whether, given the causal graph, a causal effect in a sub-population can be identified from the observational distribution pertaining to the same sub-population.
    \cite{abouei2023sid} addressed this problem when all the variables in the causal graph are observable.
    In this paper, we studied the \s-ID problem in the presence of latent variables and provided a sufficient graphical condition to determine whether a causal effect is \s-ID.
    Consequently, we proposed a sound algorithm for \s-ID.
    While this paper proves the soundness of our proposed method, we also conjecture that our approach is not only sound but also complete.
    Finally, by presenting an appropriate reduction, we showed that solving \s-ID can solve the \s-Recoverability problem.

\section*{Acknowledgments}
    We thank the anonymous reviewers for their feedback. This research was in part supported by the Swiss National Science Foundation under NCCR Automation, grant agreement 51NF40\_180545 and Swiss SNF project 200021\_204355 /1.

\bibliography{references}
\bibliographystyle{alpha}

%%%%%%%%%%%%%%%%%%%%%%%%%%%%%%%%%%%%%%%%%%%%%%%%%%%%%%%%%%%%

\newpage

\appendix

\begin{center} \label{sec: apd}
    {\Large \textbf{Appendix}}
\end{center}
The structure of the appendix is as follows. Appendix \ref{apd: example} includes an additional example of \s-ID problem in the presence of latent variables. In Appendix \ref{apd: lemmas}, we provide some preliminary lemmas used throughout our proofs. The proofs for the main results, namely, Lemmas \ref{lemma: qs-margin}, \ref{lemma: qs-decom}, and Theorems \ref{thm: main}, \ref{thm: reduction} are presented in Appendix \ref{apd: main}. In Appendix \ref{apd: exp}, we will conduct an experiment to compare the outputs of \s-ID algorithm and the classic ID algorithm.

\section{Additional Example} \label{apd: example}
\begin{figure}[h!]
    \centering

    \begin{tikzpicture}
            \tikzstyle{block-dashed} = [draw, fill=white, dashed, circle, text centered, inner sep=0.25cm]
                % vertices
                \node [block-dashed, label=center:$U_1$](U1) {};           
                \node [block-dashed, below = 0.4 of U1, label=center:$U_2$](U2) {};    
                \node [block-s, right= 1 of U2, label=center:$S$](S) {};
                \node [block, left = 1 of U2, label=center:$Y$](Y) {};
                \node [block, above = 1.5 of Y, label=center:$X$](X) {};   
                %edges
                \draw[edge] (X) to (Y);
                \draw[edge,dashed] (U1) to (S); 
                \draw[edge, dashed] (U1) to (X);  
                \draw[edge, dashed] (U2) to (S);
                \draw[edge, dashed] (U2) to (Y);     
    \end{tikzpicture}
    \caption{ADMG $\Grs$ of the example of Appendix \ref{apd: example}.}
    \label{fig: ex11}          
\end{figure}

    In this section, we provide an example where ignoring $S$ results in an identifiable effect, whereas the target causal effect is, in fact, not \s-ID.
    
    Consider the following two SCMs.
    
    SCM $\M_1$:
    \begin{align*}
        U_1 & \sim Bern(0.5) \\
        U_2 & \sim Bern(0.5) \\
        \varepsilon_y & \sim Bern(0.3) \\
        X &= U_1 \\
        Y &= X \oplus U_2 \oplus \varepsilon_y  \\
        S &= \overline{U_1 \oplus U_2}
    \end{align*}
    where $\oplus$ denotes the XOR operator, and $Bern(p)$ denotes a Bernoulli random variable with parameter $p$.
    
    SCM $\M_2$:
    \begin{align*}
        U_1 & \sim Bern(0.5) \\
        U_2 & \sim Bern(0.5) \\
        \varepsilon_y & \sim Bern(0.3) \\
        X &= U_1 \\
        Y &= \varepsilon_y  \\
        S &= 1
    \end{align*}
    According to the above equations, we have
    \begin{equation*}
        \pro^{\M_1}(X = x, Y = y \vert S= 1) = \pro(U_1 = x) \pro(\varepsilon_y = y) = 0.5 \times \pro(\varepsilon_y = y) > 0. 
    \end{equation*}
    Similarly for $\M_2$ we have
    \begin{align*}
        \pro^{\M_2}(X = x, Y = y \vert S = 1) 
        &=  \pro^{\M_2}(X = x, Y = y)   \\
        &= \pro^{\M_2}(X = x) \pro^{\M_2}(Y = y)  \\
        &=  \pro(U_1 = x) \pro(\varepsilon_y = y) = 0.5 \times \pro(\varepsilon_y = y) > 0. 
    \end{align*}
    Thus, $\pro^{\M_1}(X, Y \vert S= 1) = \pro^{\M_2}(X, Y \vert S= 1) > 0$.
    Furthermore, we have
    \begin{equation*}
        \pro^{\M_1}_{x = 0}(Y = 1 \vert S = 1) =  \pro^{\M_1}_{x = 0}(U_2 + \varepsilon_y = 1 \vert S = 1) = \pro^{\M_1}(U_2 + \varepsilon_y = 1)  = 0.5.
    \end{equation*}
    Similarly, for SCM $\M_2$:
    \begin{equation*}
        \pro^{\M_2}_{x = 0}(Y = 1 \vert S = 1) = \pro^{\M_2}(\varepsilon_y = 1 \vert S = 1) = \pro(\varepsilon_y = 1) = 0.3
    \end{equation*}
    This shows that $\prs_{X}(Y)$ is not \s-ID in this causal graph, as $\pro^{\M_1}(X, Y \vert S= 1) = \pro^{\M_2}(X, Y \vert S= 1) > 0$, but $\pro^{\M_1}_{x = 0}(Y = 1 \vert S = 1) \neq \pro^{\M_2}_{x = 0}(Y = 1 \vert S = 1)$.

    Note that ignoring the sub-population, the causal effect $P_{X}(Y)$ is clearly identifiable from $P(X, Y)$, but as we showed above, the causal effect of $X$ on $Y$ is not identifiable in the sub-population from the observational data of that sub-population.

\section{Technical Preliminaries} \label{apd: lemmas}

    \textbf{Pearl's do-calculus rules \cite{pearl2000models}}: Let $\Xb,\Yb, \Zb, \Wb$ be four disjoint subsets of $\Vb$. The following three rules, commonly referred to as Pearl's do-calculus rules \cite{pearl2000models}, provide a tool for calculating interventional distributions using the causal graph.
        \begin{itemize}
            \item \textbf{Rule 1}: 
                If $(\Yb \independent \Zb \vert \Xb, \Wb)_{\Gr_{\overline{X}}}$, then
               \begin{equation*}
                   \pro_{\Xb}(\Yb \vert \Zb, \Wb) = \pro_{\Xb}(\Yb \vert \Wb).
               \end{equation*}    
            \item \textbf{Rule 2}: 
                If $(\Yb \independent \Zb \vert \Xb, \Wb)_{\Gr_{\overline{\Xb}\underline{\Zb}}}$, then
                \begin{equation*}
                      \pro_{\Xb, \Zb}(\Yb \vert \Wb) = \pro_{\Xb}(\Yb \vert \Zb, \Wb). 
                \end{equation*}  
            \item \textbf{Rule 3}: 
                If $(\Yb \independent \Zb \vert \Xb, \Wb)_{\Gr_{\overline{\Xb}\overline{\Zb(W)}}}$, where $\Zb(\Wb) \coloneqq \Zb  \setminus \Anc{\Wb}{\Gr_{\overline{\Xb}}}$, then
                \begin{equation*}
                    \pro_{\Xb, \Zb}(\Yb \vert \Wb) = \pro_{\Xb} (\Yb \vert \Wb). 
                \end{equation*}
    \end{itemize}
    
    \begin{lemma}[\citealp{tian2003ID}] \label{lemma:q_margin}
        For two sets $\Wb' \subset \Wb$, if $\Wb'$ is an ancestral set in $G[\Wb]$, then
        \begin{equation} \label{eq:margin}
            Q[\Wb'] = \mysum{\Wb \setminus \Wb'}{} Q[\Wb].
        \end{equation}
    \end{lemma}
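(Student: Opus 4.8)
The plan is to prove the identity in two moves: first reduce the marginal of $Q[\Wb]$ to an interventional marginal, and then match it with $Q[\Wb']$ via a single application of do-calculus. Writing $Q[\Wb] = \pro_{\Vb \setminus \Wb}(\Wb)$ by definition, summing out $\Wb \setminus \Wb'$ is just marginalization of a joint distribution over its sub-variables, so
\[
    \mysum{\Wb \setminus \Wb'}{} Q[\Wb] = \mysum{\Wb \setminus \Wb'}{} \pro_{\Vb \setminus \Wb}(\Wb) = \pro_{\Vb \setminus \Wb}(\Wb').
\]
Since $Q[\Wb'] = \pro_{\Vb \setminus \Wb'}(\Wb')$, it remains to show that $\pro_{\Vb \setminus \Wb}(\Wb') = \pro_{\Vb \setminus \Wb'}(\Wb')$; that is, intervening \emph{additionally} on $\Wb \setminus \Wb'$ leaves the interventional law of $\Wb'$ unchanged.

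This last equality is exactly the conclusion of Rule 3 of do-calculus applied with outcome set $\Wb'$, base intervention $\Xb = \Vb \setminus \Wb$, additional intervention $\Zb = \Wb \setminus \Wb'$, and empty conditioning set: its left-hand side $\pro_{\Xb,\Zb}(\Wb') = \pro_{\Vb \setminus \Wb'}(\Wb')$ and its right-hand side $\pro_{\Xb}(\Wb') = \pro_{\Vb \setminus \Wb}(\Wb')$. Because the conditioning set is empty, $\Zb(\cdot) = \Zb = \Wb \setminus \Wb'$, so the two edge deletions of Rule 3 combine into removing every arrowhead into $\Vb \setminus \Wb'$, and the hypothesis to verify is the m-separation
\[
    (\Wb' \independent \Wb \setminus \Wb' \mid \Vb \setminus \Wb)_{G_{\overline{\Vb \setminus \Wb'}}}.
\]

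The crux, and the step I expect to carry the real content, is checking this m-separation; here the ancestrality hypothesis does all the work. I claim every vertex $Z \in \Wb \setminus \Wb'$ is \emph{isolated} in $G_{\overline{\Vb \setminus \Wb'}}$. Indeed, all incoming edges to $Z$ are deleted since $Z \in \Vb \setminus \Wb'$; every bidirected edge at $Z$ is likewise deleted; every directed edge from $Z$ into $\Vb \setminus \Wb'$ is deleted; and a directed edge $Z \to Y$ with $Y \in \Wb'$ is impossible, since it would place $Z \in \Anc{\Wb'}{G[\Wb]}$, contradicting $\Anc{\Wb'}{G[\Wb]} = \Wb'$ (the ancestrality assumption) together with $Z \notin \Wb'$. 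With $\Wb \setminus \Wb'$ isolated, no path connects it to $\Wb'$, so the m-separation holds trivially, Rule 3 applies, and the displayed chain of equalities yields $\sum_{\Wb \setminus \Wb'} Q[\Wb] = Q[\Wb']$.

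As an alternative that avoids do-calculus, one could argue directly from the truncated factorization: expand $Q[\Wb]$ as a sum over the exogenous variables of the product of mechanism indicators $\mathbb{1}[x = f_X(\cdot)]$ for $X \in \Wb$, marginalize $\Wb \setminus \Wb'$ in reverse topological order so that each such indicator sums to $1$, and invoke ancestrality to note that no variable of $\Wb \setminus \Wb'$ is a parent of any variable of $\Wb'$. The surviving product is then precisely the factorization of $Q[\Wb']$, with any shared exogenous terms (bidirected edges) handled by the mutual independence of the exogenous variables.
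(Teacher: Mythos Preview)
Your proof is correct. Note, however, that the paper does not supply its own proof of this lemma: it is quoted as a preliminary result from Tian (2003) and used as a black box, so there is no in-paper argument to compare against directly.

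That said, your primary route differs from the original. Tian's proof works directly with the semi-Markovian truncated factorization, essentially the ``alternative'' you sketch at the end: one writes $Q[\Wb]$ as a product of local factors, sums out the variables of $\Wb\setminus\Wb'$ in reverse topological order (ancestrality guaranteeing that no such variable appears as an argument of any surviving factor), and recognizes the result as the factorization of $Q[\Wb']$. Your main argument instead packages the same structural content as a single invocation of Rule~3 of do-calculus, with the key observation that every vertex of $\Wb\setminus\Wb'$ is isolated in $\Gr_{\overline{\Vb\setminus\Wb'}}$; the ancestrality assumption is exactly what kills the only potentially surviving edges, namely directed edges from $\Wb\setminus\Wb'$ into $\Wb'$. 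Both approaches are short; the do-calculus version has the advantage of making the graphical hypothesis visibly sufficient without unpacking the factorization, while the factorization argument is more elementary in that it does not rely on the soundness of do-calculus.
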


    \begin{lemma}[\citealp{tian2003ID}] \label{lemma:q_decom}
        Let $\Cb \subseteq \Vb$, and assume that C is partitioned into \ccos $\Cb_1$, \dots, $\Cb_{m}$ in the subgraph $G[\Cb]$. Then we have
        \begin{itemize}
            \item
                $Q[\Cb]$ decomposes as 
                \begin{equation}
                    \label{eq:q_lemma_decom}
                  Q[\Cb] = \prod_{i=1}^{m} Q[\Cb_{i}]  
                \end{equation}
            \item
                Let $k$ denote the number of variables in $\Hb$, and let us assume a topological order of variables in $\Cb$ as $V_{c_1} < V_{c_2} < \dots < V_{c_k}$ in $G_{\Cb}$.
                Let $\Cb^{i}$ be the set of variables in $\Cb$ ordered before $V_{c_i}$ (including $V_{c_i}$), for $i = 1, 2, \dots, k$, where $\Cb^0$ is an empty set.
                Then each $Q[\Cb_{j}]$, $j = 1, 2, \dots, m$, is computable from $Q[\Cb]$ and is given by
                \begin{equation}
                \label{eq:q_lemma_prod}    
                    Q[\Cb_{j}] = \prod_{\{i\vert V_{c_{i}} \in \Cb_{j} \}} \frac{Q[\Cb^{(i)}]}{Q[\Cb^{(i-1)}]},
                \end{equation}
                where each $Q[\Cb^{i}]$ is given by
                \begin{equation}    
                    Q[\Cb^{(i)}] = \mysum{\Cb \setminus \Cb^{(i)}}{} Q[\Cb].
                \end{equation}
        \end{itemize}
    \end{lemma}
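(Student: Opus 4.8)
The plan is to prove the two bullet points separately, deriving both from the Markov factorization of the intervened model together with the marginalization identity of Lemma~\ref{lemma:q_margin}.

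For the decomposition \eqref{eq:q_lemma_decom}, I would start from the structural meaning $Q[\Cb] = \pro_{\Vb \setminus \Cb}(\Cb)$ and expand it through the SCM: writing each $X \in \Cb$ via its structural equation (with parents outside $\Cb$ frozen at their intervened values) and summing over the exogenous variables, the Markov factorization gives $Q[\Cb] = \sum_{\Ub} \prod_{X \in \Cb} \pro(X \mid \Pa{X}{\Gr}, \Ub^X)\prod_{U} \pro(U)$, in which only exogenous variables feeding $\Cb$ actually appear. The crucial structural fact is that two variables in distinct c-components of $\Gr[\Cb]$ share no exogenous variable, since by Definition~\ref{def: c-component} a shared latent would create a bidirected edge within $\Gr[\Cb]$ and hence place them in the same c-component. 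Therefore the surviving exogenous set splits as a disjoint union $\Ub = \bigcup_{j} \Ub^{(j)}$ with $\Ub^{(j)} = \bigcup_{X \in \Cb_j} \Ub^X$, and mutual independence of all exogenous variables lets the single sum over $\Ub$ factor into a product of independent sums, the $j$-th of which is exactly $Q[\Cb_j]$.

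For the recovery formula \eqref{eq:q_lemma_prod}, I would first record that each prefix $\Cb^{(i)}$ of the topological order is ancestral in $\Gr[\Cb]$ (every parent of a variable precedes it), so Lemma~\ref{lemma:q_margin} gives $Q[\Cb^{(i)}] = \sum_{\Cb \setminus \Cb^{(i)}} Q[\Cb]$; consequently each ratio $Q[\Cb^{(i)}]/Q[\Cb^{(i-1)}]$ is the conditional of $V_{c_i}$ given $\Cb^{(i-1)}$ in the joint $Q[\Cb]$. The heart of the argument is a peeling induction from the top of the order. Marginalizing the topologically last variable $V_{c_k}$: because it is a sink in $\Gr[\Cb]$ it is a parent of no variable in $\Cb$, hence appears in only the single c-factor $Q[\Cb_{j(k)}]$ of the decomposition, and summing it out replaces that factor by $Q[\Cb_{j(k)} \setminus \{V_{c_k}\}]$ (again via Lemma~\ref{lemma:q_margin}, since removing a sink keeps the set ancestral). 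Iterating establishes $Q[\Cb^{(i)}] = \prod_{l} Q[\Cb_l \cap \Cb^{(i)}]$, so that $Q[\Cb^{(i)}]/Q[\Cb^{(i-1)}] = Q[\Cb_{j(i)} \cap \Cb^{(i)}]/Q[\Cb_{j(i)} \cap \Cb^{(i-1)}]$ involves only the c-component containing $V_{c_i}$.

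Finally I would collect the factors belonging to a fixed $\Cb_j$. Listing the indices $i$ with $V_{c_i} \in \Cb_j$ in increasing order, the restricted prefixes $\Cb_j \cap \Cb^{(i)}$ grow by exactly one element at each such index and are unchanged in between, so the product in \eqref{eq:q_lemma_prod} telescopes to $Q[\Cb_j]/Q[\varnothing] = Q[\Cb_j]$. I expect the main obstacle to lie in the peeling step: one must argue carefully that the last remaining variable occurs in exactly one c-factor both as an outcome variable and as a conditioning (parent) argument, and that each intermediate marginalization still satisfies the ancestral hypothesis of Lemma~\ref{lemma:q_margin}. The delicate bookkeeping that $\Cb_l \cap \Cb^{(i)}$ remains the relevant c-factor (rather than splitting into finer pieces) I would handle by induction on $k-i$, peeling one sink at a time, rather than by reasoning about the c-component structure of each $\Gr[\Cb^{(i)}]$ directly.
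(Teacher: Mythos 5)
Your proposal is correct, but note that the paper itself offers no proof of this statement: Lemma~\ref{lemma:q_decom} is imported verbatim from Tian and Pearl (the \citealp{tian2003ID} citation) as a known preliminary, alongside Lemma~\ref{lemma:q_margin}, and both are used as black boxes in the appendix. So there is no internal proof to compare against; the relevant comparison is with the original source, and your argument is essentially the standard one given there. Your handling of the two genuinely delicate points is sound: first, the factorization of $Q[\Cb]$ rests exactly on the observation that variables in distinct \ccos of $\Gr[\Cb]$ share no exogenous variable (a shared latent parent would induce a bidirected edge inside $\Gr[\Cb]$), which lets the sum over $\Ub$ in the Markov factorization split into independent sums, one per \cco; second, the peeling induction is valid because the topologically last remaining variable is a sink of the induced subgraph, hence appears in exactly one c-factor (it is a parent of nothing remaining) and its removal preserves ancestrality, so Lemma~\ref{lemma:q_margin} applies at every stage and the intermediate identity $Q[\Cb^{(i)}] = \prod_{l} Q[\Cb_l \cap \Cb^{(i)}]$ holds even though the sets $\Cb_l \cap \Cb^{(i)}$ need not themselves be \ccos of $\Gr[\Cb^{(i)}]$. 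The final telescoping over the indices of a fixed $\Cb_j$ then gives \eqref{eq:q_lemma_prod} as you state. The only cosmetic caveat is that your expansion should make explicit that each factor $Q[\Cb_j]$ is evaluated as a function of all of $\Cb$ (its own variables as outcomes, the remaining ones entering as frozen parent values), which is implicit in your phrase about freezing parents at intervened values.
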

    
    \begin{corollary}
        \label{cor: q-decom}
        Let $\Hb_1 \sqcup \Hb_2 \sqcup \dots \sqcup \Hb_m$ be a partition of set $\Cb$, where for each $\Cb_{i}$ there exists $1 \leq j \leq m$ such that $ \Cb_i \subseteq \Hb_{j}$.
        Then, we have
        \begin{equation*}
            Q[\Cb] = \prod_{j} Q[\Hb_j].
        \end{equation*}
    \end{corollary}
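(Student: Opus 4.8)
The plan is to derive this directly from Lemma~\ref{lemma:q_decom} by regrouping c-components according to the coarser partition $\{\Hb_j\}_{j=1}^m$. Let $\Cb_1, \Cb_2, \dots$ denote the c-components of $\Gr[\Cb]$. First I would apply the decomposition \eqref{eq:q_lemma_decom} to write $Q[\Cb] = \prod_i Q[\Cb_i]$. Since $\{\Hb_j\}$ is a partition of $\Cb$ and, by hypothesis, each $\Cb_i$ is contained in a single block $\Hb_j$ (uniqueness follows from the blocks being disjoint), every factor $Q[\Cb_i]$ is associated with exactly one block, so the product regroups as
\begin{equation*}
    Q[\Cb] = \prod_i Q[\Cb_i] = \prod_{j=1}^{m} \prod_{\{i \,:\, \Cb_i \subseteq \Hb_j\}} Q[\Cb_i].
\end{equation*}

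The key step is to identify the inner product with $Q[\Hb_j]$. For this I would show that the c-components of the induced subgraph $\Gr[\Hb_j]$ are exactly the $\Cb_i$ satisfying $\Cb_i \subseteq \Hb_j$. The containment hypothesis—that each c-component of $\Gr[\Cb]$ lies inside one block—is equivalent to the statement that no bidirected edge of $\Gr[\Cb]$ connects two distinct blocks; hence restricting to $\Hb_j$ leaves the bidirected connectivity among its vertices unchanged. Consequently each $\Cb_i \subseteq \Hb_j$ remains a maximal bidirected-connected set in $\Gr[\Hb_j]$, and conversely any c-component of $\Gr[\Hb_j]$ is bidirected-connected in $\Gr[\Cb]$ and therefore coincides with some $\Cb_i$ contained in $\Hb_j$. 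Applying Lemma~\ref{lemma:q_decom} to the set $\Hb_j$ then yields $Q[\Hb_j] = \prod_{\{i \,:\, \Cb_i \subseteq \Hb_j\}} Q[\Cb_i]$.

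Substituting this identity back into the regrouped product gives $Q[\Cb] = \prod_{j=1}^m Q[\Hb_j]$, as claimed. I expect the main obstacle to be the bookkeeping in the c-component identification step: one must verify carefully that maximal bidirected-connected sets are preserved under the induced-subgraph restriction to $\Hb_j$, which is precisely where the hypothesis $\Cb_i \subseteq \Hb_j$ is used (it rules out cross-block bidirected edges that could merge or split components). Everything else is a routine reindexing of a finite product.
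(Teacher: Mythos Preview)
Your proposal is correct and follows exactly the route the paper intends: the paper states this result as a corollary of Lemma~\ref{lemma:q_decom} without giving an explicit proof, and your argument spells out the natural derivation---apply \eqref{eq:q_lemma_decom} to $\Cb$, observe that the hypothesis forces the c-components of each $\Gr[\Hb_j]$ to coincide with the $\Cb_i$'s it contains, and apply \eqref{eq:q_lemma_decom} again to each $\Hb_j$. The bookkeeping you flag (no cross-block bidirected edges, hence c-components are preserved under restriction) is handled correctly.
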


    \begin{lemma} \label{lemma: Qs-eq-q}
       Let $\Hb$ be a subset of $\Vns$, we have the following equation
        \begin{equation} \label{eq: Qs-eq-q}
            \Qs[\Hb] = \frac{Q[\Hb \cup \Anc{S}{}]}{Q[\Anc{S}{}]}.
        \end{equation}
    \end{lemma}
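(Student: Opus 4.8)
The plan is to unpack both $Q[\cdot]$ terms in the augmented graph $\Grs$ over $\Vb\cup\{S\}$ and recognize their ratio, evaluated at $S=1$, as the conditional distribution defining $\Qs[\Hb]$. First I would compute the complements of the two intervention sets. Since $\Anc{S}{\Grs}=\Vs\cup\{S\}$ and $\Hb\subseteq\Vns=\Vb\setminus\Vs$, the complement of $\Hb\cup\Anc{S}{\Grs}$ in $\Vb\cup\{S\}$ is $\Vns\setminus\Hb$, while the complement of $\Anc{S}{\Grs}$ is $\Vns$. Hence, by the definition of $Q[\cdot]$,
\begin{equation*}
    Q[\Hb\cup\Anc{S}{\Grs}] = \pro_{\Vns\setminus\Hb}(\Hb,\Vs,S), \qquad Q[\Anc{S}{\Grs}] = \pro_{\Vns}(\Vs,S).
\end{equation*}

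The crux of the argument is to show that the two interventions agree on the marginal over $\Anc{S}{\Grs}$, i.e.\ that $\pro_{\Vns\setminus\Hb}(\Vs,S)=\pro_{\Vns}(\Vs,S)$; equivalently, that additionally intervening on $\Hb$ leaves the distribution of $S$ and its ancestors unchanged. I would obtain this from Lemma \ref{lemma:q_margin}. The key structural fact is that $\Anc{S}{\Grs}$ is an ancestral set in $\Grs[\Hb\cup\Anc{S}{\Grs}]$: if some $V\in\Hb$ were an ancestor of a vertex of $\Anc{S}{\Grs}$ within this induced subgraph, it would also be an ancestor of $S$ in $\Grs$, contradicting $\Hb\subseteq\Vns$. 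Applying Lemma \ref{lemma:q_margin} with $\Wb'=\Anc{S}{\Grs}$ and $\Wb=\Hb\cup\Anc{S}{\Grs}$ then gives $Q[\Anc{S}{\Grs}]=\sum_{\Hb}Q[\Hb\cup\Anc{S}{\Grs}]$, which is exactly the desired marginal identity $\pro_{\Vns}(\Vs,S)=\pro_{\Vns\setminus\Hb}(\Vs,S)$.

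Finally I would assemble the ratio at $S=1$. Combining the two expansions with the marginal identity just established,
\begin{equation*}
    \frac{Q[\Hb\cup\Anc{S}{\Grs}]}{Q[\Anc{S}{\Grs}]}\bigg|_{S=1}
    = \frac{\pro_{\Vns\setminus\Hb}(\Hb,\Vs,S=1)}{\pro_{\Vns}(\Vs,S=1)}
    = \frac{\pro_{\Vns\setminus\Hb}(\Hb,\Vs,S=1)}{\pro_{\Vns\setminus\Hb}(\Vs,S=1)}
    = \pro_{\Vns\setminus\Hb}(\Hb \vert \Vs, S=1),
\end{equation*}
and since $\Anc{S}{\Grs}\setminus\{S\}=\Vs$, the right-hand side is precisely $\Qs[\Hb]$ by definition, which completes the proof.

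I expect the main obstacle to be the middle step: correctly justifying that the marginal over $\Anc{S}{\Grs}$ is invariant under the extra intervention on $\Hb$. The remaining steps are bookkeeping with complements and the definition of conditional probability, but this invariance is exactly where the defining property of $\Vns$ as the non-ancestors of $S$ enters and must be argued carefully, either via the ancestral-marginalization Lemma \ref{lemma:q_margin} as above, or equivalently via Rule 3 of do-calculus applied in $\Grs_{\overline{\Vns}}$, in which every vertex of $\Vns$ becomes isolated.
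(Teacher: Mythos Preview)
Your proof is correct and follows essentially the same route as the paper: expand $\Qs[\Hb]$ via the definition of conditional probability, identify the numerator as $Q[\Hb\cup\Anc{S}{\Grs}]$, and then argue that the denominator $\pro_{\Vns\setminus\Hb}(\Anc{S}{\Grs})$ equals $Q[\Anc{S}{\Grs}]$ because $\Anc{S}{\Grs}$ is ancestral. The only cosmetic difference is that the paper invokes the ancestral property directly to get $\pro_{\Vns\setminus\Hb}(\Anc{S}{\Grs})=\pro(\Anc{S}{\Grs})=Q[\Anc{S}{\Grs}]$ in one line, whereas you route the same fact through Lemma~\ref{lemma:q_margin}; both are equivalent justifications of the same invariance.
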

    \begin{proof}
        According to definition of $\Qs$, we have
        $$
            \Qs[\Hb] = \pro_{\Vns \setminus \Hb}(\Hb \vert \Anc{S}{}) = \frac{\pro_{\Vns \setminus \Hb}(\Hb , \Anc{S}{})}{\pro_{\Vns \setminus \Hb}(\Anc{S}{})} = \frac{Q[\Hb \cup \Anc{S}{}]}{\pro_{\Vns \setminus \Hb}(\Anc{S}{})}.
        $$
        Note that $\Anc{S}{}$ is an ancestral set in $\Gr$; Hence, for any $\Hb$, we have
        $$
            \pro_{\Vns \setminus \Hb}(\Anc{S}{})  = \pro(\Anc{S}{})  = Q[\Anc{S}{}].
        $$
        This implies Equation \eqref{eq: Qs-eq-q}.
    \end{proof}

    \begin{lemma} \label{lemma: inter-cond}
        Let $\Xns$ and $\Yns$ be disjoint subsets of $\Vns = \Vb \setminus \Anc{S}{\Grs}$.
        We have
        \begin{equation*}
            \pro_{\Xns}(\Yns \vert \Anc{S}{}) = \mysum{\Db \setminus \Yns}{} \Qs[\Db_1]\dots \Qs[\Db_k],
        \end{equation*}
        where $\Db = \Anc{\Yns}{\Gr[\Vns \setminus \Xns]}$, and $\Db_i$'s are \scos of $\Db$.
    \end{lemma}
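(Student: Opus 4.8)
The plan is to reduce the target quantity to $\sum_{\Db \setminus \Yns} \Qs[\Db]$ and then factorize $\Qs[\Db]$ over its \scos. Concretely, I would carry this out in three steps, each relying only on the definition of $\Qs[\cdot]$ and on Lemmas \ref{lemma: qs-margin} and \ref{lemma: qs-decom}; no new machinery is needed.

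First, I would recognize the target distribution as a marginal of $\Qs[\Vns \setminus \Xns]$. Instantiating the definition of $\Qs[\cdot]$ at $\Hb = \Vns \setminus \Xns$, so that $\Vns \setminus \Hb = \Xns$, gives $\Qs[\Vns \setminus \Xns] = \pro_{\Xns}(\Vns \setminus \Xns \mid \Anc{S}{\Grs} \setminus \{S\}, S=1)$, which is exactly the full conditional post-interventional distribution over $\Vns \setminus \Xns$ under the conditioning event $\Anc{S}{}$ (namely $\Vs$ together with $S = 1$). Since $\Yns \subseteq \Vns \setminus \Xns$, marginalizing this valid distribution over $(\Vns \setminus \Xns) \setminus \Yns$ yields
\[
\pro_{\Xns}(\Yns \mid \Anc{S}{}) = \sum_{(\Vns \setminus \Xns) \setminus \Yns} \Qs[\Vns \setminus \Xns].
\]

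Second, I would collapse this marginal to $\Qs[\Db]$. The structural fact I would use is that $\Db = \Anc{\Yns}{\Grs[\Vns \setminus \Xns]}$ is, by construction, an ancestral set in $\Grs[\Vns \setminus \Xns]$ with $\Yns \subseteq \Db \subseteq \Vns \setminus \Xns$. Applying Lemma \ref{lemma: qs-margin} with $\Wb = \Vns \setminus \Xns$ and $\Wb' = \Db$ gives $\Qs[\Db] = \sum_{(\Vns \setminus \Xns) \setminus \Db} \Qs[\Vns \setminus \Xns]$. Splitting the outer sum as $\sum_{(\Vns \setminus \Xns) \setminus \Yns} = \sum_{\Db \setminus \Yns}\sum_{(\Vns \setminus \Xns) \setminus \Db}$ (valid precisely because $\Yns \subseteq \Db \subseteq \Vns \setminus \Xns$) and substituting then gives $\pro_{\Xns}(\Yns \mid \Anc{S}{}) = \sum_{\Db \setminus \Yns} \Qs[\Db]$. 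Finally, since $\Db_1,\dots,\Db_k$ are the \scos of $\Db$, Lemma \ref{lemma: qs-decom} factorizes $\Qs[\Db] = \Qs[\Db_1]\cdots\Qs[\Db_k]$, and substituting into the previous display produces exactly the claimed identity.

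The step I expect to require the most care is the first one: justifying that conditioning on $\Anc{S}{}$ commutes with intervening on $\Xns$ and with marginalizing out $(\Vns \setminus \Xns) \setminus \Yns$, so that the target is genuinely the stated marginal of $\Qs[\Vns \setminus \Xns]$. This is where the hypothesis $\Xns, \Yns \subseteq \Vns = \Vb \setminus \Anc{S}{\Grs}$ is essential: because the intervened variables are non-ancestors of $S$, the intervention leaves the conditioning event $\{\Vs, S=1\}$ intact, so $\pro_{\Xns}(\cdot \mid \Anc{S}{})$ is a well-defined distribution over $\Vns \setminus \Xns$ and ordinary marginalization applies. Once this is settled, the remaining steps are routine applications of Lemmas \ref{lemma: qs-margin} and \ref{lemma: qs-decom} together with the ancestral property of $\Db$.
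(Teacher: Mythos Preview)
Your proposal is correct and follows essentially the same route as the paper: recognize $\pro_{\Xns}(\Yns \mid \Anc{S}{})$ as a marginal of $\Qs[\Vns \setminus \Xns]$, collapse to $\sum_{\Db \setminus \Yns}\Qs[\Db]$ via Lemma~\ref{lemma: qs-margin} using that $\Db$ is ancestral in $\Grs[\Vns \setminus \Xns]$, and then factorize with Lemma~\ref{lemma: qs-decom}. Your extra remark justifying why the conditioning on $\Anc{S}{}$ is unaffected by intervening on $\Xns \subseteq \Vns$ is a welcome clarification that the paper leaves implicit.
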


    \begin{proof}
        Using marginalization over $\Vns \setminus (\Xns \cup \Yns)$, we have
        $$
           \pro_{\Xns}(\Yns \vert \Anc{S}{}) = \mysum{\Vns \setminus (\Xns \cup \Yns)}{}\pro_{\Xns} (\Vns \setminus \Xns \vert \Anc{S}{}) = \mysum{\Vns \setminus (\Xns \cup \Yns)}{} \Qs[\Vns \setminus \Xns].
        $$
        Since $\Db$ is an ancestral set in $\Gr[\Vns \setminus \Xns]$, according to Lemma \ref{lemma: qs-margin}, we have 
        $$
         \pro_{\Xns}(\Yns \vert \Anc{S}{}) = \mysum{\Vns \setminus (\Xns \cup \Yns)}{} \Qs[\Vns \setminus \Xns] = \mysum{\Db \setminus \Yns}{} \mysum{\Vns \setminus (\Xns \cup \Db)}{}
         \Qs[\Vns \setminus \Xns] = \mysum{\Db \setminus \Yns}{} Q[\Db].
        $$
        Therefore, the first property of Lemma \ref{lemma: qs-decom} implies that
        \begin{equation*}
            \pro_{\Xns}(\Yns \vert \Anc{S}{}) = \mysum{\Db \setminus \Yns}{} \Qs[\Db_1]\dots \Qs[\Db_k].
        \end{equation*}
    \end{proof}

\begin{lemma}
        \label{lemma: algorithm-hedge}
        If Function sID-Single returns \textsc{Fail} for the inputs $\Cb$ and $\Tb$, then $\Tb$ is an \s-Hedge for $\Cb$.
    \end{lemma}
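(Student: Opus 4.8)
The plan is to locate the single point at which \textbf{sID-Single} emits \textsc{Fail} and to verify the three defining conditions of an \s-Hedge (Definition~\ref{def: s-Hedge}) there. Inspecting the pseudocode, \textsc{Fail} is produced only at the line guarded by $\Ab = \Tb$, where $\Ab = \Anc{\Cb}{\Grs[\Tb]}$; every other branch either returns an explicit expression or passes the value of the recursive call through unchanged. Thus it suffices to analyze the invocation that actually generates \textsc{Fail} and to show that its arguments $\Cb,\Tb$ form an \s-Hedge. For this I must know that, at that invocation, $\Cb$ and $\Tb$ are single \scos with $\Cb \subseteq \Tb$, so I first establish this as an invariant of the recursion.

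First I would prove, by induction on the recursion depth, that every call to \textbf{sID-Single} receives single \scos $\Cb,\Tb$ with $\Cb \subseteq \Tb$. The top-level call satisfies this by the input specification. In the recursive step the new arguments are $\Cb$ and $\Tb'$, the \sco of $\Ab = \Anc{\Cb}{\Grs[\Tb]}$ containing $\Cb$, where $\Cb \subsetneq \Ab \subsetneq \Tb$. That $\Cb \subseteq \Tb'$ holds because $\Cb$ is a single \sco, so its vertices are already bidirected-connected through $\Anc{S}{\Grs}$ within $\Grs[\Cb \cup \Anc{S}{\Grs}]$ and therefore lie in a single \cco of $\Grs[\Ab \cup \Anc{S}{\Grs}]$, hence in one \sco of $\Ab$.

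The remaining point, that $\Tb'$ is itself a single \sco, is the main obstacle and is where the definition of \scos via \ccos (Definition~\ref{def: s-comp}) is genuinely used. Let $\Cb_j$ be the \cco of $\Grs[\Ab \cup \Anc{S}{\Grs}]$ with $\Tb' = \Cb_j \cap \Ab$. Since $\Ab \subseteq \Vns$ and $\Vns \cap \Anc{S}{\Grs} = \varnothing$, we get $\Cb_j \cap \Vns = \Tb'$ and $\Cb_j = \Tb' \sqcup (\Cb_j \cap \Anc{S}{\Grs})$. The vertices deleted in passing to $\Grs[\Tb' \cup \Anc{S}{\Grs}]$, namely $\Ab \setminus \Tb' \subseteq \Vns$, belong to \ccos other than $\Cb_j$, so their removal leaves $\Cb_j$ untouched and the bidirected connectivity of $\Tb'$ (routed through $\Cb_j \cap \Anc{S}{\Grs}$) intact. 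Hence the \scos of $\Tb'$ reduce to $\Tb'$ itself, so $\Tb'$ is a single \sco and the invariant is preserved.

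Finally, at the invocation emitting \textsc{Fail} we have $\Anc{\Cb}{\Grs[\Tb]} = \Tb$. Because the preceding guard $\Ab = \Cb$ did not fire, $\Cb \neq \Tb$, and with $\Cb \subseteq \Tb$ this yields $\Cb \subsetneq \Tb$. The invariant gives that $\Cb$ and $\Tb$ are single \scos, and the failing guard gives $\Tb = \Anc{\Cb}{\Grs[\Tb]}$. These are exactly the conditions of Definition~\ref{def: s-Hedge}, so $\Tb$ is an \s-Hedge for $\Cb$.
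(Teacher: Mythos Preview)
Your proof is correct and follows the same line as the paper's: identify that \textsc{Fail} arises only when $\Ab=\Tb$, and then verify the defining conditions of an \s-Hedge at that invocation. The paper's own proof is terser---it simply invokes the input specification to assert that $\Cb$ and $\Tb$ are single \scos---whereas you additionally prove by induction that the recursive call preserves this invariant (in particular, that $\Tb'$ is itself a single \sco), which is a genuine gap the paper leaves implicit.
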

    \begin{proof}
        When the algorithm returns Fail, 
        \begin{enumerate}
            \item $\Cb$ and $\Tb$ are both single \scos since the inputs of the algorithm have to be \scos.
            \item If $\Ab \coloneqq \Anc{\Cb}{\Gr[\Tb]}$, then $\Tb = \Ab$.
        \end{enumerate}
        According to the definition of s-Hedge \ref{def: s-Hedge}, $\Tb$ is an s-Hedge for $\Cb$.
    \end{proof}

\section{Proofs of Main Results} \label{apd: main}
    \begingroup
    \def\thetheorem{\ref{lemma: qs-margin}}
        \begin{lemma}   
            Let $\Wb, \Wb'$ be two subsets of $\Vns$ such that $\Wb' \subset \Wb$. If $\Wb'$ is an ancestral set in $G[\Wb]$, then we have
            \begin{equation}
                % \label{eq: qs-margin}
                \Qs[\Wb'] = \mysum{\Wb \setminus \Wb'}{} \Qs[\Wb].
            \end{equation}
        \end{lemma}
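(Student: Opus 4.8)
The plan is to reduce the claim about $\Qs[\cdot]$ to the corresponding marginalization property for the ordinary operator $Q[\cdot]$ (Lemma~\ref{lemma:q_margin}), using the identity of Lemma~\ref{lemma: Qs-eq-q} that writes each $\Qs$ as a ratio of two $Q$-quantities. Since $\Wb,\Wb'\subseteq\Vns$ are disjoint from $\Anc{S}{\Grs}$, Lemma~\ref{lemma: Qs-eq-q} gives $\Qs[\Wb']=Q[\Wb'\cup\Anc{S}{\Grs}]/Q[\Anc{S}{\Grs}]$ and $\Qs[\Wb]=Q[\Wb\cup\Anc{S}{\Grs}]/Q[\Anc{S}{\Grs}]$. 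The denominator $Q[\Anc{S}{\Grs}]$ does not involve the variables in $\Wb\setminus\Wb'$, so it factors out of the sum, and \eqref{eq: qs-margin} becomes equivalent to the single statement
\begin{equation*}
    Q[\Wb'\cup\Anc{S}{\Grs}] = \sum_{\Wb\setminus\Wb'} Q[\Wb\cup\Anc{S}{\Grs}].
\end{equation*}

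First I would set $A'\coloneqq\Wb'\cup\Anc{S}{\Grs}$ and $A\coloneqq\Wb\cup\Anc{S}{\Grs}$, observing that $A'\subset A$ and that $A\setminus A'=\Wb\setminus\Wb'$ (the block $\Anc{S}{\Grs}$ is shared by both and is disjoint from $\Wb$ and $\Wb'$). Then the displayed identity is exactly the conclusion of Lemma~\ref{lemma:q_margin} applied to the pair $A'\subset A$, provided its hypothesis holds, namely that $A'$ is an ancestral set in $\Grs[A]$.

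The crux, and the step I expect to require the most care, is this ancestrality check: I must show that no vertex of $A\setminus A'=\Wb\setminus\Wb'$ is an ancestor in $\Grs[A]$ of any vertex of $A'$. The argument rests on three observations. First, because every $w\in\Wb\subseteq\Vns$ is a non-ancestor of $S$, there can be no directed edge from $\Wb$ into $\Anc{S}{\Grs}$ (such an edge would place $w$ on a directed path to $S$). Consequently, within $\Grs[A]$ any directed path emanating from a vertex of $\Wb$ can never enter the block $\Anc{S}{\Grs}$ and must remain inside $\Wb$. Second, a directed path that starts in $\Wb\setminus\Wb'$ and stays in $\Wb$ is a directed path in $\Grs[\Wb]$; since $\Wb'$ is ancestral in $\Grs[\Wb]$ by hypothesis, such a path cannot reach $\Wb'$. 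Hence no vertex of $\Wb\setminus\Wb'$ is an ancestor in $\Grs[A]$ of either $\Wb'$ or (by the first observation) of $\Anc{S}{\Grs}$. Third, since $\Anc{S}{\Grs}$ is globally ancestral in $\Grs$, it already contains all its own ancestors, so adjoining the $\Wb$-block creates no new ancestors of it in the induced subgraph.

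Combining these, $\Anc{A'}{\Grs[A]}=A'$, i.e.\ $A'$ is ancestral in $\Grs[A]$, so Lemma~\ref{lemma:q_margin} applies and yields the displayed identity. Dividing through by $Q[\Anc{S}{\Grs}]$ and re-applying Lemma~\ref{lemma: Qs-eq-q} recovers $\Qs[\Wb']=\sum_{\Wb\setminus\Wb'}\Qs[\Wb]$, completing the proof.
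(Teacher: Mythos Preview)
Your proposal is correct and follows essentially the same approach as the paper: both invoke Lemma~\ref{lemma: Qs-eq-q} to rewrite $\Qs[\Wb]$ and $\Qs[\Wb']$ as ratios with common denominator $Q[\Anc{S}{\Grs}]$, reduce the claim to $Q[\Wb'\cup\Anc{S}{\Grs}]=\sum_{\Wb\setminus\Wb'}Q[\Wb\cup\Anc{S}{\Grs}]$, and then apply Lemma~\ref{lemma:q_margin} after noting that $\Wb'\cup\Anc{S}{\Grs}$ is ancestral in $\Grs[\Wb\cup\Anc{S}{\Grs}]$. Your treatment is in fact more careful, since you spell out the ancestrality verification that the paper merely asserts.
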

    \addtocounter{theorem}{-1}
    \endgroup

    \begin{proof}
        According to Lemma \ref{lemma: Qs-eq-q}, we have the following equations
        \begin{align*}
            \Qs[\Wb] &= \frac{Q[\Wb \cup \Anc{S}{}]}{Q[\Anc{S}{}]}, \\
            \Qs[\Wb'] &= \frac{Q[\Wb' \cup \Anc{S}{}]}{Q[\Anc{S}{}]}.
        \end{align*}
        Therefore, by replacing the above equations in Equation \eqref{eq: qs-margin}, it is sufficient to show that
        \begin{equation*}
            \frac{Q[\Wb' \cup \Anc{S}{}]}{Q[\Anc{S}{}]} = \mysum{\Wb \setminus \Wb'}{} \frac{Q[\Wb \cup \Anc{S}{}]}{Q[\Anc{S}{}]}.
        \end{equation*}
        Since $\pro(\Anc{S}{}) > 0$, this is equivalent to
        $$
          Q[\Wb' \cup \Anc{S}{}] =   \mysum{\Wb \setminus \Wb'}{} Q[\Wb \cup \Anc{S}{}].
        $$
        Note that $\Wb' \cup \Anc{S}{}$ in an ancestral set in $\Gr[\Wb \cup \Anc{S}{}]$, and $\Wb \setminus \Wb' = (\Wb \cup \Anc{S}{}) \setminus (\Wb' \cup \Anc{S}{})$.
        Hence, Lemma \ref{lemma:q_margin} concludes the proof.
    
    \end{proof}

    \begingroup
    \def\thetheorem{\ref{lemma: qs-decom}}
         Suppose $\Hb \subseteq \Vns$ and let $\Hb_1, \dots, \Hb_k$ denote the \scos of $\Hb$ in $\Grs$. Then,
        \begin{itemize}
            \item $\Qs[\Hb]$ decomposes as
                \begin{equation*}
                    \Qs[\Hb] = \Qs[\Hb_1]\Qs[\Hb_2]\dots \Qs[\Hb_k].
                \end{equation*}
            \item Let $m$ be the number of variables in $\Hb$, and consider a topological ordering of  the variables in graph $\Grs[\Hb]$, denoted as $V_{h_1} < \dots < V_{h_m}$.
            Let $\Hb^{(0)} = \emptyset$ and for each $1 \leq i \leq  m$, $\Hb^{(i)}$ denote the set of variables in $\Hb$ ordered before $V_{h_i}$ (including $V_{h_i}$).
            For every $1\leq j \leq k$, $\Qs[\Hb_{j}]$ can be computed from $\Qs[\Hb]$ by   
            \begin{equation}
                \Qs[\Hb_{j}] = \prod_{\{i \vert V_{h_{i}} \in \Hb_{j} \}} \frac{\Qs[\Hb^{(i)}]}{\Qs[\Hb^{(i-1)}]},
            \end{equation}
            where $\Qs[\Hb^{(i)}]$s can be computed by
            \begin{equation}    
                 \Qs[\Hb^{(i)}] = \mysum{\Hb \setminus \Hb^{(i)}}{} \Qs[\Hb].
            \end{equation}
        \end{itemize}
    \addtocounter{theorem}{-1}
    \endgroup
    
    \begin{proof}
        \textbf{First part:} according to the definition of $\Qs[.]$, we have
        $$
                \Qs[\Hb] = \pro_{\Vns \setminus \Hb}(\Hb \vert \Anc{S}{}) = \frac{\pro_{\Vns \setminus \Hb}(\Hb , \Anc{S}{})}{\pro_{\Vns \setminus \Hb}(\Anc{S}{})} = \frac{Q[\Hb \cup \Anc{S}{}]}{Q[\Anc{S}{}]}.
        $$
        Note the last equality holds because $\Anc{S}{}$ is an ancestral set in $\Gr$.
        Now, we have
        $$
            Q[\Anc{S}{}] = \pro_{\Vb \setminus \Anc{S}{}}(\Anc{S}{}) = \pro(\Anc{S}{}) = \pro_{\Vns \setminus \Hb}(\Anc{S}{}).
        $$
        Similarly, for each $i \in [1:k]$, we have
        $$ 
                \Qs[\Hb_i] = \frac{Q[\Hb_i \cup \Anc{S}{}]}{Q[\Anc{S}{}]}
        $$
        Therefore, the above equations imply that
        \begin{equation}
            \label{eq: iff-q-decom}
        \Qs[\Hb] = \Qs[\Hb_1]\Qs[\Hb_2]\dots \Qs[\Hb_k] \iff \frac{Q[\Hb \cup \Anc{S}{}]}{Q[\Anc{S}{}]} = \frac{Q[\Hb_1 \cup \Anc{S}{}]}{Q[\Anc{S}{}]}\times \cdots \times\frac{Q[\Hb_k \cup \Anc{S}{}]}{Q[\Anc{S}{}]}
        \end{equation}
        Let $\Cb_i$'s be the corresponding \cco of $\Hb_i$ (i.e., $\Hb_i \subseteq \Cb_i$ and $\Cb_i$ is a \cco of $\Hb_i \cup \Anc{S}{}$).
        According to the definition of $\Cb_i$ and Corollary \ref{cor: q-decom} we have 
        $$
            Q[\Hb_i \cup \Anc{S}{}] = Q[\Cb_i] Q[\Anc{S}{} \setminus \Cb_{i}].
        $$
        Moreover, since $\Cb_i$ is a \cco of $\Anc{S}{} \cup \Hb_i$, then $\Cb_i \setminus \Hb_i$ should be union of some \ccos of $\Anc{S}{}$. Therefore, for each $i$, Corollary \ref{cor: q-decom} implies
        $$
            Q[\Anc{S}{}] = Q[\Cb_i \setminus \Hb_i]Q[\Anc{S}{} \setminus (\Cb_i \setminus \Hb_i)] = Q[\Cb_i \setminus \Hb_i]Q[\Anc{S}{} \setminus \Cb_i]
        $$
        Putting the above equations together, we have
        \begin{equation} \label{eq: Qs-Q-2}
           \Qs[\Hb_i] = \frac{Q[\Hb_i \cup \Anc{S}{}]}{Q[\Anc{S}{}]} = \frac{Q[\Cb_i]Q[\Anc{S}{} \setminus \Cb_i]}{Q[\Cb_i \setminus \Hb_i]Q[\Anc{S}{} \setminus \Cb_i]} =  \frac{Q[\Cb_i]}{Q[\Cb_i \setminus \Hb_i]}.
        \end{equation}
        Hence,
        $$
            \prod_{i = 1}^{k} \frac{Q[\Hb_i \cup \Anc{S}{}]}{Q[\Anc{S}{}]} = \prod_{i = 1}^{k} \frac{Q[\Cb_i]}{Q[\Cb_i \setminus \Hb_i]}.
        $$
        Consequently, we have
        $$    
            \Qs[\Hb] = \Qs[\Hb_1]\Qs[\Hb_2]\dots \Qs[\Hb_k] \iff  Q[\Hb \cup \Anc{S}{}] = \frac{Q[\Anc{S}{}]}{\prod_{i} Q[\Cb_i \setminus \Hb_i]}  \prod_{i = 1}^{k} Q[\Cb_i].
        $$
        Note that $\Cb_i \setminus \Hb_i$ are disjoint subsets of $\Anc{S}{}$, where each of them is the union of some \ccos of $\Anc{S}{}$. If $\Cb_{k + 1} \coloneqq \Anc{S}{} \setminus \bigcup_{i} \Cb_i$, then Corollary \ref{cor: q-decom} implies the following.
        $$
            Q[\Anc{S}{}] = \prod_{i = 1}^{k} Q[\Cb_i \setminus \Hb_i] Q[\Cb_{k + 1}]
            \Longrightarrow \frac{Q[\Anc{S}{}]}{\prod_{i = 1}^{k} Q[\Cb_i \setminus \Hb_i]} = Q[\Cb_{k + 1}]
        $$
        Finally, applying Corollary \ref{cor: q-decom} for $\Hb \cup \Anc{S}{}$, we have
        $$
             Q[\Hb \cup \Anc{S}{}] = Q[\Cb_{k + 1}] \prod_{i = 1}^{k} Q[\Cb_i].
        $$
        This concludes the first part.
        
        \textbf{Second part:} the proof is similar to proof of Lemma \ref{lemma:q_decom}.
        Define $\Cb \coloneqq \Hb \cup \Anc{S}{}$.
        Let $\Cb_{1}, \dots, \Cb_{m}$ be \ccos of $C$ (w.l.o.g. assume that $\Hb_i \subseteq \Cb_i$ for $i \in [1:k]$). Consider a topological order of $V_{h_1} < V_{h_{2}} < \cdots < V_{h_{n}}$ for $\Hb$, and $V_{s_1} < \cdots < V_{s_{n'}}$ for $\Anc{S}{}$. Since any $V_{h_i}$ is not an ancestor of $\Anc{S}{}$, $V_{s_1} < \cdots < V_{s_{n'}} < V_{h_1} < V_{h_{2}} < \cdots < V_{h_{n}}$ is a topological order for $\Hb \cup \Anc{S}{}$. According to Lemma \ref{lemma:q_decom}, we have
        $$
             Q[\Cb_{j}] = \prod_{\{i\vert V_{c_{i}} \in \Cb_{j} \}} \frac{Q[\Cb^{(i)}]}{Q[\Cb^{(i-1)}]},
        $$
        where $(c_1, c_2, \dots, c_{n + n'}) = (s_1,\dots, s_{n'}, h_1, \dots, h_n)$.
        For each $i \in [1:n]$, let $\Hb^{i} \coloneqq \{V_{h_1}, V_{h_2},\dots,V_{h_{i}}\}$ and $\Hb^{0} = \varnothing$.
        For each $i > 0$, we have
        $$
           \frac{Q[\Cb^{(i + n')}]}{Q[\Cb^{(i + n'-1)}]} = \frac{Q[\Hb^{i} \cup \Anc{S}{}]}{Q[\Hb^{(i-1)} \cup \Anc{S}{}]} = \frac{\frac{Q[\Hb^{i} \cup \Anc{S}{}]}{Q[\Anc{S}{}]}}{\frac{Q[\Hb^{(i-1)} \cup \Anc{S}{}]}{Q[\Anc{S}{}]}} = \frac{\Qs[\Hb^{i}]}{\Qs[\Hb^{(i-1)}]},
        $$
        where the last equality holds according to Lemma \ref{lemma: Qs-eq-q}.
        It suffices to show that
        $$
            \Qs[\Hb_{j}] = \prod_{\{i\vert V_{h_{i}} \in \Hb_{j} \}} \frac{\Qs[\Hb^{(i)}]}{\Qs[\Hb^{(i-1)}]}.
        $$
        Equation \ref{eq: Qs-Q-2} implies that
        $$ 
            \Qs[\Hb_{j}] = \frac{Q[\Cb_{j}]}{Q[\Cb_{j} \setminus \Hb_{j}]}.
        $$
        Hence, according to this equation and Lemma \ref{lemma:q_decom}, we have
        \begin{align}
        \label{eq: decom_eq2}
             Q[\Cb_{j} \setminus \Hb_{j}] \Qs[\Hb_{j}] = Q[\Cb_{j}] &= \prod_{\{i\vert V_{c_{i}} \in \Cb_{j} \}} \frac{Q[\Cb^{(i)}]}{Q[\Cb^{(i-1)}]} \nonumber \\
                                      &= \prod_{\{i\vert V_{c_{i}} \in {\Cb_{j} \cap \Anc{S}{}} \}} \frac{Q[\Cb^{(i)}]}{Q[\Cb^{(i-1)}]} \prod_{\{i\vert V_{h_{i}} \in \Hb_{j} \}} \frac{\Qs[\Hb^{(i)}]}{\Qs[\Hb^{(i-1)}]}  \nonumber \\
                                 &=  \prod_{\{i\vert V_{s_{i}} \in {\Cb_{j} \setminus \Hb_{j}} \}} \frac{Q[\Cb^{(i)}]}{Q[\Cb^{(i-1)}]} \prod_{\{i\vert V_{h_{i}} \in \Hb_{j} \}} \frac{\Qs[\Hb^{(i)}]}{\Qs[\Hb^{(i-1)}]}.
        \end{align}
    
        Based on the definition $\Cb_j$ and $\Hb_j$, $\Cb_j \setminus \Hb_j$ is the union of some \ccos of $\Anc{S}{}$.
        Denote the c-components of $\Cb_j \setminus \Hb_j$ by $A_1, \dots, A_t$.
        Applying Lemma \ref{lemma:q_decom} to $\Cb = \Anc{S}{}$ with topological order $V_{s_1} < \cdots < V_{s_{n'}}$, we have
        \begin{align*}
            Q[A_1] &= \prod_{\{i\vert V_{s_{i}} \in A_1 \}} \frac{Q[\Cb^{(i)}]}{Q[\Cb^{(i-1)}]} \\
                &~\vdots \\
            Q[A_t] &= \prod_{\{i\vert V_{s_{i}} \in A_t \}} \frac{Q[\Cb^{(i)}]}{Q[\Cb^{(i-1)}]}
        \end{align*}
        By multiplying all $Q[A_i]$, we have
        $$
            Q[\Cb_j \setminus \Hb_j] = Q[A_1 \cup A_2 \dots \cup A_t] = \prod_{t' = 1}^{t} Q[A_{t'}] = \prod_{\{i\vert V_{s_{i}} \in {\Cb_{j} \setminus \Hb_{j}} \}} \frac{Q[\Cb^{(i)}]}{Q[\Cb^{(i-1)}]}.
        $$
        By substituting this in Equation \ref{eq: decom_eq2}, we obtain
        $$
         \Qs[\Hb_{j}] = \prod_{\{i\vert V_{h_{i}} \in \Hb_{j} \}} \frac{\Qs[\Hb^{(i)}]}{\Qs[\Hb^{(i-1)}]}.
        $$
        
        To complete the proof, it suffices to show that $\Qs[\Hb^{(i)}]$ are computable from $\Qs[\Hb]$.
        Since we have used the topological order, $\Hb^{i}$ is an ancestral set in $\Gr[\Hb]$.
        Therefore, Lemma \ref{lemma: qs-margin} implies that
        $$
            \Qs[\Hb^{i}] = \mysum{\Hb\setminus \Hb^{i}}{} \Qs[\Hb].
        $$
        This shows that $\Qs[\Hb_i]$ is uniquely computable from $\Qs[\Hb]$. 
    \end{proof}

    \begingroup
        \def\thetheorem{\ref{thm: main}}
        \begin{theorem}
            For disjoint subsets $\Xb$ and $\Yb$ of $\Vb$, let 
            \begin{equation*}
                \Xs \coloneqq \Xb \cap \Vs, \quad \Xns \coloneqq \Xb \cap \Vns,~\text{and} ~ \Yns \coloneqq \Yb \cap \Vns.
            \end{equation*}
            \begin{enumerate}
                \item Conditional causal effect $\prs_{\Xb}(\Yb)$ is \s-ID in $\Grs$ if and only if
                \begin{equation}
                    (\Xs \independent \Yb \vert \Xns, S)_{\Grs_{\underline{\Xs}\overline{\Xns}}},
                \end{equation}
                and $\prs_{\Xns}(\Yb, \Xs)$ is \s-ID in $\Grs$.
                \item Suppose $\Db \coloneqq \Anc{\Yns}{\Grs[\Vns \setminus \Xns]}$ and let $\{\Db_i\}_{i=1}^k$ denote the \scos of $\Db$ in $\Grs$.
                Conditional causal effect $\prs_{\Xns}(\Yb, \Xs)$ is \s-ID in $\Grs$ if there are no \s-Hedge in $\Grs$ for any of $\{\Db_i\}_{i=1}^k$ .
            \end{enumerate}
        \end{theorem}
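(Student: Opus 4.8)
The plan is to treat the two parts in turn, and within Part~1 to separate the straightforward sufficiency direction from the harder necessity direction.

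For the sufficiency direction of Part~1, I would note that the separation condition \eqref{eq: main thm} is exactly the precondition of Rule~2 of do-calculus, applied with intervened set $\Xns$, rule-set $\Xs$, outcomes $\Yb$, and context $S=1$ (recall $\Grs_{\underline{\Xs}\overline{\Xns}} = \Grs_{\overline{\Xns}\underline{\Xs}}$ and $m$-separation is symmetric). Applying Rule~2 in the sub-population yields $\prs_{\Xb}(\Yb) = \pro_{\Xns,\Xs}(\Yb \mid S=1) = \prs_{\Xns}(\Yb \mid \Xs)$. Because $\Xns \subseteq \Vns$ consists of non-ancestors of $S$, Rule~3 gives $\prs_{\Xns}(\Xs) = \prs(\Xs)$, which is directly observed; hence $\prs_{\Xns}(\Yb \mid \Xs) = \prs_{\Xns}(\Yb,\Xs)/\prs(\Xs)$ is a fixed function of the \s-ID quantity $\prs_{\Xns}(\Yb,\Xs)$ and the observational law, so $\prs_{\Xb}(\Yb)$ is \s-ID. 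Marginalizing the auxiliary set $\Wb = \Vs \setminus (\Xs \cup \Ys)$ and invoking Rule~3 once more to strip the intervention from the $\Vs$-factor then produces the explicit expression \eqref{eq: expression 1}.

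For the necessity direction of Part~1, the second conjunct is immediate: once \eqref{eq: main thm} holds, the identity $\prs_{\Xns}(\Yb,\Xs) = \prs(\Xs)\,\prs_{\Xb}(\Yb)$ shows that \s-ID of $\prs_{\Xb}(\Yb)$ forces \s-ID of $\prs_{\Xns}(\Yb,\Xs)$. The real work is showing that \eqref{eq: main thm} is itself necessary, which I would argue contrapositively: given an active path between some $X \in \Xs$ and some $Y \in \Yb$ conditioned on $(\Xns, S)$ in $\Grs_{\underline{\Xs}\overline{\Xns}}$, I would construct two augmented SCMs on $\Grs$ that agree on $\prs(\Vb) > 0$ yet differ on $\prs_{\Xb}(\Yb)$. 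The construction places near-deterministic XOR-type mechanisms along the witnessing path so that conditioning on $S=1$ couples $\Xs$ to $\Yb$ observationally, while the $\underline{\Xs}$-surgery defining the intervention severs precisely this coupling, forcing the interventional outcomes apart; this parallels the latent-free construction of \cite[Theorem~2]{abouei2023sid} but must now be realized through bidirected (latent) edges.

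For Part~2, I would reduce \s-ID of $\prs_{\Xns}(\Yb,\Xs)$ to computing $\prs_{\Xns}(\Yns \mid \Vs)$, which by Lemma~\ref{lemma: inter-cond} equals $\sum_{\Db \setminus \Yns}\prod_i \Qs[\Db_i]$, with $\Db = \Anc{\Yns}{\Grs[\Vns \setminus \Xns]}$ and $\{\Db_i\}$ the \scos of $\Db$. Since $\Qs[\Vns] = \prs(\Vns \mid \Vs)$ is observed and the containing \sco $\Tb_i$ of $\Vns$ yields $\Qs[\Tb_i]$ via Lemma~\ref{lemma: qs-decom}, it remains to compute each single-\sco target $\Qs[\Db_i]$ from $\Qs[\Tb_i]$. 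I would show Function \textbf{sID-Single} does this correctly whenever it does not fail: each recursion replaces $\Tb$ by the \sco $\Tb'$ of $\Anc{\Cb}{\Grs[\Tb]}$ containing $\Cb$, which is sound by Lemmas~\ref{lemma: qs-margin} and~\ref{lemma: qs-decom} and strictly shrinks $\Tb$, so the procedure terminates. By Lemma~\ref{lemma: algorithm-hedge}, \textsc{Fail} occurs only when the current $\Tb$ is an \s-Hedge for $\Cb$; hence the absence of any \s-Hedge for the $\Db_i$ guarantees success and a correct expression for every $\Qs[\Db_i]$, establishing \s-ID of $\prs_{\Xns}(\Yb,\Xs)$.

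The hard part will be the necessity of \eqref{eq: main thm}: unlike the sufficiency and Part~2 arguments, which are essentially bookkeeping layered on top of do-calculus and the $\Qs[\cdot]$ lemmas, it demands an explicit pair of SCMs whose latent (bidirected) structure realizes the witnessing active path while leaving $\prs(\Vb)$ unchanged --- this is where the presence of latent variables genuinely complicates the DAG construction of \cite{abouei2023sid}.
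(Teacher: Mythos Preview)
Your plan is correct and aligns with the paper's proof for the sufficiency direction of Part~1 and for all of Part~2: Rule~2 to convert the $\Xs$-intervention into conditioning, Rule~3 to strip the $\Xns$-intervention from the $\Vs$-factor, Lemma~\ref{lemma: inter-cond} to decompose $\prs_{\Xns}(\Yns \mid \Vs)$ into a sum of products of $\Qs[\Db_i]$'s, and Lemma~\ref{lemma: algorithm-hedge} to guarantee that \textbf{sID-Single} succeeds when no \s-Hedge exists. Those pieces are essentially identical to the paper's argument.

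The one place you diverge is the necessity of \eqref{eq: main thm}, which you flag as the hard part and propose to handle by building a fresh pair of SCMs with XOR-type mechanisms along the witnessing active path, adapted to bidirected edges. The paper sidesteps this work entirely: it lifts $\Grs$ to its canonical DAG $\Gr'$ by replacing each bidirected edge with an explicit latent, observes that the $m$-separation failure in $\Grs_{\underline{\Xs}\overline{\Xns}}$ is the same $d$-separation failure in $\Gr'_{\underline{\Xs}\overline{\Xns}}$ (and that $\Xs,\Xns$ are unchanged since latents introduced this way add no new directed paths among observed variables), and then invokes \cite[Theorem~2]{abouei2023sid} \emph{verbatim} on the DAG $\Gr'$. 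That theorem furnishes two SEMs on $\Gr'$ with $\pro^{\M_1}(\Vb,\Ub \mid S=1)=\pro^{\M_2}(\Vb,\Ub \mid S=1)$ but differing interventionals; marginalizing out $\Ub$ immediately gives two SCMs on $\Grs$ with the same $\prs(\Vb)$ and different $\prs_{\Xb}(\Yb)$. Your direct-construction route would also succeed, but the paper's lifting trick shows that the latent structure does not actually complicate this step --- the DAG construction transfers for free, so what you identified as the hard part is in fact a one-line reduction.
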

        \addtocounter{theorem}{-1}
    \endgroup

    \begin{proof}
        We first show that $(\Xs \independent \Yb \vert \Xns, S)_{\Grs_{\underline{\Xs}\overline{\Xns}}}$ is a necessary condition.
        Suppose $(\Xs \notindependent \Yb \vert \Xns, S)_{\Grs_{\underline{\Xs}\overline{\Xns}}}$.
        Denote by $\Gr'$ the equivalent DAG of $\Grs$, obtained by adding the unobserved variables. Theorem 2 in \cite{abouei2023sid} shows that there are two SEMs $\M_1$ and $\M_2$ compatible with $\Gr'$ such that
        $$   
           \pro^{\M_1}(\Vb, \Ub \vert S = 1) =  \pro^{\M_2}(\Vb, \Ub \vert S = 1), \text{and}
        $$
        $$
            \pro^{\M_1}_{\Xb}(\Yb \vert S = 1) \neq \pro^{\M_2}_{\Xb}(\Yb \vert S = 1).
        $$
        
        We use these SEMs to construct SCMs $\M'_1$ and $\M'_2$ compatible with $\Grs$, in which we have
        $$
        \pro^{\M'_1}(\Vb, \Ub \vert S = 1) =  \pro^{\M'_2}(\Vb, \Ub \vert S = 1) \Longrightarrow \pro^{\M'_1}(\Vb \vert S = 1) =  \pro^{\M'_2}(\Vb \vert S = 1).
        $$
        Note that all causal effects in both models are the same for $\Grs$ and $\Gr$.
        Hence, when $(\Xs \notindependent \Yb \vert \Xns, S)_{\Grs_{\underline{\Xs}\overline{\Xns}}}$ holds, then $\prs_{\Xb}(\Yb)$ is not \s-ID.
        It shows that this condition is a necessary condition.
        
        Now suppose that $(\Xs \independent \Yb \vert \Xns, S)_{\Gr_{\underline{\Xs}\overline{\Xns}}}$ holds.
        According to Rule 2 of do-calculus, we have 
        $$
            \pro_{\Xb}(\Yb \vert S = 1) =  \pro_{\Xns}(\Yb \vert \Xs, S = 1)
        $$
        The above equation shows that $\pro_{\Xb}(\Yb \vert S = 1)$ is s-ID in $\Grs$ if and only if $\pro_{\Xns}(\Yb \vert \Xs, S = 1)$ is s-ID in $\Grs$.
        Moreover,
        $$
            \pro_{\Xns}(\Yb \vert \Xs, S = 1) = \frac{\pro_{\Xns}(\Yb, \Xs \vert S = 1)}{\pro_{\Xns}(\Xs \vert S = 1)}.
        $$
        Since $\Xns \cap \Vs = \varnothing$, acccording to Rule 3 of do-calculus, we have $\pro_{\Xns}(\Xs \vert S = 1) = \pro(\Xs \vert S = 1)$.
        Hence,
        \begin{equation} \label{eq: appendix-main-theorem}
            \pro_{\Xns}(\Yb \vert \Xs, S = 1) =  \frac{\pro_{\Xns}(\Yb, \Xs \vert S = 1)}{\pro(\Xs \vert S = 1)}.
        \end{equation}
        This shows that \s-Identifiability of $\pro_{\Xns}(\Yb \vert \Xs, S = 1)$ and $\pro_{\Xns}(\Yb, \Xs \vert S = 1)$ are equivalent (note that $\pro(\Xs \vert S = 1) > 0$ due to the positivity assumption in Definition \ref{def:S-ID}).

        Let $\Wb \coloneqq \Vs \setminus (\Xs \cup \Ys)$, then 
        \begin{align*}
            \pro_{\Xns} (\Yb \vert \Xs, S = 1)
            &= \mysum{\Wb}{} \pro_{\Xns} (\Yb, \Wb \vert \Xs, S = 1) \\
            &= \mysum{\Wb}{} \pro_{\Xns} (\Ys, \Wb \vert \Xs, S = 1) \pro_{\Xns}(\Yns \vert \Xs, \Ys, \Wb, S = 1) \\
            &= \mysum{\Wb}{} \pro_{\Xns} (\Ys, \Wb \vert \Xs, S = 1) \pro_{\Xns}(\Yns \vert \Vs, S = 1).
        \end{align*}

        The above equalities have been obtained by a marginalization over $\Wb$, chain rule, and replacing the definition of $\Wb$, respectively.
        According to Rule 3 of do-calculus, since $\Xns \cap \Vs = \varnothing$, we have 
        \begin{equation*}
            \pro_{\Xns} (\Ys, \Wb \vert \Xs, S = 1)  =  \pro (\Ys, \Wb \vert \Xs, S = 1).
        \end{equation*}
        Moreover, according to Lemma \ref{lemma: inter-cond}, we have
        $$
            \pro_{\Xns}(\Yns \vert \Vs, S = 1) = \mysum{\Db \setminus \Yns}{} \Qs[\Db_1]\dots \Qs[\Db_k].
        $$
        Combining the aforementioned equations with Equation \eqref{eq: appendix-main-theorem}, we get
        \begin{equation}
            \pro_{\Xns} (\Yb, \Xs \vert S = 1) = \mysum{\Wb}{} \pro (\Xs, \Ys, \Wb \vert S = 1) \mysum{\Db \setminus \Yns}{} \Qs[\Db_1]\dots \Qs[\Db_k].
        \end{equation}
        Now, note that according to Lemma \ref{lemma: algorithm-hedge}, if there is not any \s-Hedge for $\Db_i$s, then Function sID-Single will compute $\Qs[\Db_i]$s.
        Therefore, we can compute $\pro_{\Xns} (\Yb, \Xs \vert S = 1)$ using the above equation, which concludes the proof.
        As a result, if Equation \eqref{eq: main thm} holds, we have
        \begin{equation} \label{eq: appendix-main-th-eq2}
            \prs_{\Xb}(\Yb) =  \prs_{\Xns}(\Yb \vert \Xs) = \mysum{\Wb}{} \pro (\Ys, \Wb \vert \Xs, S = 1) \mysum{\Db \setminus \Yns}{} \Qs[\Db_1]\dots \Qs[\Db_k].
        \end{equation}
    \end{proof}

    \begingroup
    \def\thetheorem{\ref{thm: reduction}}
        \begin{theorem} 
            For disjoint subsets $\Xb$ and $\Yb$ of $\Vb$, $\pro_{\Xb}(\Yb)$ can be uniquely computed from $\prs(\Vb)$ in the augmented ADMG $\Grs$ if and only if 
            \begin{equation*}
                (\Yb \independent S \vert \Xb)_{\Grs_{\overline{\Xb}}},
            \end{equation*}
            and $\prs_{\Xb}(\Yb)$ is \s-ID in $\Grs$.
        \end{theorem}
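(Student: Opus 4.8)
The plan is to prove the two directions separately, using the equality $\pro_{\Xb}(\Yb)=\prs_{\Xb}(\Yb)$ furnished by Rule~1 of do-calculus (Remark~\ref{remark: reduction}) as the bridge between the population effect and its sub-population counterpart. The sufficiency direction is then nearly immediate, while the real work lies in the necessity direction, and within it a single impossibility construction.

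For sufficiency ($\Leftarrow$), I would assume Equation~\eqref{eq: reduction} holds and that $\prs_{\Xb}(\Yb)$ is \s-ID. Applying Rule~1 with $\Zb=\{S\}$ and empty conditioning set yields $\pro_{\Xb}(\Yb \vert S=1)=\pro_{\Xb}(\Yb)$, i.e. $\pro_{\Xb}(\Yb)=\prs_{\Xb}(\Yb)$ in every SCM compatible with $\Grs$. Since $\prs_{\Xb}(\Yb)$ is \s-ID, Algorithm~\ref{algo: s-ID} returns an expression for it in terms of $\prs(\Vb)$, and by the displayed equality the same expression computes $\pro_{\Xb}(\Yb)$; hence $\pro_{\Xb}(\Yb)$ is recoverable from $\prs(\Vb)$.

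For necessity ($\Rightarrow$) I would argue the contrapositive, splitting the negation of the right-hand side into two cases. First, if Equation~\eqref{eq: reduction} holds but $\prs_{\Xb}(\Yb)$ is not \s-ID, then by Definition~\ref{def:S-ID} there exist compatible SCMs $\M_1,\M_2$ with $\prs^{\M_1}(\Vb)=\prs^{\M_2}(\Vb)>0$ but $\prs^{\M_1}_{\Xb}(\Yb)\neq\prs^{\M_2}_{\Xb}(\Yb)$; Rule~1 forces $\pro^{\M_i}_{\Xb}(\Yb)=\prs^{\M_i}_{\Xb}(\Yb)$ for $i=1,2$, so these models agree on $\prs(\Vb)$ yet disagree on $\pro_{\Xb}(\Yb)$, witnessing non-recoverability. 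The remaining case is that Equation~\eqref{eq: reduction} fails, i.e. $(\Yb\notindependent S\vert\Xb)_{\Grs_{\overline{\Xb}}}$, where I must again produce two SCMs that share $\prs(\Vb)$ but differ on $\pro_{\Xb}(\Yb)$.

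The main obstacle is this last construction. The difficulty is that, because $S$ has no children, $\pro_{\Xb}(\Yb)$ is governed entirely by the $\Vb$-part of the model, whereas the observed input $\prs(\Vb)=\pro(\Vb\vert S=1)$ couples that part to the selection mechanism $f_S$; to keep $\prs(\Vb)$ fixed while perturbing $\pro_{\Xb}(\Yb)$, a change in the $\Vb$-mechanism must be \emph{exactly} compensated by a change in $f_S$. The active path between $\Yb$ and $S$ in $\Grs_{\overline{\Xb}}$ is precisely the degree of freedom that makes this possible, since it lets the selection event $\{S=1\}$ reweight the post-interventional law of $\Yb$. I would build the pair explicitly by first reducing to a shortest active path, assigning binary structural functions that transmit a one-bit signal along it, and then solving for the two selection mechanisms that equalize the $S=1$ conditional while leaving the $\Vb$-marginals distinct; the delicate bookkeeping is verifying positivity of $\prs(\Vb)$ and checking that every assignment respects the directed and bidirected edges of $\Grs$. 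Alternatively, one could invoke the completeness of the RC algorithm for \s-Recoverability \cite{Correa_Tian_Bareinboim_2019} and match its failure condition to the negation of Equation~\eqref{eq: reduction} together with non-\s-ID, but the direct construction is more self-contained and is the route I would pursue.
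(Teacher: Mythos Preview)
Your overall structure is sound, and the sufficiency direction together with Case~1 of necessity are handled correctly in spirit. Two points deserve mention.

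First, a minor slip in the sufficiency direction: you invoke Algorithm~\ref{algo: s-ID} to produce an expression, but the paper only establishes that this algorithm is \emph{sound}, not complete (completeness is merely conjectured). Hence from the hypothesis that $\prs_{\Xb}(\Yb)$ is \s-ID you cannot conclude the algorithm succeeds. This is harmless, however: Definition~\ref{def:S-ID} already says $\prs_{\Xb}(\Yb)$ is uniquely determined by $\prs(\Vb)$, and combined with $\pro_{\Xb}(\Yb)=\prs_{\Xb}(\Yb)$ from Rule~1 that is all you need. Simply drop the reference to the algorithm.

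Second, and more substantively, the paper does not carry out the explicit construction you outline for Case~2 (failure of Equation~\eqref{eq: reduction}). It simply invokes \cite[Theorem~2]{Bareinboim_Tian_2015}, which states precisely that $(\Yb\notindependent S\vert\Xb)_{\Grs_{\overline{\Xb}}}$ already implies $\pro_{\Xb}(\Yb)$ is not \s-recoverable. Your direct path-based construction would essentially re-prove this known result---legitimate and self-contained, but far more work than necessary; your alternative of matching against the completeness of RC in \cite{Correa_Tian_Bareinboim_2019} would also work but is less direct than citing the earlier necessity theorem. The paper's proof therefore fits in three lines: cite \cite{Bareinboim_Tian_2015} when the separation fails, and when it holds apply Rule~1 to reduce \s-recoverability of $\pro_{\Xb}(\Yb)$ to \s-ID of $\prs_{\Xb}(\Yb)$, which simultaneously handles sufficiency and your Case~1.
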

    \addtocounter{theorem}{-1}
    \endgroup
    
    \begin{proof}
        If $(\Yb \notindependent S \vert \Xb)_{\Gr_{\overline{\Xb}}}$, then \cite[Theorem 2]{Bareinboim_Tian_2015} implies that $\pro_{\Xb}(\Yb)$ is not \s-recoverable.
        If $(\Yb \independent S \vert \Xb)_{\Gr_{\overline{\Xb}}}$, then according to Rule 1 of do-calculus, we have
        \begin{equation} \label{eq: rule1-sub-pop}
            \pro_{\Xb}(\Yb \vert S = 1) = \pro_{\Xb}(\Yb).
        \end{equation}
        Therefore, the identifiability of $\pro_{\Xb}(\Yb)$ is equivalent to $\pro_{\Xb}(\Yb \vert S = 1)$, which concludes the proof.
    \end{proof}

\section{Numerical Experiment} \label{apd: exp}
   We conduct a numerical experiment to demonstrate the significance of the s-ID problem and assess the output of Algorithm \ref{algo: s-ID}. Note that the experiment is simple and can be run on a system with any level of computational power.

    Consider the following structural causal model (SCM) with the causal graph depicted in Figure \ref{fig: exp-experiment}.

    \begin{figure}[t]
            \centering
            \begin{tikzpicture}
            \tikzstyle{block-dashed} = [draw, fill=white, dashed, circle, text centered, inner sep=0.25cm]
                % vertices
                \node [block-dashed, label=center:$U_1$](U1) {};
                \node [block, left = 1 of U1, label=center:$X$](X) {};
                \node [block, right= 1 of U1, label=center:$Z$](Z) {};
             
                \node [block-dashed, below = 1.5 of U1, label=center:$U_2$](U2) {};    
                \node [block-s, right= 1 of U2, label=center:$S$](S) {};
                \node [block, left = 1 of U2, label=center:$Y$](Y) {};
                %edges
                \draw[edge] (X) to (Y);
                \draw[edge] (Z) to (S); 
                \draw[edge, dashed] (U1) to (Z);
                \draw[edge, dashed] (U1) to (X);  
                \draw[edge, dashed] (U2) to (S);
                \draw[edge, dashed] (U2) to (Y);     
            \end{tikzpicture}
            \caption{A DAG, where $U_1$ and $U_2$ are unobservable, and $S$ represents the auxiliary variable for modeling a sub-population.}
            \label{fig: exp-experiment}
    \end{figure}
    \begin{align*}
        U_1 &\sim Bern(0.5) \\
        U_2 &\sim Bern(0.5) \\
        X  &= U_1 \oplus \varepsilon_x,~\varepsilon_x \sim Bern(0.2) \\
        Y  &= X \oplus U_2\\
        Z  &= U_1 \oplus \varepsilon_z,~\varepsilon_z \sim Bern(0.2)
    \end{align*}
    Herein, $\oplus$ denotes the XOR operation, all the variables $\{U_1, U_2, \varepsilon_x, \varepsilon_z,\varepsilon_{s_1}, \varepsilon_{s_2},  \varepsilon_{s_3} \}$ are independent, and $Bern(p)$ denotes a Bernoulli random variable with parameter $p$.
    Now, consider the sub-population with the following mechanism:
    \begin{equation*}
        S  = (Z \times \varepsilon_{s_1}) \oplus (U_2 \times \varepsilon_{s_2}) \oplus \varepsilon_{s_3}, (\varepsilon_{s_1},  \varepsilon_{s_2},  \varepsilon_{s_3}) \sim (Bern(0.6), Bern(0.9), Bern(0.1)) 
    \end{equation*}
    We now consider the problem of estimating the causal effect of $X$ on $Y$ in this sub-population.
    Particularly, our goal is to calculate $\pro_{X = 0}(Y = 1 \vert S = 1)$.
    
    \subsection{Theoretical Analysis}
        To analyze and compare the \s-ID and ID algorithms, we first determine the exact values of $\pro_{X=0}(Y=1 \vert S = 1)$ and $\pro_{X=0}(Y=1)$.
        According to the equation of $Y$ in the SCM, we have
        $$
            \pro_{X = x}(Y = y) = \pro_{X = x}(y = x \oplus U_2) = \pro(U_2 = y \oplus x).
        $$
        Since $U_2$ is a Bernoulli random variable with parameter 0.5, the above probability always equals 0.5. For $\pro_{X}(Y \vert S = 1)$, we have
        $$  
            \pro_{X = x}(Y = y \vert S = 1) = \pro_{X = x}(Y = x \oplus U_2 \vert S = 1) = \pro (U_2 = x \oplus y \vert S = 1).
        $$
        Therefore, for $X = 0$ and $Y = 1$, we need to compute $\pro (U_2 = 1 \vert S = 1)$.
        By using the equation of $S$ in the model,
        \begin{align*}
            \pro (U_2 = 1 \vert S = 1)  &= \frac{\pro (U_2 = 1, S = 1)}{\pro (U_2 = 1, S = 1) + \pro (U_2 = 0, S = 1)} \nonumber \\
                                        &= \frac{\pro (U_2 = 1) \pro(S = 1 \vert U_2 = 1)}{\pro (U_2 = 0) \pro(S = 1 \vert U_2 = 0) + \pro (U_2 = 1) \pro(S = 1 \vert U_2 = 1)} 
        \end{align*}
        Note that $\pro(U_2 = 0) = \pro(U_2 = 1) = 0.5$, hence, we have
        \begin{equation*}
            \pro (U_2 = 1 \vert S = 1) = \frac{\pro(S = 1 \vert U_2 = 1)}{\pro(S = 1 \vert U_2 = 0) + \pro(S = 1 \vert U_2 = 1)}
        \end{equation*}
        Since $\varepsilon_{s_1}$ and $\varepsilon_{s_3}$ and $Z$ are independent variables, let $W$ be $Z \times \varepsilon_{s_1} \oplus \varepsilon_{s_3}$; then, we have
        
        $$
            W \sim Bern(0.1 \times (1 - 0.6 \times 0.5)  + 0.9 \times 0.5 \times 0.6)  = Bern(0.34).
        $$
        Note that $W$ and $U_2 \times \varepsilon_{s_2}$ are independent, and $S = W + U_2 \times \varepsilon_{s_2}$; thus,
        
        \begin{align*}
            \pro(S = 1 \vert U_2 = 0) &= P(W = 1) = 0.34 \\
            \pro(S = 1 \vert U_2 = 1) &= P(W = 1)\pro(\varepsilon_{s_2} = 0) +  P(W = 0)\pro(\varepsilon_{s_2} = 1) \\
            &= 0.34 \times 0.1 + 0.66 \times 0.9 = 0.628
        \end{align*}
        Hence,
        \begin{equation} \label{eq: true}
            \pro_{X = 0}(Y = 1 \vert S = 1) = \frac{0.628}{0.628 + 0.34} \approx 0.648
        \end{equation}

    \subsection{Empirical Analysis}
         The ID algorithm returns the following simple expression for $\pro_{X}(Y)$
        \begin{equation}\label{eq: eq-id}
            \pro_{X}(Y) = \pro(Y \vert X).
        \end{equation}
        On the other hand, the s-ID algorithm returns
        \begin{equation} \label{eq: eq-sid}
            \pro_{X}(Y \vert S = 1) = \sum_{Z} \pro(Y \vert X, Z, S=1) \pro(Z \vert S=1). 
        \end{equation}
        Next, we generated 3000 samples from the population.
        We then computed $S$ for each generated sample and collected the samples where $S=1$, resulting in 1469 available samples from our target sub-population.
        Recall that our goal was to estimate $\pro_{X = 0}(Y = 1 \vert S = 1)$.
        Consider the following two approaches.
        \begin{itemize}
            \item If we consider the s-ID algorithm and the existence of the selection bias $S$, applying the simple plug-in estimator to the formula in \eqref{eq: eq-sid} results in
                \begin{align} \label{eq: sid est}
                     \pro_{X = 0}(Y = 1 \vert S = 1) \approx &\hat{\pro}(Y = 1 \vert X = 0, Z = 0) \hat{\pro}(Z = 0) \nonumber \\
                     &+ \hat{\pro}(Y = 1 \vert X = 0, Z = 1) \hat{\pro}(Z = 1)=0.641
                \end{align}
            \item
                Suppose we ignore the presence of sub-population and apply the ID algorithm. In this scenario, we have to estimate $\pro_{X}(Y)$ using the empirical distribution of variables, i.e., $\hat{\pro}(\Vb)$. If we use the ID algorithm, we need to estimate the quantity mentioned in equation \eqref{eq: eq-id}. The empirical estimate for this case is             
                \begin{equation} \label{eq: id est}
                    \pro_{X}(Y) \approx \hat{\pro}(Y \vert X) = 0.725.
                \end{equation}
        \end{itemize}

        A comparison between the estimation results of our proposed method in Equation \ref{eq: sid est} and the classical ID problem in Equation \eqref{eq: id est} with the true underlying value in Equation \eqref{eq: true} shows that our approach accurately computes the target causal effect. On the other hand, ignoring the subtleties related to sub-population can lead to erroneous estimation.

%%%%%%%%%%%%%%%%%%%%%%%%%%%%%%%%%%%%%%%%%%%%%%%%%%%%%%%%%%%%

\newpage
\section*{NeurIPS Paper Checklist}

\begin{enumerate}

\item {\bf Claims}
    \item[] Question: Do the main claims made in the abstract and introduction accurately reflect the paper's contributions and scope?
    \item[] Answer: \answerYes{} % Replace by \answerYes{}, \answerNo{}, or \answerNA{}.
    \item[] Justification: We provided detailed explanations for all claims, including the new graph structures and their properties in Section \ref{sec: s-ID}. In Section \ref{sec: main}, we presented our algorithm for solving the s-ID problem.
    \item[] Guidelines:
    \begin{itemize}
        \item The answer NA means that the abstract and introduction do not include the claims made in the paper.
        \item The abstract and/or introduction should clearly state the claims made, including the contributions made in the paper and important assumptions and limitations. A No or NA answer to this question will not be perceived well by the reviewers. 
        \item The claims made should match theoretical and experimental results, and reflect how much the results can be expected to generalize to other settings. 
        \item It is fine to include aspirational goals as motivation as long as it is clear that these goals are not attained by the paper. 
    \end{itemize}

\item {\bf Limitations}
    \item[] Question: Does the paper discuss the limitations of the work performed by the authors?
    \item[] Answer: \answerYes{} % Replace by \answerYes{}, \answerNo{}, or \answerNA{}.
    \item[] Justification: We concretely discussed the problem setting in Sections \ref{sec: preliminaries} and \ref{sec: s-ID}. We also analyzed our proposed algorithm and its properties in Section \ref{sec: main}.
    \item[] Guidelines:
    \begin{itemize}
        \item The answer NA means that the paper has no limitation while the answer No means that the paper has limitations, but those are not discussed in the paper. 
        \item The authors are encouraged to create a separate "Limitations" section in their paper.
        \item The paper should point out any strong assumptions and how robust the results are to violations of these assumptions (e.g., independence assumptions, noiseless settings, model well-specification, asymptotic approximations only holding locally). The authors should reflect on how these assumptions might be violated in practice and what the implications would be.
        \item The authors should reflect on the scope of the claims made, e.g., if the approach was only tested on a few datasets or with a few runs. In general, empirical results often depend on implicit assumptions, which should be articulated.
        \item The authors should reflect on the factors that influence the performance of the approach. For example, a facial recognition algorithm may perform poorly when image resolution is low or images are taken in low lighting. Or a speech-to-text system might not be used reliably to provide closed captions for online lectures because it fails to handle technical jargon.
        \item The authors should discuss the computational efficiency of the proposed algorithms and how they scale with dataset size.
        \item If applicable, the authors should discuss possible limitations of their approach to address problems of privacy and fairness.
        \item While the authors might fear that complete honesty about limitations might be used by reviewers as grounds for rejection, a worse outcome might be that reviewers discover limitations that aren't acknowledged in the paper. The authors should use their best judgment and recognize that individual actions in favor of transparency play an important role in developing norms that preserve the integrity of the community. Reviewers will be specifically instructed to not penalize honesty concerning limitations.
    \end{itemize}

\item {\bf Theory Assumptions and Proofs}
    \item[] Question: For each theoretical result, does the paper provide the full set of assumptions and a complete (and correct) proof?
    \item[] Answer: \answerYes{} % Replace by \answerYes{}, \answerNo{}, or \answerNA{}.
    \item[] Justification: We included all the assumptions in the main part of the paper. We also proved all our results in Appendices \ref{apd: lemmas} and \ref{apd: main}.
    \item[] Guidelines:
    \begin{itemize}
        \item The answer NA means that the paper does not include theoretical results. 
        \item All the theorems, formulas, and proofs in the paper should be numbered and cross-referenced.
        \item All assumptions should be clearly stated or referenced in the statement of any theorems.
        \item The proofs can either appear in the main paper or the supplemental material, but if they appear in the supplemental material, the authors are encouraged to provide a short proof sketch to provide intuition. 
        \item Inversely, any informal proof provided in the core of the paper should be complemented by formal proofs provided in appendix or supplemental material.
        \item Theorems and Lemmas that the proof relies upon should be properly referenced. 
    \end{itemize}

    \item {\bf Experimental Result Reproducibility}
    \item[] Question: Does the paper fully disclose all the information needed to reproduce the main experimental results of the paper to the extent that it affects the main claims and/or conclusions of the paper (regardless of whether the code and data are provided or not)?
    \item[] Answer: \answerYes{} % Replace by \answerYes{}, \answerNo{}, or \answerNA{}.
    \item[] Justification: All steps of the conducted experiment are detailed in Appendix \ref{apd: exp}. We utilize synthetic data corresponding to a causal model defined in Appendix \ref{apd: exp}. Hence, the data generation process is straightforward. After that, we only need some estimation of certain distributions from this data to reproduce the results.
    \item[] Guidelines:
    \begin{itemize}
        \item The answer NA means that the paper does not include experiments.
        \item If the paper includes experiments, a No answer to this question will not be perceived well by the reviewers: Making the paper reproducible is important, regardless of whether the code and data are provided or not.
        \item If the contribution is a dataset and/or model, the authors should describe the steps taken to make their results reproducible or verifiable. 
        \item Depending on the contribution, reproducibility can be accomplished in various ways. For example, if the contribution is a novel architecture, describing the architecture fully might suffice, or if the contribution is a specific model and empirical evaluation, it may be necessary to either make it possible for others to replicate the model with the same dataset, or provide access to the model. In general. releasing code and data is often one good way to accomplish this, but reproducibility can also be provided via detailed instructions for how to replicate the results, access to a hosted model (e.g., in the case of a large language model), releasing of a model checkpoint, or other means that are appropriate to the research performed.
        \item While NeurIPS does not require releasing code, the conference does require all submissions to provide some reasonable avenue for reproducibility, which may depend on the nature of the contribution. For example
        \begin{enumerate}
            \item If the contribution is primarily a new algorithm, the paper should make it clear how to reproduce that algorithm.
            \item If the contribution is primarily a new model architecture, the paper should describe the architecture clearly and fully.
            \item If the contribution is a new model (e.g., a large language model), then there should either be a way to access this model for reproducing the results or a way to reproduce the model (e.g., with an open-source dataset or instructions for how to construct the dataset).
            \item We recognize that reproducibility may be tricky in some cases, in which case authors are welcome to describe the particular way they provide for reproducibility. In the case of closed-source models, it may be that access to the model is limited in some way (e.g., to registered users), but it should be possible for other researchers to have some path to reproducing or verifying the results.
        \end{enumerate}
    \end{itemize}

\item {\bf Open access to data and code}
    \item[] Question: Does the paper provide open access to the data and code, with sufficient instructions to faithfully reproduce the main experimental results, as described in supplemental material?
    \item[] Answer: \answerNo{} % Replace by \answerYes{}, \answerNo{}, or \answerNA{}.
    \item[] Justification: As discussed in the previous question, our numerical experiment consists of a few simple steps. One can generate data according to the defined causal model in our experiment and reproduce the results. Hence, we have not provided the code.
    \item[] Guidelines:
    \begin{itemize}
        \item The answer NA means that paper does not include experiments requiring code.
        \item Please see the NeurIPS code and data submission guidelines (\url{https://nips.cc/public/guides/CodeSubmissionPolicy}) for more details.
        \item While we encourage the release of code and data, we understand that this might not be possible, so “No” is an acceptable answer. Papers cannot be rejected simply for not including code, unless this is central to the contribution (e.g., for a new open-source benchmark).
        \item The instructions should contain the exact command and environment needed to run to reproduce the results. See the NeurIPS code and data submission guidelines (\url{https://nips.cc/public/guides/CodeSubmissionPolicy}) for more details.
        \item The authors should provide instructions on data access and preparation, including how to access the raw data, preprocessed data, intermediate data, and generated data, etc.
        \item The authors should provide scripts to reproduce all experimental results for the new proposed method and baselines. If only a subset of experiments are reproducible, they should state which ones are omitted from the script and why.
        \item At submission time, to preserve anonymity, the authors should release anonymized versions (if applicable).
        \item Providing as much information as possible in supplemental material (appended to the paper) is recommended, but including URLs to data and code is permitted.
    \end{itemize}

\item {\bf Experimental Setting/Details}
    \item[] Question: Does the paper specify all the training and test details (e.g., data splits, hyperparameters, how they were chosen, type of optimizer, etc.) necessary to understand the results?
    \item[] Answer: \answerYes{} % Replace by \answerYes{}, \answerNo{}, or \answerNA{}.
    \item[] Justification: We thoroughly elaborated on all the steps of our experiment in Appendix \ref{apd: exp}.
    \item[] Guidelines:
    \begin{itemize}
        \item The answer NA means that the paper does not include experiments.
        \item The experimental setting should be presented in the core of the paper to a level of detail that is necessary to appreciate the results and make sense of them.
        \item The full details can be provided either with the code, in appendix, or as supplemental material.
    \end{itemize}

\item {\bf Experiment Statistical Significance}
    \item[] Question: Does the paper report error bars suitably and correctly defined or other appropriate information about the statistical significance of the experiments?
    \item[] Answer: \answerYes{} % Replace by \answerYes{}, \answerNo{}, or \answerNA{}.
    \item[] Justification: Our experiment aims to compare the outcomes of our algorithm and the ID algorithm, which provide estimands for causal effects. We focus on scenarios with an infinite sample size, eliminating the need to report error rates. Further details can be found in Appendix \ref{apd: exp}.
    \item[] Guidelines:
    \begin{itemize}
        \item The answer NA means that the paper does not include experiments.
        \item The authors should answer "Yes" if the results are accompanied by error bars, confidence intervals, or statistical significance tests, at least for the experiments that support the main claims of the paper.
        \item The factors of variability that the error bars are capturing should be clearly stated (for example, train/test split, initialization, random drawing of some parameter, or overall run with given experimental conditions).
        \item The method for calculating the error bars should be explained (closed form formula, call to a library function, bootstrap, etc.)
        \item The assumptions made should be given (e.g., Normally distributed errors).
        \item It should be clear whether the error bar is the standard deviation or the standard error of the mean.
        \item It is OK to report 1-sigma error bars, but one should state it. The authors should preferably report a 2-sigma error bar than state that they have a 96\% CI, if the hypothesis of Normality of errors is not verified.
        \item For asymmetric distributions, the authors should be careful not to show in tables or figures symmetric error bars that would yield results that are out of range (e.g. negative error rates).
        \item If error bars are reported in tables or plots, The authors should explain in the text how they were calculated and reference the corresponding figures or tables in the text.
    \end{itemize}

\item {\bf Experiments Compute Resources}
    \item[] Question: For each experiment, does the paper provide sufficient information on the computer resources (type of compute workers, memory, time of execution) needed to reproduce the experiments?
    \item[] Answer: \answerYes{} % Replace by \answerYes{}, \answerNo{}, or \answerNA{}.
    \item[] Justification: In Appendix \ref{apd: exp}, we note that our numerical experiment does not need specific computational power.
    \item[] Guidelines:
    \begin{itemize}
        \item The answer NA means that the paper does not include experiments.
        \item The paper should indicate the type of compute workers CPU or GPU, internal cluster, or cloud provider, including relevant memory and storage.
        \item The paper should provide the amount of compute required for each of the individual experimental runs as well as estimate the total compute. 
        \item The paper should disclose whether the full research project required more compute than the experiments reported in the paper (e.g., preliminary or failed experiments that didn't make it into the paper). 
    \end{itemize}
    
\item {\bf Code Of Ethics}
    \item[] Question: Does the research conducted in the paper conform, in every respect, with the NeurIPS Code of Ethics \url{https://neurips.cc/public/EthicsGuidelines}?
    \item[] Answer: \answerYes{} % Replace by \answerYes{}, \answerNo{}, or \answerNA{}.
    \item[] Justification: The research conducted in the paper adheres fully to the NeurIPS Code of Ethics.
    \item[] Guidelines:
    \begin{itemize}
        \item The answer NA means that the authors have not reviewed the NeurIPS Code of Ethics.
        \item If the authors answer No, they should explain the special circumstances that require a deviation from the Code of Ethics.
        \item The authors should make sure to preserve anonymity (e.g., if there is a special consideration due to laws or regulations in their jurisdiction).
    \end{itemize}

\item {\bf Broader Impacts}
    \item[] Question: Does the paper discuss both potential positive societal impacts and negative societal impacts of the work performed?
    \item[] Answer: \answerNo{}{}{} % Replace by \answerYes{}, \answerNo{}, or \answerNA{}.
    \item[] Justification: We think there is no societal impact as our work primarily focuses on theoretical analysis of a problem in causal effect identification.
    \item[] Guidelines:
    \begin{itemize}
        \item The answer NA means that there is no societal impact of the work performed.
        \item If the authors answer NA or No, they should explain why their work has no societal impact or why the paper does not address societal impact.
        \item Examples of negative societal impacts include potential malicious or unintended uses (e.g., disinformation, generating fake profiles, surveillance), fairness considerations (e.g., deployment of technologies that could make decisions that unfairly impact specific groups), privacy considerations, and security considerations.
        \item The conference expects that many papers will be foundational research and not tied to particular applications, let alone deployments. However, if there is a direct path to any negative applications, the authors should point it out. For example, it is legitimate to point out that an improvement in the quality of generative models could be used to generate deepfakes for disinformation. On the other hand, it is not needed to point out that a generic algorithm for optimizing neural networks could enable people to train models that generate Deepfakes faster.
        \item The authors should consider possible harms that could arise when the technology is being used as intended and functioning correctly, harms that could arise when the technology is being used as intended but gives incorrect results, and harms following from (intentional or unintentional) misuse of the technology.
        \item If there are negative societal impacts, the authors could also discuss possible mitigation strategies (e.g., gated release of models, providing defenses in addition to attacks, mechanisms for monitoring misuse, mechanisms to monitor how a system learns from feedback over time, improving the efficiency and accessibility of ML).
    \end{itemize}
    
\item {\bf Safeguards}
    \item[] Question: Does the paper describe safeguards that have been put in place for responsible release of data or models that have a high risk for misuse (e.g., pretrained language models, image generators, or scraped datasets)?
    \item[] Answer: \answerNA{} % Replace by \answerYes{}, \answerNo{}, or \answerNA{}.
    \item[] Justification: The focus of our paper is the theoretical understanding of a specific problem in the area of causal inference. Therefore, this question is not applicable.
    \item[] Guidelines:
    \begin{itemize}
        \item The answer NA means that the paper poses no such risks.
        \item Released models that have a high risk for misuse or dual-use should be released with necessary safeguards to allow for controlled use of the model, for example by requiring that users adhere to usage guidelines or restrictions to access the model or implementing safety filters. 
        \item Datasets that have been scraped from the Internet could pose safety risks. The authors should describe how they avoided releasing unsafe images.
        \item We recognize that providing effective safeguards is challenging, and many papers do not require this, but we encourage authors to take this into account and make a best faith effort.
    \end{itemize}

\item {\bf Licenses for existing assets}
    \item[] Question: Are the creators or original owners of assets (e.g., code, data, models), used in the paper, properly credited and are the license and terms of use explicitly mentioned and properly respected?
    \item[] Answer: \answerNA{} % Replace by \answerYes{}, \answerNo{}, or \answerNA{}.
    \item[] Justification: The focus of our paper is the theoretical understanding of a specific problem in the area of causal inference. Therefore, this question is not applicable.
    \item[] Guidelines:
    \begin{itemize}
        \item The answer NA means that the paper does not use existing assets.
        \item The authors should cite the original paper that produced the code package or dataset.
        \item The authors should state which version of the asset is used and, if possible, include a URL.
        \item The name of the license (e.g., CC-BY 4.0) should be included for each asset.
        \item For scraped data from a particular source (e.g., website), the copyright and terms of service of that source should be provided.
        \item If assets are released, the license, copyright information, and terms of use in the package should be provided. For popular datasets, \url{paperswithcode.com/datasets} has curated licenses for some datasets. Their licensing guide can help determine the license of a dataset.
        \item For existing datasets that are re-packaged, both the original license and the license of the derived asset (if it has changed) should be provided.
        \item If this information is not available online, the authors are encouraged to reach out to the asset's creators.
    \end{itemize}

\item {\bf New Assets}
    \item[] Question: Are new assets introduced in the paper well documented and is the documentation provided alongside the assets?
    \item[] Answer: \answerNA{} % Replace by \answerYes{}, \answerNo{}, or \answerNA{}.
    \item[] Justification: The focus of our paper is the theoretical understanding of a specific problem in the area of causal inference. Therefore, this question is not applicable.
    \item[] Guidelines:
    \begin{itemize}
        \item The answer NA means that the paper does not release new assets.
        \item Researchers should communicate the details of the dataset/code/model as part of their submissions via structured templates. This includes details about training, license, limitations, etc. 
        \item The paper should discuss whether and how consent was obtained from people whose asset is used.
        \item At submission time, remember to anonymize your assets (if applicable). You can either create an anonymized URL or include an anonymized zip file.
    \end{itemize}

\item {\bf Crowdsourcing and Research with Human Subjects}
    \item[] Question: For crowdsourcing experiments and research with human subjects, does the paper include the full text of instructions given to participants and screenshots, if applicable, as well as details about compensation (if any)? 
    \item[] Answer: \answerNA{} % Replace by \answerYes{}, \answerNo{}, or \answerNA{}.
    \item[] Justification: The focus of our paper is the theoretical understanding of a specific problem in areas of causal inference. Therefore, this question is not applicable.
    \item[] Guidelines:
    \begin{itemize}
        \item The answer NA means that the paper does not involve crowdsourcing nor research with human subjects.
        \item Including this information in the supplemental material is fine, but if the main contribution of the paper involves human subjects, then as much detail as possible should be included in the main paper. 
        \item According to the NeurIPS Code of Ethics, workers involved in data collection, curation, or other labor should be paid at least the minimum wage in the country of the data collector. 
    \end{itemize}

\item {\bf Institutional Review Board (IRB) Approvals or Equivalent for Research with Human Subjects}
    \item[] Question: Does the paper describe potential risks incurred by study participants, whether such risks were disclosed to the subjects, and whether Institutional Review Board (IRB) approvals (or an equivalent approval/review based on the requirements of your country or institution) were obtained?
    \item[] Answer: \answerNA{} % Replace by \answerYes{}, \answerNo{}, or \answerNA{}.
    \item[] Justification: The focus of our paper is the theoretical understanding of a specific problem in the area of causal inference. Therefore, this question is not applicable.
    \item[] Guidelines:
    \begin{itemize}
        \item The answer NA means that the paper does not involve crowdsourcing nor research with human subjects.
        \item Depending on the country in which research is conducted, IRB approval (or equivalent) may be required for any human subjects research. If you obtained IRB approval, you should clearly state this in the paper. 
        \item We recognize that the procedures for this may vary significantly between institutions and locations, and we expect authors to adhere to the NeurIPS Code of Ethics and the guidelines for their institution. 
        \item For initial submissions, do not include any information that would break anonymity (if applicable), such as the institution conducting the review.
    \end{itemize}

\end{enumerate}

\end{document}